\theoremstyle{plain}
\newtheorem{theorem}{Theorem}[section]
\newtheorem{lemma}[theorem]{Lemma}
\newtheorem{corollary}[theorem]{Corollary}
\newtheorem{definition}[theorem]{Definition}
\newtheorem{remark}[theorem]{Remark}
\newtheorem{assumption}[theorem]{Assumption}
\newcommand{\cmark}{\ding{51}}%
\newcommand{\xmark}{\ding{55}}%
\definecolor{mydarkblue}{rgb}{0,0.08,0.45}
\definecolor{darkblue}{rgb}{0,0.08,0.45}
\definecolor{cred}{HTML}{D62728}
\definecolor{cblue}{HTML}{1F77B4}
\definecolor{cgreen}{HTML}{79AB76}
\definecolor{cgrey}{rgb}{0.6,0.6,0.6}
\newcommand{\omegaspace}{\mathcal{W}}
\newcommand{\ours}[0]{\textsc{Rollin}}
\renewcommand{\mathbf}{\boldsymbol}
\newcommand{\mb}{\mathbf}
\newcommand{\mc}{\mathcal}
\newcommand{\bb}{\mathbb}
\newcommand{\eps}{\varepsilon}
\newcommand{\wt}{\widetilde}
\newcommand{\norm}[2]{\left\| #1 \right\|_{#2}}
\newcommand{\abs}[1]{\left| #1 \right|}
\newcommand{\paren}[1]{\left( #1 \right)}
\newcommand{\brac}[1]{\left[ #1 \right]}
\newcommand{\Brac}[1]{\left\{ #1 \right\}}
\newcommand{\mr}{\mathrm}
\icmltitlerunning{Understanding the Complexity Gains of Single-Task RL with a Curriculum}
\begin{document}

\twocolumn[
\icmltitle{Understanding the Complexity Gains of Single-Task RL with a Curriculum}



\icmlsetsymbol{equal}{*}

\begin{icmlauthorlist}
\icmlauthor{Qiyang Li}{equal,yyy}
\icmlauthor{Yuexiang Zhai}{equal,yyy}
\icmlauthor{Yi Ma}{yyy}
\icmlauthor{Sergey Levine}{yyy}
\end{icmlauthorlist}

\icmlaffiliation{yyy}{UC Berkeley}

\icmlcorrespondingauthor{Qiyang Li}{qcli@berkeley.edu}
\icmlcorrespondingauthor{Yuxiang Zhai}{simonzhai@berkeley.edu}

\icmlkeywords{Machine Learning, ICML}

\vskip 0.3in
]



\printAffiliationsAndNotice{}  

\begin{abstract}
Reinforcement learning (RL) problems can be challenging without well-shaped rewards. Prior work on provably efficient RL methods generally proposes to address this issue with dedicated exploration strategies. However, another way to tackle this challenge is to reformulate it as a multi-task RL problem, where the task space contains not only the challenging task of interest but also easier tasks that implicitly function as a curriculum. Such a reformulation opens up the possibility of running existing multi-task RL methods as a more efficient alternative to solving a single challenging task from scratch. In this work, we provide a theoretical framework that reformulates a single-task RL problem as a multi-task RL problem defined by a curriculum. Under mild regularity conditions on the curriculum, we show that sequentially solving each task in the multi-task RL problem is more computationally efficient than solving the original single-task problem, without any explicit exploration bonuses or other exploration strategies. We also show that our theoretical insights can be translated into an effective practical learning algorithm that can accelerate curriculum learning on simulated robotic tasks.

\end{abstract}

\section{Introduction}
\label{sec:intro}
Reinforcement learning (RL) provides an appealing and simple way to formulate control and decision-making problems in terms of reward functions that specify what an agent should do, and then automatically train policies to learn how to do it. However, in practice the specification of the reward function requires great care: if the reward function is well-shaped, then learning can be fast and effective, but if rewards are delayed, sparse, or can only be achieved after extensive explorations, RL problems can be exceptionally difficult~\citep{kakade2002approximately,andrychowicz2017hindsight,agarwal2019reinforcement}. This challenge is often overcome with either reward shaping~\citep{ng1999policy,andrychowicz2017hindsight,andrychowicz2020learning,gupta2022unpacking} or dedicated exploration methods~\citep{tang2017exploration,stadie2015incentivizing,bellemare2016unifying,burda2018exploration}, but reward shaping can bias the solution away from optimal behavior, while even the best exploration methods, in general, may require covering the entire state space before discovering high-reward regions.

\begin{figure}
    \centering
    \begin{subfigure}[t]{0.49\textwidth}
        \includegraphics[width=\textwidth]{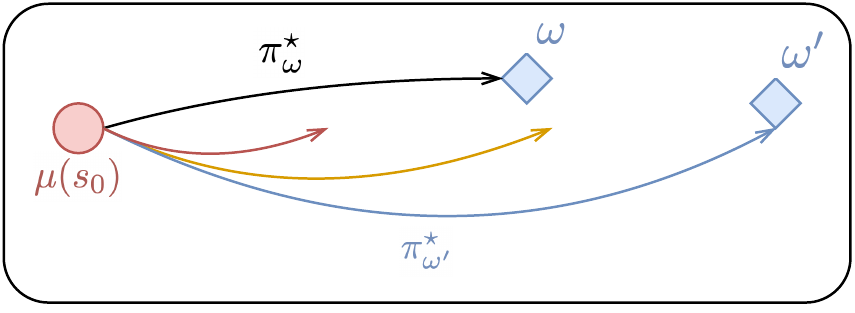}
        \caption{Learning $\pi^\star_{\omega^\prime}$ from scratch}
        \label{fig:rollin-illu1}
    \end{subfigure}
    \begin{subfigure}[t]{0.49\textwidth}
        \includegraphics[width=\textwidth]{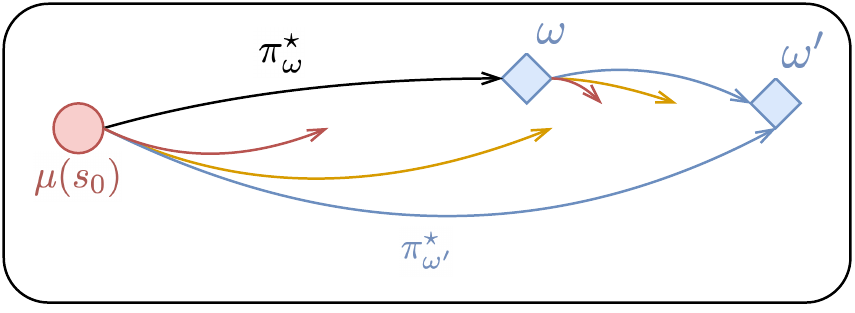}\caption{Learning $\pi^\star_{\omega^\prime}$ with \ours{}} 
        \label{fig:rollin-illu2}
    \end{subfigure}
    \caption{\textbf{Illustration of \ours{}}. {\color[HTML]{B85450} \textbf{The red circle}} represents the initial state distribution. \textbf{The dark curve} represents the optimal policy of the preceding task $\omega$. {\color[HTML]{6C8EBF} \textbf{The blue diamonds}} represent the optimal state distributions $d_\mu^{\pi_\omega^\star}, d_\mu^{\pi_{\omega^\prime}^\star}$ of the preceding task $\omega$ and the current $\omega^\prime$ respectively. \ours{} runs the optimal policy of the preceding task $\pi^\star_{\omega}$ to obtain a better initial state distribution for faster learning of the optimal policy of the current task $\pi^\star_{\omega^\prime}$.}
    \label{fig:illustration}
\end{figure}

On the other hand, a number of recent works have proposed multi-task learning methods in RL that involve learning contextual policies that simultaneously represent solutions to an entire space of tasks, such as policies that reach any potential goal~\citep{fu2018variational,eysenbach2020c,fujita2020distributed,zhai2022computational}, policies conditioned on language commands~\citep{nair2022learning}, or even policies conditioned on the parameters of parametric reward functions~\citep{kulkarni2016deep,siriwardhana2019vusfa,eysenbach2020rewriting,yu2020meta}. While such methods are often not motivated directly from the standpoint of handling challenging exploration scenarios, but rather directly aim to acquire policies that can perform all tasks in the task space, these multi-task formulations often present a more tractable learning problem than acquiring a solution to a single challenging task in the task space (e.g., the hardest goal, or the most complex language command). We pose the following question: 
\begin{center}
\em
When do we expect solving the reformulated multi-task RL problem with task-conditioned policies to be more efficient than solving the original single-task problem directly?    
\end{center}

In this work, we study this question by analyzing the complexity of learning an optimal policy in the stochastic policy gradient (SPG) setting~\citep{agarwal2021theory,mei2020global,ding2021beyond} with a curriculum (learning a list of tasks in sequence). As pointed out by~\citet{ding2021beyond}, for learning an optimal policy, SPG requires a polynomial sample complexity if the initialization is near-optimal.\footnote{See~\cref{def:near-optimal-init} of \cref{sec:results-main}.} In general, there is no guarantee that the initial policy is near-optimal, which could potentially lead to an unbounded density ratio and thus poor sample complexity bound. While there have been a lot of prior works that utilize exploration bonuses to address the sample complexity~\citep{azar2017minimax,jin2018q,agarwal2020pc,zhang2020almost}, we take a different approach without the need for exploration bonuses by making use of a curriculum of tasks where adjacent tasks in the curriculum are close in terms of their optimal state visitation distributions. Our algorithm, \ours{}, works by (1) using the optimal policy parameters of the previous task as an initialization for the current task, and (2) constructing the initial state distribution as a mixture of the optimal state visitation distribution of the previous task and the original initial state distribution of interest. In a nutshell, \ours{} mixes in the distribution of the optimal policy of the preceding task to the initial distribution to make sure that such distribution is close to the optimal state visitation distribution of the current task, reducing the density mismatch ratio and yielding better sample complexity.

We illustrate the intuition of \ours{} in \cref{fig:illustration}. We adopt the contextual MDP formulation, where we assume each MDP, $\mc M_\omega$, is uniquely defined by a context $\omega$ in the context space $\mc W$, and we are given a curriculum $\{\omega_k\}_{k=0}^K$, with the last MDP, $\mc M_{\omega_K}$, being the MDP of interest. Our main results require a Lipschitz continuity assumption on the context-dependent reward function $r_\omega$ and a fixed transition dynamics model across all contexts. We show that learning $\pi^\star_{K}$ by recursively rolling in with a near-optimal policy for $\omega_k$ to construct the initial distribution $\mu_{k+1}$ for the next context $\omega_{k+1}$, can have a smaller minimum required sample complexity compared with learning $\pi^\star_{\omega_K}$ from scratch directly. In particular, we show that when an appropriate sequence of contexts is selected, we can reduce the minimum required iteration and sample complexity bounds of entropy-regularized softmax policy gradient (with an inexact stochastic estimation of the gradient) from an original exponential dependency on the state space size, as suggested by~\citet{ding2021beyond}, to a polynomial dependency. We also prescribe a practical implementation of \ours{}.

Our contributions are as follows. We introduce \ours{}, a simple algorithm that facilitates single-task learning by recasting it as a multi-task problem. Theoretically, we show that under the entropy-regularized softmax policy gradient (PG) setting, our algorithm reduces the exponential complexity bound to a polynomial dependency on $S$.  Empirically, we verify our theory on a tabular MDP and provide a practical implementation of \ours{} that can accelerate curriculum learning in the tabular environment and a range of simulated robotic tasks.

\section{Related Work}
\label{sec:related}
\paragraph{Convergence of policy gradient methods.}
Theoretical analysis of policy gradient methods has a long history~\citep{williams1992simple,sutton1999policy,konda1999actor,kakade2002approximately,peters2008natural}. Motivated by the recent empirical success~\citep{schulman2015trust, schulman2017proximal} in policy gradient (PG) methods, the theory community has extensively studied the convergence of PG in various settings~\citep{fazel2018global,agarwal2021theory, agarwal2020pc, bhandari2019global, mei2020global, zhang2020global, agarwal2020pc, zhang2020sample, li2021softmax, cen2021fast, ding2021beyond, yuan2022general, moskovitz2022towards}. \citet{agarwal2021theory} established the asymptotic global convergence of policy gradient under different policy parameterizations. We extend the result of entropy regularized PG with stochastic gradient~\citep{ding2021beyond} to the contextual MDP setting. In particular, our contextual MDP setting reduces the exponential state space dependency w.r.t. the iteration number and per iteration sample complexity suggested by~\citet{ding2021beyond} to a polynomial dependency. While there exists convergence analyses on other variants of PG that produce an iteration number that does not suffer from an exponential state space dependency~\citep{agarwal2021theory,mei2020global}, they assume access to the {\em exact} gradient during each update of PG. In contrast, we assume a stochastic estimation of the gradient.

\paragraph{Exploration bonuses.} A number of prior works have shown that one can achieve a polynomial complexity of learning an optimal policy with effective exploration methods~\citep{azar2017minimax,jin2018q,du2019provably,misra2020kinematic,agarwal2020pc,zhang2020almost}. The computational efficiency suggested by our work is different from some of the aforementioned prior methods that rely on adding exploration bonuses~\citep{azar2017minimax,jin2018q,agarwal2020pc,zhang2020almost}, as we assume access to a ``good'' curriculum which ensures the optimal policy of the next context is not too different from the optimal policy of the current context while eschewing exploration bonuses entirely.

\paragraph{Contextual MDPs.}
Contextual MDPs (or MDPs with side information) have been studied extensively in the theoretical RL literature~\citep{abbasi2014online,hallak2015contextual,dann2019policy,jiang2017contextual,modi2018markov,sun2019model,dann2019policy,modi2020sample}. We analyze the iteration complexity and sample complexity of (stochastic) policy gradient methods, which is distinct from these prior works that mainly focus on regret bounds~\citep{abbasi2014online,hallak2015contextual,dann2019policy} and PAC bounds~\citep{jiang2017contextual,modi2018markov,sun2019model,dann2019policy,modi2020sample}. Several works assumed linear transition kernel and reward model (or generalized linear model~\citep{abbasi2014online}) with respect to the context~\citep{abbasi2014online,modi2018markov,dann2019policy,modi2020sample,belogolovsky2021inverse}. These assumptions share similarity to our assumptions --- we have a weaker Lipschitz continuity assumption with respect to the context space (since linear implies Lipschitz) on the reward function and a stronger shared transition kernel assumption.

\paragraph{Curriculum learning in reinforcement learning.}
Curriculum learning is a powerful idea that has been widely used in RL~\citep{florensa2017reverse, kim2018screenernet, omidshafiei2019learning, ivanovic2019barc, akkaya2019solving, portelas2020teacher, bassich2020curriculum, fang2020adaptive, klink2020self, dennis2020emergent, parker2022evolving, liu2022revolver} (also see \citep{narvekar2020curriculum} for a detailed survey). Although curricula formed by well-designed reward functions~\citep{vinyals2019grandmaster, OpenAI_dota, berner2019dota, ye2020towards, zhai2022computational} are usually sufficient given enough domain knowledge, tackling problems with limited domain knowledge requires a more general approach where a suitable curriculum is automatically formed from a task space. In the goal-conditioned reinforcement learning literature, this corresponds to automatic goal proposal mechanisms~\citep{florensa2018automatic, warde2018unsupervised, sukhbaatar2018learning, ren2019exploration, ecoffet2019go, hartikainen2019dynamical, pitis2020maximum, zhang2020automatic, openai2021asymmetric, zhang2021c}. The practical instantiation of this work is also similar to~\cite{bassich2020curriculum, liu2022revolver}, where a curriculum is adopted for learning a progression of a set of tasks. 
\citet{klink2022boosted} also analyzed the theoretical benefits of curriculum learning in RL, but is primarily concerned with the problem of representations for value functions when utilizing approximate value iteration methods for curriculum learning. This is accomplished by using boosting to increase the effective capacity of the value function estimator. In contrast, our method does not make any prescription in regard to the representation, but is aimed at studying sample complexity and exploration, showing that "rolling in" with the previous policy and then collecting data with the new policy leads to good sample complexity. In principle, we could even imagine in future work combining the representation analysis in \citet{klink2022boosted} with the discussion of state coverage in our analysis.

\paragraph{Learning conditional policies in multi-task RL.}
Multi-task RL~\citep{tanaka2003multitask} approaches usually learn a task-conditioned policy that is shared across different tasks~\citep{rusu2015policy, rajeswaran2016epopt, andreas2017modular, finn2017model, D'Eramo2020Sharing, yu2020gradient, ghosh2021learning, kalashnikov2021mt, agarwal2022provable}. Compared to learning each task independently, joint training enjoys the sample efficiency benefits from sharing the learned experience across different tasks as long as the policies generalize well across tasks. To encourage generalization, it is often desirable to condition policies on low dimensional feature representations that are shared across different tasks instead (e.g., using variational auto-encoders~\citep{nair2018visual, pong2019skew, nair2020contextual} or variational information bottleneck~\citep{goyal2018transfer, Goyal2020The, mendonca2021discovering}). The idea of learning contextual policies has also been discussed in classical adaptive control literature~\citep{sastry1990adaptive, tao2003adaptive, landau2011adaptive, aastrom2013adaptive, goodwin2014adaptive}.  Different from these prior works which have been mostly focusing on learning policies that can generalize across different tasks, our work focuses on how the near-optimal policy from a learned task could be used to help the learning of a similar task.

\section{Preliminaries}
\label{sec:preliminary}
We consider the contextual MDP setting, where a contextual MDP, $\mc{M}_{\omegaspace} = (\omegaspace, \mc S, \mc A, \mb P, r_\omega, \gamma, \rho)$, consists of a context space $\omegaspace$, a state space $\mc S$, an action space $\mc A$, a transition dynamic function $\mb P: \mc S \times \mc A  \rightarrow \mc P(\mc S)$ (where $\mc P(X)$ denotes the set of all probability distributions over set $X$), a context-conditioned reward function $r: \omegaspace \times \mc S \times \mc A \rightarrow [0,1]$, a discount factor $\gamma \in (0, 1]$, and an initial state distribution of interest $\rho$. For convenience, we use $S = |\mc S|, A = |\mc A|$ to denote the number of states and actions. While some contextual MDP formulations~\citep{hallak2015contextual} have context-conditioned transition dynamics and reward functions, we consider the setting where only the reward function can change across contexts. We denote $r_\omega$ as the reward function conditioned on a fixed $\omega \in \omegaspace$ and $\mc M_{\omega} = (\mc S, \mc A, \mb P, r_\omega, \gamma, \rho)$ as the MDP induced by such fixed reward function. We use $\pi(a|s): \mc S \rightarrow \mc P(\mc A)$ to denote a policy and we adopt the softmax parameterization:
${\pi_\theta(a|s) = \frac{\exp\brac{\theta(s,a)}}{\sum_{a^\prime}\exp\brac{\theta(s,a^\prime)}}}$, where ${\theta:\mc S\times \mc A\mapsto \bb R}$. We use $d^\pi_\rho(s) := (1-\gamma) \sum_{t=0}^{\infty} \gamma^t \bb P^\pi(s_t = s|s_0\sim\rho)$ to denote the discounted state visitation distribution and ${V^\pi_\omega :=  \bb E\left[\sum_{t=0}^{\infty} \gamma^t r_\omega(s_t, a_t) \right] + \alpha \bb H(\rho,\pi)}$ to denote the entropy regularized discounted return on $\mc M_{\omega}$, where $\bb H(\rho,\pi) := {\bb E}_{s_0\sim \rho,a_h\sim\pi(\cdot|s_h)}\brac{\sum_{h=0}^\infty-\gamma^h\log\pi(a_h|s_h)}$ is the discounted entropy term. We use $\pi^\star_\omega := \arg\max_\pi V^\pi_\omega$ to denote an optimal policy that maximizes the discounted return under $\mc M_\omega$.  We assume all the contextual reward functions are bounded within $[0,1]$: $r_\omega(s,a) \in [0,1],\;\forall \omega \in \mc W, \forall (s,a)\in \mc S\times \mc A$. Similarly to previous analysis~\cite{agarwal2021theory,mei2020global,ding2021beyond}, we assume the initial distribution $\rho$ for PG or stochastic PG satisfies $\rho(s)>0,\forall s\in \mc S$. Supposing we are given a curriculum $\{\omega_k\}_{k=0}^K$, where the last context $\omega_K$ defines $\mc M_{\omega_K}$ the MDP of interest, our goal is to show that sequentially solving $\pi_{\omega_k}^\star$ for $k=0,1,\dots,K$, enjoys better computational complexity and sample complexity than learning $\pi^\star_{\omega_K}$ problem $\mc M_{\omega_K}$ from scratch.

\subsection{Assumptions}
As we will show in \cref{sec:results}, if there is a curriculum $\{\omega_k\}_{k=0}^K$ where the optimal policies $\pi^\star_{\omega_{k}},\pi^\star_{\omega_{k+1}}$ with respect to two consecutive contexts $\omega_k,\omega_{k+1}$ are close enough to each other in terms of their state visitation distributions, using an $\eps$-optimal policy of $\omega_k$ as an initialization allows us to directly start from the near-optimal regime of $\omega_{k+1}$, hence only requiring polynomial complexity to learn $\pi^\star_{\omega_{k+1}}$. We describe our curriculum assumptions as follows.

\begin{assumption}[Lipschitz reward in the context space]
\label{assump:LipReward}
The reward function is Lipschitz continuous with respect to the context: $\max_{s,a}\abs{ r_\omega(s,a)-r_{\omega^\prime}(s,a)}\leq L_r\norm{\omega-\omega^\prime}{2}$, $\forall \omega,\omega^\prime \in \omegaspace$.
\end{assumption}
Intuitively, \cref{assump:LipReward} defines the similarity between two tasks via a Lipschitz continuity in the context space. Similar Lipschitz assumptions also appears in~\cite{abbasi2014online,modi2018markov,dann2019policy,modi2020sample,belogolovsky2021inverse}.

\begin{assumption}[Similarity of Two Contexts]
\label{assump:CloseEnoughContext}
The curriculum $\{\omega_k\}_{k=0}^K$ satisfies $\max_{0\leq k\leq K-1}\norm{\omega_{k+1}-\omega_k}{2}\leq O\paren{S^{-2}}$, and we have access to a near-optimal initialization $\theta_0^{(0)}$ for learning $\pi^\star_{\omega_0}$ (formally defined in \cref{sec:results-main}).
\end{assumption}
At first glance, the near-optimal initialization $\theta_0^{(0)}$ for the first task $\omega_0$ in the curriculum (suggested by~\cref{assump:CloseEnoughContext}) may seem like a strong assumption, but in many practical settings, it could be quite easy to obtain. For example, if the tasks correspond to reaching different goals, the curriculum might start with a goal right on top of the starting state, and therefore trivially easy to learn. As another example, if the task is a locomotion task and $\omega$ contexts correspond to target velocities, $\omega_0$ might correspond to a velocity of zero, corresponding to standing still.

\cref{assump:LipReward} and \cref{assump:CloseEnoughContext} together quantify the maximum difference between two consecutive tasks $\mc M_{\omega_{k-1}},\mc M_{\omega_k}$, in terms of the maximum difference between their reward function, which plays a crucial role in reducing the exponential complexity to a polynomial one. We will briefly discuss intuition in the next section.

\subsection{Prior Analysis on PG with stochastic gradient}
\label{subsec:spg-prior-results}
\citet{ding2021beyond} proposed a two-phased PG convergence analysis framework with a stochastic gradient. In particular, the author demonstrates that with high probability, stochastic PG with arbitrary initialization achieves an $\eps$-optimal policy can be achieved with iteration numbers of $T_1, T_2$ and per iteration sample complexities of $B_1, B_2$ in two separate phases where $T_1 = \wt{\Omega}\paren{S^{2S^3}},T_2 = \wt{\Omega}\paren{S^{3/2}}$ ($\wt{\Omega}(\cdot)$ suppresses the $\log S$ and terms that do not contain $S$) and $B_1 = \wt{\Omega}\paren{S^{2S^3}},B_2 = \wt{\Omega}\paren{S^5}$, respectively, and PG enters phase 2 only when the updating policy becomes $\eps_0$-optimal, where $\eps_0$ is a term depending on $S$ (formally defined by \eqref{eq:complexity_constants} in \cref{subsec:appendix-main-result-spg}). For completeness, we restate
the main theorem of \citet{ding2021beyond} in \cref{thm:spg-complexity}, provide the details of such dependencies on $S$ in \cref{coro:spg_complexity}, and describe the two-phase procedure in \cref{algo:SPG}. The main implication of the two-phase results is that, when applying SPG to learn an optimal policy from an arbitrary initialization, we suffer from exponential complexities, unless the initialization is $\eps_0$-optimal. We will now discuss how \cref{assump:LipReward} and \cref{assump:CloseEnoughContext} enable an $\eps_0$-optimal initialization for every $\omega_k$, reducing the exponential complexities to  polynomial complexities.
\section{Theoretical Analysis} 
\label{sec:results}
In this section, we first introduce \ours{}, a simple algorithm that accelerates policy learning under the contextual MDP setup by bootstrapping new context learning with a better initial distribution (\cref{algo:PGSPG_context}). Then, we provide the total complexity analysis of applying \ours{} to stochastic PG for achieving an $\eps$-optimal policy. Finally, we validate our theoretical results on a tabular MDP.
\subsection{\ours{}}
\label{sec:results-rollin}
The theoretical version of \ours{} is provided in \cref{algo:PGSPG_context}. The intuition behind \ours{} is that when two consecutive contexts in the curriculum $\{\omega_k\}_{k=1}^K$ are close, their optimal parameters  $\theta^\star_{\omega_{k-1}},\theta^\star_{\omega_k}$ should be close to each other. Let $\theta_t^{(k)}$ denote the parameters at the $t^\text{th}$ iteration of stochastic PG for learning $\theta^\star_{\omega_k}$. If we initialize  $\theta_0^{(k)}$ as the optimal parameter of the previous context $\theta_{\omega_{k-1}}^\star$ (line 5 in \cref{algo:PGSPG_context}), and set the initial distribution $\mu_{k}$ as a mixture of the optimal state visitation distribution of the previous context $d^{\pi_{\omega_{k-1}}^{\star}}_{\mu_{k-1}}$ and the original distribution of interest $\rho$ with $\beta \in (0, 1)$ (line 6 in \cref{algo:PGSPG_context}),
\begin{equation}
    \label{eq:mu_update}
    \mu_{k} =
    \beta d^{\pi_{\omega_{k-1}}^{\star}}_{\mu_{k-1}} + (1-\beta)\rho,
\end{equation}
then we can show that stochastic PG enjoys a faster convergence rate. This is because setting $\theta_0^{(k)} = \theta^\star_{k-1}$ ensures a near-optimal initialization for learning $\omega_k$, and setting $\mu_{k}$ as the mixture further improves the rate of convergence by decreasing the density mismatch ratio $\Big\|d^{\pi^\star_{\omega_k}}_{\mu_k}/\mu_k\Big\|_{\infty}$
(a term with that influences the convergence rate).
\subsection{Main Results}
\label{sec:results-main}
We now discuss how to use a sequence of contexts to learn the target context $\omega_K$ with provable efficiency given a near-optimal  policy $\pi_{\theta^{(0)}_0}$ of the initial context $\omega_0$, without incurring an exponential dependency on $S$ (as mentioned in \cref{subsec:spg-prior-results}). Our polynomial complexity comes as a result of enforcing an $\eps_0$-optimal initialization ($\eps_0$ is the same as \cref{subsec:spg-prior-results} and \eqref{eq:complexity_constants}) for running stochastic PG (line 6 of \cref{algo:PGSPG_context}). Hence, stochastic PG directly enters phase 2, with a polynomial dependency on $S$.

Our main results consist of two parts. We first show that when two consecutive contexts $\omega_{k-1},\omega_k$ are close enough to each other, using \ours{} for learning $\theta^\star_{k}$ with initialization $\theta_0^{(k)} = \theta^\star_{\omega_{k-1}}$ and applying an initial distribution $\mu_k=\beta d^{\pi_{\omega_{k-1}}^{\star}}_{\mu_{k-1}} + (1-\beta)\rho$ improves the convergence rate. Specifically, the iteration number and complexity for learning $\theta^\star_{\omega_k}$ from $\theta^\star_{\omega_{k-1}}$ are stated as follows: 
\begin{algorithm}[t]
  \caption{Provably Efficient Learning via \ours{}}\label{algo:PGSPG_context}
  \begin{algorithmic}[1]
    \State \textbf{Input:} $\rho$, $\{\omega_k\}_{k=0}^K$, $\mc M_{\omegaspace}$, $\beta\in(0,1)$, $\theta^{(0)}_{0}$.
    \State Initialize $\mu_0 = \rho$.
    \State Run stochastic PG (\cref{algo:SPG}) with initialization $\theta_0^{(0)}$, $\mu_{0},\mc M_{\omega_{0}}$ and obtain $\theta^\star_{\omega_0}$.
    \For{$k=1,\dots,K$}
    \State {\color{purple} Set $\theta^{(k)}_1 = \theta^\star_{\omega_{k-1}}$.\Comment{$\pi_{\theta^\star_{\omega_{k-1}}}$ is optimal for $\omega_{k-1}$.}}
    \State {\color{purple} Set $\mu_{k} = \beta d^{\pi_{\omega_{k-1}}^{\star}}_{\mu_{k-1}} + (1-\beta)\rho$. }
    \State Run stochastic PG (\cref{algo:SPG}) with initialization $\theta_0^{(k)}$, $\mu_{k},\mc M_{\omega_{k}}$ and obtain $\theta^\star_{\omega_k}$.
    \EndFor
    \State \textbf{Output:} $\theta_{\omega_K}^\star$
  \end{algorithmic}
\end{algorithm}

\begin{table*}[t]
\centering
\begin{minipage}[b]{2\columnwidth}
\centering
\small
\begin{tabular}{cc|ccccccc}
\toprule
 & \multicolumn{1}{c}{Entropy Coeff.} & Baseline & \ours{} \\ \midrule
Hard & $\alpha = 0.01$ & $0.500\pm 0.000$ & {\color{purple}$0.562 \pm 0.000$} \\
& $\alpha=0.001$& $0.856\pm0.006$ & {\color{purple} $1.000\pm0.000$}  \\ \midrule
Easy & $\alpha=0.01$ & $0.944 \pm 0.003$ & {\color{purple}$1.000\pm0.000$} \\
& $\alpha = 0.001$ &  {\color{purple}$1.000\pm0.000$}& {\color{purple}$1.000\pm0.000$} \\
\bottomrule
\end{tabular}
\hfill
\begin{tabular}{cc|ccccccc}
\toprule
& \multicolumn{1}{c}{Entropy Coeff.} &  Baseline & \ours{}\\ \midrule
Hard & $\alpha = 0.01$ & $0.000\pm 0.000$ & $0.000\pm 0.000$\\
& $\alpha=0.001$& $0.424\pm0.023$ & {\color{purple} $1.067\pm0.000$}  \\ \midrule
Easy & $\alpha=0.01$ & $4.093 \pm 0.224$ & {\color{purple}$7.374\pm0.216$} \\
& $\alpha = 0.001$ &  $10.536\pm0.002$ & {\color{purple}$10.620\pm0.002$} \\
\bottomrule
\end{tabular}
\caption{\footnotesize \textbf{Curriculum progress $\kappa$ (left) and final return $V^\pi$ (right) on the four-room navigation with stochastic PG.} Both metrics are reported at the $50,000^{\text{th}}$ gradient step. We use a mixing ratio of $\beta=0.75$. Across two entropy coefficients and two reward settings (easy and hard), stochastic PG with \ours{} consistently achieves better curriculum progress and final return. The standard error is computed over 10 random seeds.}
\label{table:numerical}
\end{minipage}
\hfill
\end{table*}
\begin{theorem}[Complexity of Learning the Next Context]
\label{thm:softmax_spg_convergence_context-Main}
Consider the context-based stochastic softmax policy gradient (line 7 of \cref{algo:PGSPG_context}), and suppose \cref{assump:LipReward} and \cref{assump:CloseEnoughContext} hold, then the iteration number of obtaining an $\eps$-optimal policy for $\omega_{k}$ from $\theta^\star_{\omega_{k-1}}$ is $\wt{\Omega}\paren{S}$ and the per iteration sample complexity is $\wt{\Omega}\paren{\frac{L_r}{\alpha(1-\beta)}S^3}$.
\end{theorem}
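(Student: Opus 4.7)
My plan is to reduce the problem to Phase 2 of the stochastic PG analysis of Ding et al. (as restated in \cref{subsec:appendix-main-result-spg}), by showing that the initialization prescribed by \ours{} is already $\eps_0$-optimal for $\omega_k$ and that the new initial distribution $\mu_k$ keeps the density mismatch ratio polynomially controlled. Concretely, the proof will proceed in three main steps.

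\textbf{Step 1: Warm-start is $\eps_0$-optimal for $\omega_k$.} Since the transition dynamics are shared across contexts and only the reward changes, for any policy $\pi$ I can write
\[
V^\pi_{\omega_k}(s) - V^\pi_{\omega_{k-1}}(s) = \E_{\pi,\mb P}\Brac{\sum_{t=0}^{\infty}\gamma^t\bigl(r_{\omega_k}(s_t,a_t)-r_{\omega_{k-1}}(s_t,a_t)\bigr)\,\Big|\, s_0=s},
\]
so \cref{assump:LipReward} combined with $\|\omega_k-\omega_{k-1}\|_2\leq O(S^{-2})$ from \cref{assump:CloseEnoughContext} yields $|V^\pi_{\omega_k}-V^\pi_{\omega_{k-1}}|\leq L_r O(S^{-2})/(1-\gamma)$ uniformly in $\pi$. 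Applying this to both $\pi_{\theta^\star_{\omega_{k-1}}}$ and $\pi^\star_{\omega_k}$ and using optimality of $\pi_{\theta^\star_{\omega_{k-1}}}$ under $\omega_{k-1}$, I obtain $V^\star_{\omega_k}-V^{\pi_{\theta^\star_{\omega_{k-1}}}}_{\omega_k}\leq 2L_r O(S^{-2})/(1-\gamma)$. Choosing the hidden constant in \cref{assump:CloseEnoughContext} small enough (inversely proportional to the polynomial in $S$ appearing in $\eps_0$ defined in \eqref{eq:complexity_constants}) shows the warm-start satisfies the $\eps_0$-optimality entry condition of Phase 2.

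\textbf{Step 2: Controlling the density mismatch ratio.} By construction $\mu_k\geq (1-\beta)\rho$ pointwise, so $\bigl\|d^{\pi^\star_{\omega_k}}_{\mu_k}/\mu_k\bigr\|_{\infty}$ is bounded by $\frac{1}{1-\beta}\bigl\|d^{\pi^\star_{\omega_k}}_{\mu_k}/\rho\bigr\|_{\infty}$. I will use the mixture to replace the dependence on $1/\rho_{\min}$ by a much smaller quantity: since $\mu_k\geq \beta d^{\pi^\star_{\omega_{k-1}}}_{\mu_{k-1}}$ and, by Step 1, $\pi^\star_{\omega_{k-1}}$ is near-optimal for $\omega_k$, a standard performance-difference argument bounds $d^{\pi^\star_{\omega_k}}_{\mu_k}/d^{\pi^\star_{\omega_{k-1}}}_{\mu_{k-1}}$ by a constant independent of $S$. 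This yields $\bigl\|d^{\pi^\star_{\omega_k}}_{\mu_k}/\mu_k\bigr\|_{\infty} = O(1/(1-\beta))$, which is the quantity driving the $(1-\beta)^{-1}$ factor in the sample complexity.

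\textbf{Step 3: Invoke Phase 2 complexities.} Having verified (i) the $\eps_0$-optimal initialization and (ii) the polynomially-bounded density ratio, I directly apply the restated Phase 2 bounds from \cref{thm:spg-complexity}/\cref{coro:spg_complexity}: the Phase 2 iteration complexity $T_2=\wt\Omega(S)$ (after dropping the $L_r$-independent polylog factors) and per-iteration sample complexity $B_2=\wt\Omega(S^3)$ are obtained by plugging the bounds of Step 2 into the Phase 2 formulas, together with the $L_r/\alpha$ scaling that arises from translating the reward Lipschitz constant into a smoothness bound on the entropy-regularized objective.

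\textbf{Main obstacle.} The delicate part is Step 1 coupled with the matching constants in Step 2: I need the reward-perturbation bound $L_r O(S^{-2})/(1-\gamma)$ to fit inside the precise $\eps_0$ threshold required by Phase 2, which itself is a polynomial in $S$ and $1/(1-\gamma)$. Getting the polynomial exponents consistent (so that $O(S^{-2})$ in \cref{assump:CloseEnoughContext} is in fact the correct rate and cannot be relaxed) is what dictates the $S^{-2}$ appearing in the assumption; once the accounting is done carefully, the remainder of the argument is a direct reduction to the existing single-task Phase 2 analysis.
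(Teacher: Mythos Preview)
Your high-level three-step plan matches the paper's, and your Step~1 is actually cleaner than theirs: the paper proves $V^{\pi^\star_{\omega_k}}_{\omega_k}-V^{\pi^\star_{\omega_{k-1}}}_{\omega_k}\le \frac{2L_r\|\omega_k-\omega_{k-1}\|_2}{(1-\gamma)^2}$ via the soft sub-optimality identity $V^{\pi^\star}-V^\pi=\frac{\alpha}{1-\gamma}\sum_s d^\pi_\rho(s)D_{\mathrm{KL}}(\pi\|\pi^\star)$ together with a log-softmax Lipschitz bound and a $Q$-perturbation bound (\cref{lemma:Soft_subopt}, \cref{lemma:RewardToOptPolicy}), whereas your direct ``same policy, two rewards'' telescoping gives the same conclusion with only a $(1-\gamma)^{-1}$ factor. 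Either route suffices to land inside the $\eps_0=O(S^{-2})$ window under \cref{assump:CloseEnoughContext}.

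Step~2, however, has a genuine gap. You assert that a ``standard performance-difference argument'' controls the \emph{pointwise ratio} $d^{\pi^\star_{\omega_k}}_{\mu_k}(s)/d^{\pi^\star_{\omega_{k-1}}}_{\mu_{k-1}}(s)$ by a constant. Performance-difference lemmas control value gaps or (at best) $\ell_1$ differences of occupancies; they say nothing about ratios at states where $d^{\pi^\star_{\omega_{k-1}}}_{\mu_{k-1}}(s)$ is tiny, and those states are exactly where $\|d/\mu\|_\infty$ can blow up. The paper sidesteps this by never bounding a density ratio directly: it uses the decomposition
\[
\Bigl\|\tfrac{d_k}{\mu_k}\Bigr\|_\infty \;\le\; \frac{\|d_k-d_{k-1}\|_1}{\min_s \mu_k(s)} \;+\; \frac{1}{\beta},
\]
and then controls the \emph{additive} gap $\|d_k-d_{k-1}\|_1$ recursively through the bound $|\pi^\star_{\omega}(a|s)-\pi^\star_{\omega'}(a|s)|\le \frac{L_r}{\alpha(1-\gamma)}\|\omega-\omega'\|_2$ (\cref{lemma:RewardToOptPolicy}), together with $\min_s\mu_k(s)\ge (1-\beta)/S$. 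This is precisely where the $L_r/\alpha$ factor in the theorem statement originates; it does \emph{not} come from ``translating the reward Lipschitz constant into a smoothness bound on the objective'' as you suggest in Step~3. As written, your Step~2 would yield $\|d^{\pi^\star_{\omega_k}}_{\mu_k}/\mu_k\|_\infty=O(1/(1-\beta))$ with no $L_r,\alpha,S$ dependence, which neither matches the stated sample complexity nor is justified. To repair the argument you need an $\ell_1$ occupancy-perturbation bound in terms of $\|\omega_k-\omega_{k-1}\|_2$ (via the Lipschitzness of the softmax optimal policy in the context), rather than a ratio bound.
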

\cref{thm:softmax_spg_convergence_context-Main} shows that when $\omega_{k-1},\omega_k$ are close enough, \ours{} reduces the minimum required iteration and sample complexity from an exponential dependency of $\wt{\Omega}(S^{2S^3})$ to an iteration number of $\wt{\Omega}(S)$ and per iteration sample complexity of $\wt{\Omega}(S^3)$. It is worth noting that the theorem above only addresses the iteration number and sample complexity for learning $\theta^\star_{\omega_k}$ from $\theta^\star_{\omega_{k-1}}$. \cref{thm:total_complexity-Main} provides the total complexity for learning $\theta_{\omega_K}^{\star}$ from $\theta_{0}^{(0)}$ via recursively applying the results in \cref{thm:softmax_spg_convergence_context-Main}. Before introducing \cref{thm:total_complexity-Main}, we first provide a criterion for the desired initialization of $\theta_0^{(0)}$.
\begin{definition}[Near-optimal Initialization]
    \label{def:near-optimal-init}
    We say $\theta_0$ is a near-optimal initialization for learning $\theta_{\omega}^\star$ if $\theta_0$ satisfies $V^{\pi^\star_{\omega}}_{\omega}(\rho) - V^{\pi_{\theta_0}}_{\omega}(\rho)<\eps_0$ and $\norm{\rho-d_\rho^{\pi^\star_{\omega_0}}}{1}\leq\norm{\omega_1-\omega_0}{2}$.
\end{definition}
Note that in the above definition, $\pi^\star_{\omega_{k}}$ represents the optimal policy of $\omega_k$, and $V^\pi_{\omega_k}$ represents value function of context $\omega_k$ under policy $\pi$. We now introduce the results for the overall complexity:
\begin{theorem}[Main Results: Total Complexity of \ours{}]
\label{thm:total_complexity-Main}
Suppose \cref{assump:LipReward} and \cref{assump:CloseEnoughContext} hold, and $\theta^{(0)}_0$ is a near-optimal initialization, then the total number of iterations of learning $\pi^\star_{\omega_K}$ using \cref{algo:PGSPG_context} is $\Omega(KS)$ and the per iteration sample complexity is $\wt{\Omega}\paren{S^{3}}$, with high probability. 
\end{theorem}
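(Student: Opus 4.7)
The plan is to prove \cref{thm:total_complexity-Main} by induction on $k$, invoking \cref{thm:softmax_spg_convergence_context-Main} once at each of the $K+1$ stages of \cref{algo:PGSPG_context}. The base case handles the initial context $\omega_0$: by hypothesis $\theta_0^{(0)}$ is a near-optimal initialization per \cref{def:near-optimal-init}, so $V^{\pi^\star_{\omega_0}}_{\omega_0}(\rho) - V^{\pi_{\theta_0^{(0)}}}_{\omega_0}(\rho) < \eps_0$, and $\mu_0=\rho$ already controls the density-ratio. Hence stochastic PG enters phase 2 immediately, yielding an $\eps$-optimal $\pi_{\theta^\star_{\omega_0}}$ in $\wt{\Omega}(S)$ iterations with $\wt{\Omega}(S^3)$ samples per iteration, matching the per-stage bound stated in \cref{thm:softmax_spg_convergence_context-Main}. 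The inductive step then needs to certify that once $\pi_{\theta^\star_{\omega_{k-1}}}$ has been obtained, the re-initialization $\theta_0^{(k)} := \theta^\star_{\omega_{k-1}}$ together with the mixture $\mu_k = \beta d^{\pi^\star_{\omega_{k-1}}}_{\mu_{k-1}} + (1-\beta)\rho$ is a near-optimal initialization for $\omega_k$, so that \cref{thm:softmax_spg_convergence_context-Main} again applies and contributes another $\wt{\Omega}(S)$ iterations and $\wt{\Omega}(S^3)$ per-iteration samples.

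The near-optimality of the handed-down initialization decomposes into a value-gap check and a density-ratio check. For the value gap, \cref{assump:LipReward} combined with a standard simulation-style inequality gives $|V^\pi_{\omega_k}(s) - V^\pi_{\omega_{k-1}}(s)| \le L_r \|\omega_k-\omega_{k-1}\|_2/(1-\gamma)$ uniformly in $\pi$ and $s$, so
\[
V^{\pi^\star_{\omega_k}}_{\omega_k}(\rho) - V^{\pi^\star_{\omega_{k-1}}}_{\omega_k}(\rho) \;\le\; \frac{2 L_r \|\omega_k-\omega_{k-1}\|_2}{1-\gamma} \;=\; O\!\paren{L_r S^{-2}}
\]
by \cref{assump:CloseEnoughContext}, which falls below the $\eps_0$ threshold given by \eqref{eq:complexity_constants}. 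For the density-ratio check, the mixture construction guarantees $\mu_k(s) \ge (1-\beta)\rho(s)$ and $\mu_k(s) \ge \beta\, d^{\pi^\star_{\omega_{k-1}}}_{\mu_{k-1}}(s)$, and the Lipschitz assumption together with the small context step lets us relate $d^{\pi^\star_{\omega_k}}_{\mu_k}$ to $d^{\pi^\star_{\omega_{k-1}}}_{\mu_{k-1}}$ via a performance-difference-style argument, so $\bigl\|d^{\pi^\star_{\omega_k}}_{\mu_k}/\mu_k\bigr\|_\infty$ stays bounded by an $O\!\paren{1/(1-\beta)}$ factor — exactly what produces the $\tfrac{L_r}{\alpha(1-\beta)}S^3$ sample-per-iteration bound of \cref{thm:softmax_spg_convergence_context-Main}. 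Summing the per-stage iteration bounds over $k=0,1,\dots,K$ gives the advertised $\Omega(KS)$ total iterations, and the per-iteration sample complexity is preserved as the maximum over stages, namely $\wt{\Omega}(S^3)$.

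The main obstacle is the clean propagation of error through the induction: \cref{algo:PGSPG_context} returns an $\eps$-optimal policy rather than the exact $\theta^\star_{\omega_{k-1}}$, so both the value-gap inequality and the density-ratio inequality used in the inductive step must be stated robustly, with the sub-optimality of the handed-down $\pi_{\theta^\star_{\omega_{k-1}}}$ substituted in place of the ideal $\pi^\star_{\omega_{k-1}}$. The fix is to choose the per-stage tolerance $\eps$ smaller than the slack that the Lipschitz reward change leaves inside the $\eps_0$ basin, so that the $K$ accumulated errors still keep every subsequent initialization within phase 2. With this choice in place, the high-probability statement follows by taking a union bound over the $K+1$ independent invocations of stochastic PG, rescaling each per-stage failure probability by $1/(K+1)$, which only inflates the iteration and sample bounds by logarithmic factors already absorbed in the $\wt{\Omega}$ notation.
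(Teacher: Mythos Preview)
Your proposal is correct and follows the same inductive scaffold as the paper: establish near-optimality at stage $k=0$ from the hypothesis, then at each subsequent stage invoke \cref{thm:softmax_spg_convergence_context-Main} with the handed-down initialization, picking the per-stage tolerance $\eps\le\eps_0/2$ so that the Lipschitz value-gap plus the residual sub-optimality still lands inside the $\eps_0$ basin, and finally sum the $K+1$ per-stage iteration counts.

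One substantive difference is how you bound the value gap $V^{\pi^\star_{\omega_k}}_{\omega_k}(\rho)-V^{\pi^\star_{\omega_{k-1}}}_{\omega_k}(\rho)$. You use the elementary observation that for a \emph{fixed} policy the entropy term is context-independent, so $|V^\pi_{\omega_k}-V^\pi_{\omega_{k-1}}|\le L_r\|\omega_k-\omega_{k-1}\|_2/(1-\gamma)$, and then chain through optimality to get $2L_r\|\omega_k-\omega_{k-1}\|_2/(1-\gamma)$. The paper instead routes through the soft sub-optimality identity (\cref{lemma:Soft_subopt}), rewriting the gap as an averaged KL divergence $D_{\mathrm{KL}}(\pi^\star_{\omega_{k-1}}\|\pi^\star_{\omega_k})$, and then bounds the KL via the Lipschitzness of the log-softmax of the optimal $Q$-functions (\cref{lemma:RewardToOptPolicy}), arriving at $2L_r\|\omega_k-\omega_{k-1}\|_2/(1-\gamma)^2$. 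Your argument is shorter and gives a constant-factor tighter bound; the paper's route has the side benefit of producing the policy-level Lipschitz estimate $|\pi^\star_{\omega_k}(a|s)-\pi^\star_{\omega_{k-1}}(a|s)|\le L_r\|\omega_k-\omega_{k-1}\|_2/(\alpha(1-\gamma))$, which is what actually drives the density-mismatch bound (\cref{lemma:mismatch_ratio_bound}) that you describe only qualitatively. Since you are invoking \cref{thm:softmax_spg_convergence_context-Main} as a black box for the $\wt\Omega(S^3)$ per-iteration sample complexity, you do not need to re-derive that density bound here, so the omission is harmless. Your explicit union bound over the $K+1$ stochastic-PG calls is a detail the paper leaves implicit.
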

A direct implication of \cref{thm:total_complexity-Main} is that, with a curriculum $\{\omega_k\}_{k=0}^K$ satisfying \cref{assump:LipReward} and \cref{assump:CloseEnoughContext}, one can reduce the daunting exponential dependency on $S$ caused by poor initialization to a polynomial dependency. Admittedly the state space $S$ itself is still large in practice, but reducing the state space $S$ itself requires extra assumptions on $\mc S$, which is beyond the scope of this work. We now provide a sketch proof of \cref{thm:softmax_spg_convergence_context-Main} and \cref{thm:total_complexity-Main} in the next subsection and leave all the details 
to \cref{subsec:appendix-learning-next-context} and  \cref{subsec:appendix-learning-final-context} respectively.
\subsection{Proof Sketch}
\paragraph{Sketch proof of \cref{thm:softmax_spg_convergence_context-Main}.}
The key insight for proving \cref{thm:softmax_spg_convergence_context-Main} is to show that in MDP $\mc M_{\omega_k}$, the value function with respect to $\pi^\star_{\omega_k},\pi^\star_{\omega_{k-1}}$ can be bounded by the $\ell^2$-norm between $\omega_k$ and $\omega_{k-1}$. In particular, we prove such a relation in \cref{lemma:value_diff_adj_ctx}:
\begin{equation}
    \label{eq:V_diff_k_k-1}
    V^{\pi^\star_{\omega_{k}}}_{\omega_{k}}(\rho) - V^{\pi^\star_{\omega_{k-1}}}_{\omega_{k}}(\rho)\leq \frac{2L_r\norm{\omega_k-\omega_{k-1}}{2}}{(1-\gamma)^2}.   \vspace{-0.15cm}
\end{equation}
By setting $\theta_0^{(k)}=\theta^\star_{\omega_{k-1}}$, Equation~\eqref{eq:V_diff_k_k-1} directly implies $V^{\pi^\star_{\omega_{k}}}_{\omega_{k}}(\rho) - V^{\theta_0^{(k)}}(\rho)\leq \frac{2L_r\norm{\omega_k-\omega_{k-1}}{2}}{(1-\gamma)^2}$. As suggested by~\citet{ding2021beyond} stochastic PG can directly start from stage 2 with polynomial complexity of $T_2 = \wt{\Omega}(S), B_2 = \wt{\Omega}(S^5)$, if $V^{\pi^\star_{\omega_{k}}}_{\omega_{k}}(\rho) - V^{\theta_0^{(k)}}(\rho)\leq\eps_0$, where $\eps_0$ (formally defined in Equation~\eqref{eq:complexity_constants} in \cref{subsec:appendix-main-result-spg}) is a constant satisfying $\eps_0=O(S^{-2})$. Hence, by enforcing two consecutive contexts to be close enough $\norm{\omega_k-\omega_{k-1}}{2}\leq O(S^{-2})$, we can directly start from a near-optimal initialization with polynomial complexity with respect to $S$. It is worth highlighting that the per iteration sample complexity $B_2$ shown by~\citet{ding2021beyond} scales as $\wt{\Omega}(S^5)$, while our result in \cref{thm:softmax_spg_convergence_context-Main} only requires a smaller sample complexity of $\wt{\Omega}(S^3)$. Such an improvement in the sample complexity comes from line 6 of \ours{}: $\mu_{k}=\beta d^{\pi_{\omega_{k-1}}^{\star}}_{\mu_{k-1}} + (1-\beta)\rho$. Intuitively, setting $\mu_k$ as $\beta d^{\pi_{\omega_{k-1}}^{\star}}_{\mu_{k-1}} + (1-\beta)\rho$ allows us to provide an upper bound on the density mismatch ratio:
\begin{equation}
    \label{eq:density_mismatch_ratio}
    \norm{d^{\pi^\star_{\mu_k}}_{\mu_k}/\mu_k}{\infty}\leq \wt{ O}\paren{\frac{L_r}{\alpha(1-\beta)}\Delta^k_{\omega}S},
\end{equation}
where $\Delta_\omega^k=\max_{1\leq i\leq k}\norm{\omega_i-\omega_{i-1}}{2}$. Since the sample complexity $B_2$ (provided in \cref{coro:spg_complexity}) contains one multiplier of $\norm{d^{\pi^\star_{\mu_k}}_{\mu_k}/\mu_k}{\infty}$, setting $\Delta_\omega^k=O(S^{-2})$ immediately reduces the complexity by an order of $S^2$. The proof of the upper bound of the density mismatch ratio (Equation~\eqref{eq:density_mismatch_ratio}) is provided in \cref{lemma:mismatch_ratio_bound}.

\paragraph{Sketch proof of \cref{thm:total_complexity-Main}.} We obtain \cref{thm:total_complexity-Main} by recursively applying \cref{thm:softmax_spg_convergence_context-Main}. More precisely, we use induction to show that, if we initialize the parameters of the policy as $\theta_0^{(k)}=\theta^\star_{\omega_{k-1}}$, when $t=\wt{\Omega}(S)$, $\forall k\in[K]$, we have $V^{\pi^\star_{\omega_{k}}}_{\omega_{k}}(\rho) - V^{\pi_{\theta_t^{(k-1)}}}_{\omega_{k}}(\rho)<\eps_0$. Hence, for any context $\omega_{k},k\in [K]$, initializing  $\theta_0^{(k)}=\theta_t^{(k-1)}$ from learning $\pi^\star_{\omega_{k-1}}$ via stochastic PG after $t=\Omega(S)$ iteration, $\theta_0^{(k)}$ will directly start from the efficient phase 2 with polynomial complexity. Hence, the total iteration number for learning the $\theta_K^\star$ is $\Omega(KS)$, and the per iteration sample complexity remains the same as \cref{thm:softmax_spg_convergence_context-Main} ($\wt{\Omega}\paren{S^{3}}$).

\begin{figure}[t]
    \centering
    \includegraphics[width=0.23\textwidth]{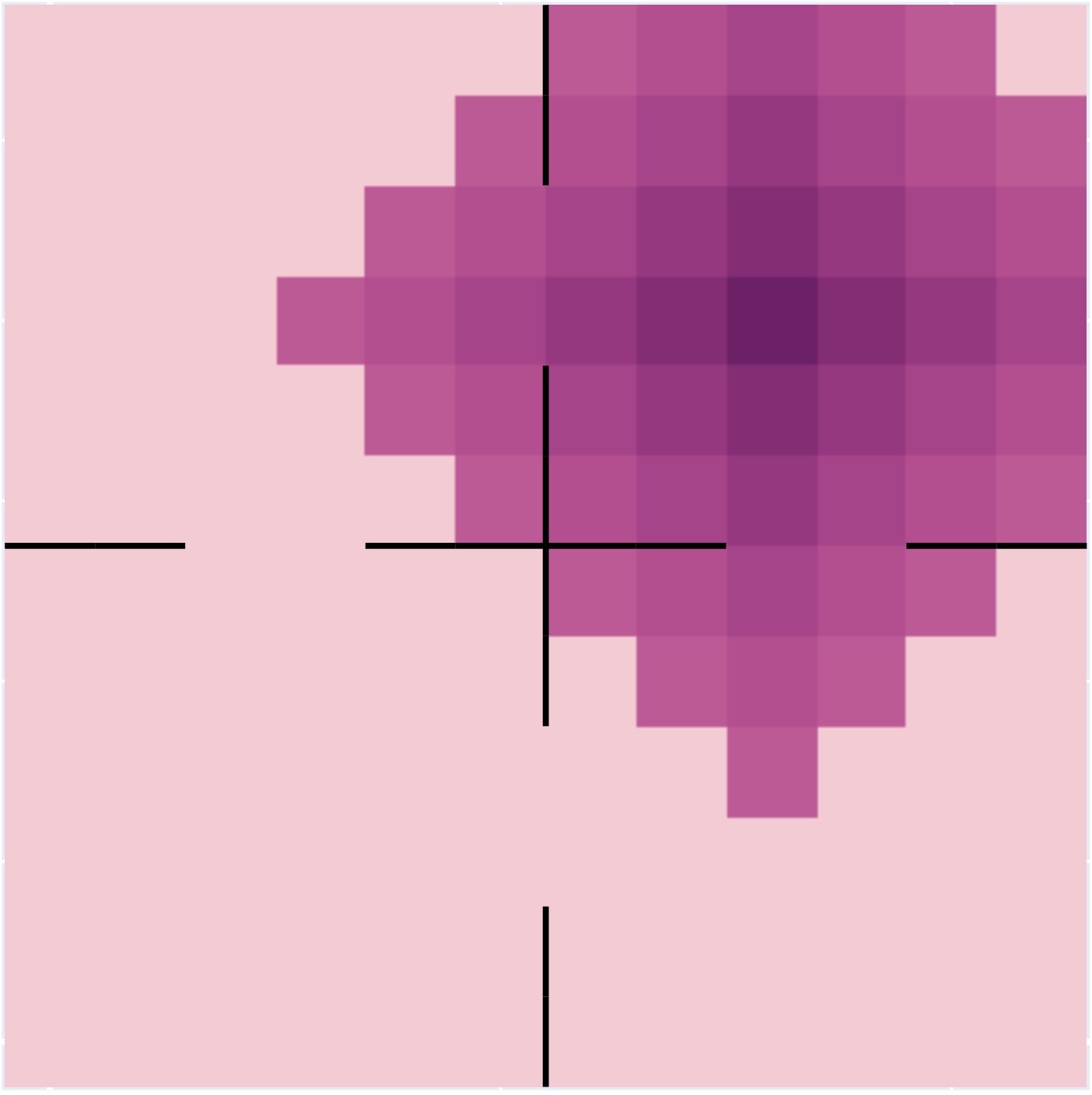}\hfill
    \includegraphics[width=0.23\textwidth]{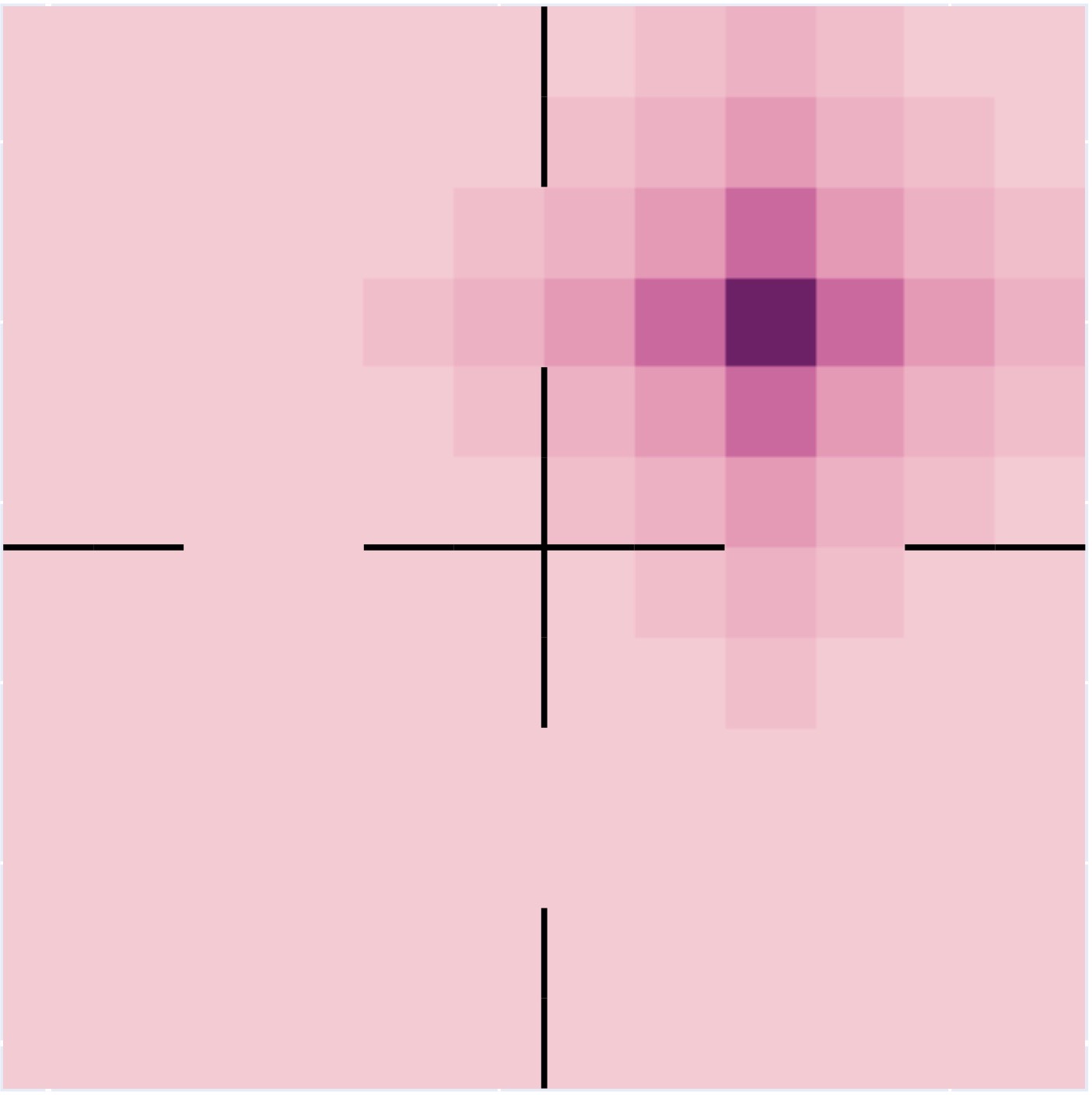}
    \caption{\footnotesize \textbf{Visualization of the two reward functions for the four-room navigation environment.} Left: easy; Right: hard. Darker color indicates a higher reward when the agent reaches the state (with the goal state the darkest). The agent receives $0$ reward when it is too far from the goal location (5 steps for the easy reward function and 4 steps for the hard reward function). The black line indicates walls in the environment where the agent cannot pass through. The reward function visualization in this figure is for the last context. The reward function for other contexts (other goals) is the same but with the reward function shifted according to the goal state.}
    \label{fig:numerical-mdp} \vspace{-6mm}
\end{figure}

\subsection{Numerical Experiments: Four-room Navigation}
To provide empirical support for the validity of our theoretical results, we follow the exact setting that is used in our theoretical analysis and implement \ours{} with stochastic PG and the softmax policy parameterization on a tabular contextual MDP. It is worth highlighting that this is distinct from the practical implementation of \ours{} in \cref{sec:practical} -- the focus there is to apply the idea of \ours{} to design a sample-efficient deep RL algorithm for a more practical setting (e.g., continuous state and action space), whereas the focus here is solely to validate our theory in the theoretical setting. The contextual MDP is a grid world consisting of $12 \times 12$ grid cells where each cell corresponds to a state in the MDP. The agent always starts from the bottom-left corner of the grid world and navigates around to collect rewards. In particular, the agent receives a positive reward when it is close to a pre-defined goal cell/state which is defined by the context. We use a curriculum consisting of $17$ contexts/goals in sequence, $\{\omega_k\}_{k=0}^{16}$, which form a path from the initial state of the agent to a far away goal location, and we switch from the current context to the next one in the curriculum whenever the current goal location is reached with more than $50\%$ probability. We experiment with two different reward functions (visualized in Figure \ref{fig:numerical-mdp}). Table \ref{table:numerical} summarizes the results of our experiments. \ours{} is able to consistently improve upon the vanilla stochastic PG baseline (across two different entropy coefficients and two reward functions with varying difficulties) in terms of the curriculum progress and the final return. This verifies that \ours{} can indeed improve the sample complexity of stochastic PG in a curriculum learning setting, validating our theory. See more implementation details of the numerical experiments in Appendix \ref{appendix:numerical}.

\section{Practical Implementation of \ours{}}
\label{sec:practical}
We have shown empirical evidence through numerical experiments that \ours{} can lead to sample complexity reduction under our theoretical setting (tabular MDP with discrete action space and state space, softmax parameterization and entropy regularized objective). Now, we introduce a practical implementation of \ours{} using Soft-Actor-Critic~\citep{haarnoja2018soft} such that \ours{} can be applied to more practical problems with continuous action space and state space. SAC can be seen as a variant of entropy-regularized stochastic PG with the addition of the critics to reduce gradient variance. Recall that in the theoretical analysis, we learn a separate policy for each context that can start from the near-optimal state distribution of the previous context to achieve a good return under the current context. However, in practice, we usually would want to have a policy that can directly start from the initial distribution $\rho$ to obtain a good return for the final context $\omega_K$. In order to learn such a policy, we propose to have two context-conditioned RL agents training in parallel, where the first agent $\pi_{\mathrm{main}}$ is the main agent that eventually will learn to achieve a good return from $\rho$, and the second agent $\pi_{\mathrm{exp}}$ is an exploration agent that learns to achieve a good return under the current context from the near-optimal state density of the previous context. Another purpose of the exploration agent (as the name suggests) is to provide a better exploration experience for the main agent to learn the current context better. This is made convenient by using an off-policy RL agent where the main agent can learn from the data collected by the exploration agent. 

Specifically, for each episode, there is a probability of $\beta$ where we run the main agent conditioned on the previous context for a random number of steps until we switch to the exploration agent to collect experience for the current context until the episode ends. Otherwise, we directly run the main agent for the entire episode. Both agents are trained to maximize the return under the current context. Whenever the average return of the last 10 episodes exceeds a performance threshold $R$, we immediately switch to the next context and re-initialize the exploration agent and its replay buffer. A high-level description is available in \cref{algo:impl} (a more detailed version in \cref{algo:detailed-impl}).

\begin{algorithm}[ht]
  \caption{Practical Implementation of \ours{}}\label{algo:impl}
  \begin{algorithmic}[1]
    \State \textbf{Input:} $\{\omega_k\}_{k=0}^K$: input curriculum, $R$: near-optimal threshold, $\beta$: roll-in ratio, $H$: horizon, $\gamma$: discount factor. 
    \State Initialize $\mathcal{D} \leftarrow \emptyset,\mathcal{D}_{\mathrm{exp}} \leftarrow \emptyset,k \leftarrow 0$, and two SAC agents $\pi_\mathrm{main}$ and $\pi_\mathrm{exp}$.
    \For {each episode}
        \If {average return of the last 10 episodes under  context $\omega_k$ is greater than $R$ }
            \State $k \leftarrow k + 1$, $\mathcal{D}_{\mathrm{exp}} \leftarrow \emptyset$, and re-initialize the exploration agent $\pi_{\mathrm{exp}}$
        \EndIf
        \If {$k > 0$ and with probability of $\beta$}
            \State $h \sim \mathrm{Geom}(1 - \gamma)$ (truncated at $H$)
            \State run $\pi_{\mathrm{main}}(a|s, \omega_{k-1})$ from the initial state for $h$ steps and switch to $\pi_{\mathrm{exp}}(a|s, \omega_k)$ until the episode ends to obtain trajectory $\tau_{0:H}= \{s_0, a_0, r_0, s_1, a_1, \cdots, s_H\}$.
            \State record $\tau_{0:H}$ in $\mc{D}$, and $\tau_{h:H}$ in $\mc{D}_{\mathrm{exp}}$.
        \Else
            \State run $\pi_{\mathrm{main}}(a|s, \omega_{k})$ to obtain trajectory $\tau_{0:H}$ and record $\tau_{0:H}$ in $\mc{D}$.
        \EndIf
        \State at each environment step in the episode, update $\pi_{\mathrm{main}}(\cdot | \cdot, \omega_k)$ using $\mathcal{D}$ and $\pi_{\mathrm{exp}}(\cdot | \cdot, \omega_k)$ using $\mathcal{D}_{\mathrm{exp}}$.
    \EndFor
    \State \textbf{Output:} $\pi_{\mathrm{main}}$
  \end{algorithmic}
\end{algorithm}
\section{Experimental Results}

While the focus of our work is on developing a provably efficient approach to curriculum learning, we also conduct an experimental evaluation of our practical implementation of \ours{} with soft actor-critic (SAC)~\citep{haarnoja2018soft} as the RL algorithm on several continuous control tasks including a goal reaching task and four non-goal reaching tasks with oracle curricula.

\begin{table*}[t]
\centering
\begin{minipage}[b]{2\columnwidth}
\centering
\small
\begin{tabular}{cc|cccc}
\toprule
\multicolumn{1}{c}{}&\multicolumn{1}{c}{}&
\multicolumn{2}{c}{w/o Geometric Sampling}&
\multicolumn{2}{c}{w/ Geometric Sampling}\\
\cmidrule(lr){3-4}\cmidrule(lr){5-6} \multicolumn{1}{c}{Setting} & \multicolumn{1}{c}{Method} & $\Delta=1/24$& $\Delta=1/12$& $\Delta=1/24$& $\Delta=1/12$\\ \midrule
Vanilla & Baseline & $0.40\pm 0.02$ & $0.36\pm0.00$ & $0.82\pm0.08$ & $0.38\pm0.03$\\
& \ours{} & {\color{purple} $0.49\pm0.04$} & {\color{purple} $0.44\pm0.01$} & {\color{purple} $0.92\pm0.02$}& {\color{purple} $0.55\pm0.04$} \\ \midrule
Relabeling & Baseline & $0.89 \pm 0.03$ & {${0.66\pm0.04}$} & {$0.76\pm0.02$} & $0.72\pm0.03$\\
& \ours{} & {\color{purple}$0.91\pm0.03$}& {\color{purple}$0.74\pm0.01$} &{\color{purple}${0.78\pm0.01}$}& {\color{purple}$0.73\pm0.00$} \\
\midrule
Go-Explore & Baseline & $0.37 \pm 0.02$ & {${0.38\pm0.01}$} & {$0.82\pm0.07$}  & {$0.42\pm0.03$}\\
Noise = 0.1& \ours{} &{\color{purple}$0.52\pm0.07$}& {${0.38\pm0.01}$} & {\color{purple}$0.95\pm0.02$} & {\color{purple}$0.43\pm0.02$}\\
\bottomrule
\end{tabular}
\caption{Learning progress $\kappa$ at 3 million environment steps with varying curriculum step size $\Delta$ of different settings of goal reaching in \texttt{antmaze-umaze}. We pick $\beta = 0.1$ for all experiments using \ours{}, the results of using other $\beta$s, $\Delta$s, and exploration noises can be found in \cref{table:vanilla-antumaze}, \cref{table:relabeling-antumaze}, and \cref{table:go-explore-antumaze} in \cref{appendix:learning-curves-goal-reaching-oracle}. The standard error is computed over 8 random seeds.}

\label{table:summary-antumaze}
\end{minipage}
\hfill
\end{table*}

\subsection{Goal Reaching with an Oracle Curriculum}
\label{subsec:goal-reaching-oracle}
We adopt the \texttt{antmaze-umaze} environment~\citep{fu2020d4rl} for evaluating the performance of \ours{} in goal-reaching tasks. We use a hand-crafted path of contexts, where each specifies a goal location (as shown in \cref{appendix:oracle-details}, \cref{fig:antmaze-path-cshape}). We consider a path of contexts $\omega(\kappa)$ parameterized by $\kappa \in [0, 1]$ where $\omega(0) = \omega_0$ and $\omega(1) = \omega_K$, and step through the contexts along the path with a fixed step size $\Delta$. See \cref{appendix:oracle-details} for more implementation details.

We combine \ours{} with a variety of prior methods, and we evaluate the following conditions: (1) standard goal reaching; (2) goal reaching with goal relabeling~\citep{andrychowicz2017hindsight}; (3) goal reaching with an exploration phase that is similar to Go-Explore~\citep{ecoffet2019go}. For goal relabeling, we adopt a similar relabeling technique as~\citet{pitis2020maximum}, where each mini-batch contains $1/3$ original transitions, $1/3$ transitions with future state relabeling, and $1/3$ transitions with next state relabeling. We implemented the Go-Explore method by adding an additional standard Gaussian exploration noise in the action to the agent for learning the next goal $\omega(k+1)$, once it reaches the current goal $\omega(k)$. We empirically observed that sampling the replay buffer from a geometric distribution with $p=10^{-5}$ (more recent transitions are sampled more frequently) improves the overall performance. Hence, in all future experiments, we compare the performance of \ours{} with classic uniform sampling and the new geometric sampling. We compare the learning speed of \ours{} with parameter $\beta = 0.1$ on three different step sizes $\Delta = \frac{1}{24},\frac{1}{18},\frac{1}{12}$ in \cref{table:summary-antumaze}. 

\textbf{Main comparisons.} We first provide an overview experiments that compares \ours{} with a fixed $\beta=0.1$ on different step sizes $\Delta$ in different settings. In each case, we compare the prior method (vanilla, relabeled, or Go-Explore) with and without the addition of \ours{}. As shown in \cref{table:summary-antumaze}, \ours{} improves the largest value of $\kappa$ reached by the agent
in most presented settings (except Go-Explore with $\Delta=1/12$). This result suggests that \ours{} facilitates goal-conditioned RL with a curriculum, as we only update the learning progress $\kappa$ to $\kappa+\Delta$ when the return of the current policy reaches a certain threshold $R$ (See detailed update of $\kappa$ in \cref{algo:impl}). Note that $\beta=0.1$ does not always produce the best result, we will provide more results comparing different $\beta$s in different settings later in this section, and we leave all the learning curves and detailed tables to \cref{appendix:learning-curves-goal-reaching-oracle}. Note that we do not include the results of directly learning the last context in the \texttt{antmaze-umaze} environment because the agent cannot reach the goal without the aid of a curriculum, which is corroborated by~\cite{pitis2020maximum}. 

\begin{table*}[t]
\centering
\begin{minipage}[b]{\linewidth}
\centering
\small
\begin{tabular}{cc|cccccc}
\toprule
\multicolumn{1}{c}{}&\multicolumn{1}{c}{}&
\multicolumn{3}{c}{Step = $0.5 \times 10^6$}&
\multicolumn{3}{c}{Step = $1.0 \times 10^6$}\\
\cmidrule(lr){3-5}\cmidrule(lr){6-8} \multicolumn{1}{c}{\texttt{Env.}} & \multicolumn{1}{c}{Method} & $\kappa$& $x$-velocity& return & $\kappa$& $x$-velocity& return \\ \midrule
\texttt{walker} & Scratch & n/a & $3.07\pm 0.26$ & $3373.1\pm 170.5$ & n/a & $3.30\pm0.36$ & $4212.3\pm151.4$\\
& Baseline & $0.83\pm 0.03$ & $3.09\pm 0.31$ & $3450.1\pm 307.4$ & $0.92\pm0.03$ & $3.69\pm0.27$ & $4032.3\pm224.3$\\
& \ours{} & {$0.79\pm0.04$} & {${2.83\pm0.31}$} & {${3350.4\pm184.6}$}& {\color{purple} $0.94\pm0.03$} & {$3.62\pm0.26$} & {${4128.8\pm159.6}$}\\ \midrule
\texttt{hopper} & Scratch & n/a & $2.50\pm0.13$ & $2943.6\pm 80.3$ & n/a & $2.55\pm0.12$ & $3073.2\pm137.7$\\
& Baseline & $0.85 \pm 0.02$ & $2.42\pm0.18$ & $3192.5\pm 80.4$ & $0.88\pm0.01$ & $2.58\pm0.16$ & $3386.2\pm124.7$\\
& \ours{} & {$0.82\pm0.03$}& {$2.26\pm0.22$} &{\color{purple}${3148.6\pm160.7}$}& {\color{purple}$0.89\pm0.00$} & {\color{purple} $2.65\pm0.15$} & {\color{purple}${3421.9\pm109.8}$}\\
\midrule
\texttt{humanoid} & Scratch & n/a & $0.24\pm 0.05$  & $2417.1\pm188.2$ & n/a & $0.37\pm 0.05$ & $2763.8
\pm96.5$\\
& Baseline & $0.32 \pm 0.05$ & $0.26\pm 0.05$  & $2910.1
\pm262.9$ & {$0.67\pm0.03$} & $0.39\pm 0.05$ & $3017.2
\pm169.0$\\
& \ours{} &{\color{purple}$0.36\pm0.04$}& {\color{purple}$0.32\pm 0.07$} & {\color{purple}$2939.7\pm392.0$} & {\color{purple}$0.69\pm0.06$} & {\color{purple}$0.46\pm 0.09$} &{\color{purple}$3173.6\pm238.3$}\\
\midrule
\texttt{ant} & Scratch & n/a & $3.60\pm0.49$ & $2910.7\pm 354.3$ & n/a & $4.55\pm0.36$ & $4277.9\pm 120.0$ \\
& Baseline & $0.72 \pm 0.02$ & $3.38\pm0.43$ & $2976.2\pm 252.4$ & {$1.00\pm0.00$} & $4.29\pm0.51$ & $4248.5\pm 88.6$ \\
& \ours{} &{\color{purple}$0.82\pm0.06$}& {\color{purple} $3.85\pm0.41$} & {\color{purple} $3593.1\pm237.8$} & {$1.00\pm0.00$} & {\color{purple} $4.66\pm0.30$} & {\color{purple} $4473.0\pm102.2$}\\
\bottomrule
\end{tabular}
\caption{Learning progress $\kappa$, average $x$-velocity, and average return at 0.5 and 1.0 million environment steps in \texttt{walker}, \texttt{hopper}, \texttt{humanoid}, and \texttt{ant}. The average $x$-velocity and return are estimated using the last 50k environment steps. ``Scratch'' shows the results of directly training the agent with the last context $\omega(1)$. ``Baseline'' indicates $\beta=0$, where we provide the curriculum $\omega(\kappa)$ to the agent without using \ours{}. We pick $\beta = 0.1$ for all experiments using \ours{}, the results of using other $\beta$s can be found in \cref{table:meta-learning-progress}, \cref{table:meta-learning-speed}, and \cref{table:meta-learning-return} in \cref{appendix:learning-curves-meta-learning}. The standard error is computed over 8 random seeds.}

\label{table:summary-meta-learning}
\end{minipage}
\hfill
\end{table*}
\subsection{Non-Goal Reaching Tasks}
For the non-goal tasks, we consider the tasks of gradually increasing the $x$-velocity of a locomotion agent in the following environments: \texttt{walker2d}, \texttt{hopper}, \texttt{humanoid}, and \texttt{ant} in OpenAI gym~\citep{Brockman2016OpenAI}. More specifically,  we set the desired speed range to be $[\lambda\kappa,\lambda(\kappa+0.1))$, where $\lambda$ is a parameter depending on the physics of the agent in different environments and we choose a fixed contextual space with ten discrete contexts: $\kappa\in\{0.1,0.2,\dots,1\}$. The agent receives a higher reward when the $x$-velocity is within the desired speed range and a lower reward otherwise. In each environment, we increase the task difficulty with later curriculum steps (larger $\kappa$), by increasing the near-optimal threshold $R(\kappa)$. Detailed parameters of the desired speed range $\lambda$, near optimal-threshold $R(\kappa)$, and the reward functions are in \cref{appendix:meta-learning-details}.

\textbf{Main comparisons.} We first compare \ours{} with a fixed $\beta=0.1$ at different environment steps: $0.5 \times 10^6$, $1 \times 10^6$. In each case, we compare the learning progress $\kappa$, averaged $x$-velocity, and averaged return, with and without the addition of \ours{}. Note that for the case without \ours{}, we still provide the curriculum to the agent for training. The results in \cref{table:summary-meta-learning} show that \ours{} improves most benchmarks (See detailed update of $\kappa$ in \cref{algo:impl}). Note that $\beta=0.1$ does not always produce the best result, and we provide more results comparing different $\beta$s in different settings later in this section, with learning curves and more detailed tables in \cref{appendix:learning-curves-meta-learning}.

\subsection{Experimental Summary}
We empirically showed that \ours{} improves the performance of one goal-reaching task and four non-goal tasks in different settings. Although \ours{} introduces an extra parameter $\beta$, our experiments show reasonable improvement by simply choosing $\beta=0.1$ or $0.2$. More careful selection of $\beta$ might lead to further improvements.

\section{Discussion and Future Work}
\label{sec:discussion}
We presented \ours{}, a simple algorithm that accelerates curriculum learning under the contextual MDP setup by rolling in a near-optimal policy to bootstrap the learning of new nearby contexts with provable learning efficiency benefits. Theoretically, we show that \ours{} attains polynomial sample complexity by utilizing adjacent contexts to initialize each policy. Since the key theoretical insight of \ours{} suggests that one can reduce the density mismatch ratio by constructing a new initial distribution, it would be interesting to see how \ours{} can affect other variants of convergence analysis of PG (e.g., NPG~\citep{kakade2001natural,cen2021fast} or PG in a feature space~\citep{agarwal2021theory,agarwal2020pc}). On the empirical side, our experiments demonstrate that \ours{} improves the empirical performance of various tasks beyond our theoretical assumptions, which reveals the potential of \ours{} in other practical RL tasks with a curriculum. Our initial practical instantiation of the \ours{} algorithm has a lot of room for future research. First of all, our implementation requires domain-specific knowledge of a ``good'' return value as it currently rely on a fixed return threshold $R$ to determine when we are going to switch from the current context to the next context. Another promising direction is to combine our algorithm with context-based meta-RL methods such as learning to generate sub-goal/context to accelerate the learning of the current sub-goal/context. Finally, our method is not specific to the goal-conditioned settings, which opens up the possibility of applying our algorithm to more challenging domains.

\section{Acknowledgements}
We are thankful to Laura Smith, Dibya Ghosh, Chuer Pan, and other members of the RAIL lab for feedback and suggestions on earlier drafts. QL would like acknowledge the support of the Berkeley Fellowship. YZ would like to thank Jincheng Mei from Google and Yuhao Ding from UC Berkeley for insightful discussions on the related proof. YM would like to acknowledge the support of ONR grants N00014-20-1-2002, N00014-22-1-2102, the joint Simons Foundation-NSF DMS grant \# 2031899, and Tsinghua-Berkeley Shenzhen Institute (TBSI) Research Fund. SL would like to acknowledge Air Force Office of Scientific Research AFOSR FA9550-22-1-0273. The research is supported by Savio computational cluster provided by the Berkeley Research Compute program.

\bibliography{main}
\bibliographystyle{plainnat}

\newpage
\onecolumn
\appendix
\newpage
\section{Generalization Between Different Tasks in the Context Space}
\label{subsec:TentativeMain}
\subsection{Summaries of Notations and Assumptions}
\begin{enumerate}
    \item The maximum entropy RL (MaxEnt RL) objective with initial state distribution $\rho$ in reinforcement aims at maximizing (Equation 15 \& 16 of \cite{mei2020global})
\begin{align}
    \label{eq:MaxEntRLV}
    V^\pi(\rho):= \sum_{h=0}^{\infty} \gamma^{h} \mathbb{E}_{s_0\sim\rho, a_h \sim \pi(a_h | s_h)}\left[r(s_h, a_h)\right]     + \alpha \bb H(\rho,\pi)
\end{align}
and $\bb H(\pi(a_h|s_h))$ is the discounted entropy term 
\begin{equation}
    \label{eq:DiscountedEntropy}
    \bb H(\rho,\pi) := \underset{s_0\sim \rho,a_h\sim\pi(\cdot|s_h)}{\bb E}\brac{\sum_{h=0}^\infty-\gamma^h\log\pi(a_h|s_h)},
\end{equation}
and $\alpha$ is the penalty term. For simplicity, we denote the optimization objective function in \eqref{eq:MaxEntRLV} as $\alpha$-MaxEnt RL. Similar to Equation 18 \& 19 of \cite{mei2020global}, we also define the advantage and $Q$-functions and for MaxEnt RL as
\begin{equation}
    \label{eq:MaxEntRLAQ}
    \begin{split}
        A^\pi(s,a)&:=Q^\pi(s,a)-\alpha\log\pi(s,a)-V^\pi(s),\\
        Q^\pi(s,a)&:=r(s,a)+\gamma \sum_{s^\prime} P(s^\prime|s,a)V^\pi(s).
    \end{split}
\end{equation}
\item We let 
\begin{equation}
    \label{eq:discount_state_visitation}
    d_{s_0}^\pi(s) = (1-\gamma)\sum_{t=0}^\infty \gamma^t \bb P^\pi(s_t=s|s_0),
\end{equation}
to denote the discounted state visitation of policy $\pi$ starting at state $s_0$, and let 
\begin{equation}
    \label{eq:discount_state_visitation_distribution}
    d_\rho^\pi(s) = \bb E_{s\sim\rho}[d_s^\pi(s)]
\end{equation}
denote the initial state visitation distribution under {\bf initial state distribution} $\rho$.
\item We assume the reward functions under all context are bounded within $[0,1]$:
\begin{equation}
    r_\omega(s,a) \in [0,1],\;\forall \omega \in \Omega, \forall (s,a)\in \mc S\times \mc A.
\end{equation}
\item Similar to previous analysis in \cite{agarwal2021theory,mei2020global,ding2021beyond}, we assume the initial distribution $\mu$ for PG/stochastic PG satisfies $\rho(s)>0,\forall s\in \mc S$.
\end{enumerate}

\subsection{Main Results: Mismatch Coefficient Upper Bound}
\label{subsec:appendix-main-result-mismatch}

\begin{lemma}[Density Mismatch Ratio via \ours{}]
\label{lemma:mismatch_ratio_bound}
Assuming $\rho = \mathrm{Unif}(\mc S)$, 
 and $\mu_{k} =\beta d^{\pi_{\omega_{k-1}}^{\star}}_{\mu_{k-1}} + (1-\beta)\rho$ (using \eqref{eq:mu_update} from \ours{}), the density mismatch ratio $\norm{d^{\pi^\star_{\omega_k}}_{\mu_k}/\mu_{k}}{\infty}$ satisfies
\begin{equation}
    \norm{\frac{d^{\pi^\star_{\omega_k}}_{\mu_k}}{\mu_k}}{\infty}\leq\;\wt{ O}\paren{\frac{L_r}{\alpha(1-\beta)}\Delta^k_{\omega} S},
\end{equation}
where $\Delta_\omega^k = \max_{1\leq i\leq k}\norm{\omega_{i}-\omega_{i-1}}{2}$.
\end{lemma}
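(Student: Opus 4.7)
The plan is to exploit two structural properties of $\mu_k$ in parallel: the $(1-\beta)\rho$ component provides a universal pointwise lower bound $\mu_k(s) \geq (1-\beta)/S$ (since $\rho$ is uniform), while the $\beta d^{\pi^\star_{\omega_{k-1}}}_{\mu_{k-1}}$ component will absorb most of the visitation mass of $\pi^\star_{\omega_k}$ because the two consecutive optimal policies are close under entropy regularization. First, I would use linearity of $d^\pi_\nu$ in $\nu$ to write
\begin{equation*}
d^{\pi^\star_{\omega_k}}_{\mu_k}(s) = \beta\, d^{\pi^\star_{\omega_k}}_{d^{\pi^\star_{\omega_{k-1}}}_{\mu_{k-1}}}(s) + (1-\beta)\, d^{\pi^\star_{\omega_k}}_{\rho}(s).
\end{equation*}
The $(1-\beta) d^{\pi^\star_{\omega_k}}_\rho(s)$ piece divided by $\mu_k(s) \geq (1-\beta)/S$ contributes a constant-in-$L_r$ term that should be a benign part of the $\wt O(\cdot)$.

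Next, to control the $\beta$-piece, I would quantify policy closeness via the entropy regularization. Because $\pi^\star_\omega(a|s) \propto \exp(Q^\star_\omega(s,a)/\alpha)$, \cref{assump:LipReward} gives $\|Q^\star_{\omega_k} - Q^\star_{\omega_{k-1}}\|_\infty \leq L_r\|\omega_k-\omega_{k-1}\|_2/(1-\gamma)$, which after a standard softmax-sensitivity calculation yields
\begin{equation*}
\max_s \bigl\|\pi^\star_{\omega_k}(\cdot|s)-\pi^\star_{\omega_{k-1}}(\cdot|s)\bigr\|_1 \;\leq\; \wt O\!\left(\tfrac{L_r \Delta^k_\omega}{\alpha(1-\gamma)}\right).
\end{equation*}
I would then invoke a simulation-lemma-type identity of the form $\|d^{\pi}_\nu - d^{\pi'}_\nu\|_1 \leq \tfrac{\gamma}{1-\gamma}\max_s\|\pi(\cdot|s)-\pi'(\cdot|s)\|_1$ (this is the standard visitation-distribution perturbation bound used in the PG literature) to replace $\pi^\star_{\omega_k}$ by $\pi^\star_{\omega_{k-1}}$ inside the $\beta$-piece at the cost of an additive error scaling as $L_r \Delta^k_\omega/(\alpha(1-\gamma)^2)$. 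Dividing by the lower bound $\mu_k(s)\geq (1-\beta)/S$ converts this additive error into the advertised $L_r \Delta^k_\omega S / (\alpha(1-\beta))$ contribution to the ratio.

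Finally, what remains is to handle the resulting ``nested'' visitation $d^{\pi^\star_{\omega_{k-1}}}_{d^{\pi^\star_{\omega_{k-1}}}_{\mu_{k-1}}}$. The idea is that this distribution is supported where $d^{\pi^\star_{\omega_{k-1}}}_{\mu_{k-1}}$ already has mass, which is in turn a component of $\mu_k$, so that its ratio against $\mu_k$ is controlled up to a $1/\beta$ factor plus the same type of perturbation error accumulated across earlier steps. Unrolling the mixture definition shows that $\mu_k$ always retains a $(1-\beta)$ weight on $\rho$, so the induction only propagates a single $\max$ over $i\leq k$ rather than a product, which is how $\Delta^k_\omega=\max_{1\leq i\leq k}\|\omega_i-\omega_{i-1}\|_2$ (rather than a sum) appears in the final bound.

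\textbf{Main obstacle.} The hardest step is the treatment of the nested operator $d^{\pi}_{d^{\pi}_\mu}$: unlike stationary distributions, the discounted visitation operator is not idempotent (it reweights time $T$ by an additional $(T+1)(1-\gamma)$ factor), so one cannot simply argue $d^\pi_{d^\pi_\mu}\propto d^\pi_\mu$. Rather than trying to bound the nested operator in closed form, I expect the cleanest route is to unfold $\mu_k$ across all previous steps down to $\mu_0=\rho$, keep track of the per-layer policy perturbation $\|\pi^\star_{\omega_i}-\pi^\star_{\omega_{i-1}}\|_1$ via the simulation lemma, and use the preserved $(1-\beta)\rho$ mass at each layer together with $\rho$ being uniform to convert all accumulated visitation-ratio terms into $S/(1-\beta)$ pointwise bounds; this is where the $S/(\alpha(1-\beta))$ factor will combine with the Lipschitz perturbation $L_r \Delta^k_\omega$ to produce the stated bound.
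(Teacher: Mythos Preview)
Your decomposition via linearity of $d^\pi_\nu$ in the initial distribution $\nu$ is the wrong starting point, and both pieces it produces cause problems that the proposed unfolding cannot repair.

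First, the $(1-\beta)\,d^{\pi^\star_{\omega_k}}_\rho$ piece: dividing by $\mu_k(s)\geq (1-\beta)/S$ yields a contribution of order $S$, not a benign additive constant. This is exactly the trivial bound on the mismatch ratio one gets \emph{without} \ours{}, and it is not absorbed by $\wt O\bigl(\tfrac{L_r}{\alpha(1-\beta)}\Delta^k_\omega S\bigr)$ when $\Delta^k_\omega$ is small, which is precisely the regime where the lemma is meant to yield a gain. The paper's argument instead produces only an additive $1/\beta$, which is $O(1)$ for constant $\beta$.

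Second, the $\beta$-piece leads (after your simulation-lemma replacement) to the nested visitation $d^{\pi^\star_{\omega_{k-1}}}_{d^\star_{k-1}}$ with $d^\star_{k-1}:=d^{\pi^\star_{\omega_{k-1}}}_{\mu_{k-1}}$. As you note, the discounted visitation operator is not idempotent, and in fact $d^\pi_{d^\pi_\mu}(s)/d^\pi_\mu(s)$ has no universal constant bound; so the claim that this ratio is controlled ``up to $1/\beta$'' does not hold. Unfolding further only composes more visitation operators, and at every layer you are left either with another nested-over-single ratio or with a $d^\star/\rho$ ratio of order $S$.

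The paper avoids both issues with a different and simpler decomposition: add and subtract $d^\star_{k-1}$ in the \emph{numerator},
\[
\frac{d^\star_k(s)}{\mu_k(s)}\;\le\;\frac{\|d^\star_k-d^\star_{k-1}\|_1}{\min_s\mu_k(s)}\;+\;\frac{d^\star_{k-1}(s)}{\mu_k(s)}.
\]
The second term is at most $1/\beta$ directly, since $\mu_k\ge \beta\, d^\star_{k-1}$ pointwise. For the first term one bounds $\|d^\star_k-d^\star_{k-1}\|_1$ by splitting the $h$-step visitation difference into a policy-change contribution (handled by exactly the softmax-sensitivity bound you wrote) \emph{plus} an initial-distribution contribution $\|\mu_k-\mu_{k-1}\|_1=\beta\|d^\star_{k-1}-d^\star_{k-2}\|_1$. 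This gives the recursion
\[
\|d^\star_k-d^\star_{k-1}\|_1\;\le\;\frac{\gamma c_1}{1-\gamma}\|\omega_k-\omega_{k-1}\|_2\;+\;\beta\,\|d^\star_{k-1}-d^\star_{k-2}\|_1,
\]
which unrolls as a geometric sum in $\beta$ (this is where $1/(1-\beta)$ and $\Delta^k_\omega=\max_i\|\omega_i-\omega_{i-1}\|_2$ appear), with the base case supplied by the near-optimal initialization condition. No nested visitations ever arise.
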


\begin{proof}
\label{proof:mismatch_ratio_bound}
By \eqref{eq:mu_update} from \ours, we have
\begin{equation}
    \label{eq:connecting_pi_to_mu_derivation_0}
    \begin{split}
        &\norm{\frac{d_{\mu_{k}}^{\pi^\star_{\omega_{k}}}}{\mu_{k}}}{\infty} = \norm{\frac{d_{\mu_{k}}^{\pi^\star_{\omega_{k}}}-d_{\mu_{k-1}}^{\pi^\star_{\omega_{k-1}}}+d_{\mu_{k-1}}^{\pi^\star_{\omega_{k-1}}}}{\mu_{k}}}{\infty}\\
        \overset{(i)}{\leq}&\frac{\norm{d^{\pi^\star_{\omega_{k}}}_{\mu_{k}}-d^{\pi^\star_{\omega_{k-1}}}_{\mu_{k-1}}}{1}}{\min\mu_k} + \norm{\frac{d_{\mu_{k-1}}^{\pi^\star_{\omega_{k-1}}}}{\beta d^{\pi^\star_{\omega_{k-1}}}_{\mu_{k-1}} + (1-\beta)\rho}}{\infty}\\
        \overset{(ii)}{\leq}&\frac{\norm{d^{\pi^\star_{\omega_{k}}}_{\mu_{k}}-d^{\pi^\star_{\omega_{k-1}}}_{\mu_{k-1}}}{1}}{\min\mu_{k}}+\frac{1}{\beta}
    \end{split}
\end{equation}
where inequality $(i)$ holds because of \eqref{eq:mu_update}, and inequality $(ii)$ holds because $\rho(s)\geq0,\forall s\in \mc S$. Now it remains to bound $\norm{d^{\pi^\star_{\omega_{k+1}}}_{\mu_{k+1}}-d^{\pi^\star_{\omega_{k}}}_{\mu_{k}}}{1}$ using the difference $\norm{\omega_{k+1}-\omega_k}{2}$. Let $\bb P_h^{k} = \bb P_h^{\pi^\star_{\omega_{k}}}(s^\prime|s_0\sim \mu _{k})$ denote the state visitation distribution resulting from $\pi_{\omega_k}^\star$ probability starting at $\mu_k$, then we have
\begin{equation}
    \begin{split}
        &\bb P^{k}_h(s^\prime) - \bb P^{k-1}_h(s^\prime) = \sum_{s,a} \paren{\bb P^{k}_{h-1}(s)\pi_{\omega_{k}}^\star(a|s)-\bb P^{k-1}_{h-1}(s)\pi^\star_{\omega_{k-1}}(a|s)}P(s^\prime|s,a)\\
        = & \sum_{s,a}\paren{\bb P^{k}_{h-1}(s)\pi^\star_{\omega_{k}}(a|s) - \bb P^{k}_{h-1}(s)\pi^\star_{\omega_{k-1}}(a|s) + \bb P^{k-1}_{h-1}(s)\pi^\star_{\omega_{k-1}}(a|s) - \bb P^{k-1}_{h-1}(s)\pi^\star_{\omega_{k-1}}(a|s)}P(s^\prime|s,a)\\
        = & \sum_{s}\bb P_{h-1}^{k}(s)\brac{\sum_a\paren{\pi^\star_{\omega_{k}}(a|s)-\pi^\star_{\omega_{k-1}}(a|s)}P(s^\prime|s,a)}\\
        &+ \sum_s\paren{\bb P_{h-1}^{k}(s)-\bb P_{h-1}^{k-1}(s)}\brac{\sum_a\pi^\star_{\omega_{k-1}}(a|s)P(s^\prime|s,a)}.
    \end{split}
\end{equation}
Taking absolute value on both side, yields
\begin{equation}
    \label{eq:connecting_pi_to_mu_derivation_1}
    \begin{split}
        &\norm{\bb P_{h}^{k}-\bb P_{h}^{k-1}}{1}=\sum_{s^\prime}\abs{\bb P^{k}_h(s^\prime) - \bb P^{k-1}_h(s^\prime)}\\
        \overset{}{\leq} &\sum_{s}\bb P_{h-1}^{k}(s)\sum_a\underbrace{\abs{\pi^\star_{\omega_{k}}(a|s)-\pi^\star_{\omega_{k-1}}(a|s)}}_{\leq c_1\norm{\omega_{k} - \omega_{k-1}}{2}}\sum_{s^\prime}P(s^\prime|s,a)\\
        &+ \sum_s\abs{\bb P_{h-1}^{k}(s)-\bb P_{h-1}^{k-1}(s)}\brac{\sum_{s^\prime}\sum_a\pi^\star_{\omega_{k-1}}(a|s)P(s^\prime|s,a)} \\
        \overset{(i)}{\leq} & c_1\norm{\omega_{k}-\omega_{k-1}}{2}+\norm{\bb P_{h-1}^{k}-\bb P_{h-1}^{k-1}}{1} \leq \dots \leq c_1h\norm{\omega_{k}-\omega_{k-1}}{2} + \norm{\bb P_0^{k} - \bb P_0^{k-1}}{1}\\
        \overset{(ii)}{=} &c_1h\norm{\omega_{k}-\omega_{k-1}}{2} + \norm{\mu_{k} - \mu_{k-1}}{1},
    \end{split}
\end{equation}
where inequality $(i)$ holds by applying \cref{lemma:RewardToOptPolicy} with $c_1 = L_r/\alpha(1-\gamma)$ and equality $(ii)$ holds because the initial distribution of $\bb P_h^{k}$ is $\mu_k$. By the definition of $d_\mu^\pi$, we have
\begin{equation}
    \label{eq:connecting_pi_to_mu_derivation_2}
    d_{\mu_{k}}^{\pi^\star_{\omega_{k}}}(s) -  d^{\pi^\star_{\omega_{k-1}}}_{\mu_{k-1}}(s) \overset{(i)}{=} d_{k}(s) - d_{k-1}(s) = (1-\gamma)\sum_{h=0}^\infty\gamma^h\paren{\bb P_h^{k}(s) - \bb P_h^{k-1}(s)},\;\forall s\in \mc S.
\end{equation}
where in equality $(i)$, we use $d_{k}$ to denote $d_{\mu_k}^{\pi^\star_{\omega_k}}$.
Adding $\ell^1$ norm on both sides of \eqref{eq:connecting_pi_to_mu_derivation_2} and applying \eqref{eq:connecting_pi_to_mu_derivation_1}, yields
\begin{equation}
    \begin{split}
        &\norm{d_{k}-d_{k-1}}{1} \leq(1-\gamma)\sum_{h=0}^\infty\gamma^h\paren{c_1h\norm{\omega_{k}-\omega_{k-1}}{2}+\norm{\mu_{k}-\mu_{k-1}}{1}}\\
        \overset{(i)}{=} &\frac{\gamma c_1 }{1-\gamma}\norm{\omega_{k}-\omega_{k-1}}{2}+\norm{\mu_{k}-\mu_{k-1}}{1} \overset{(ii)}{=}\frac{\gamma c_1 }{1-\gamma}\norm{\omega_{k}-\omega_{k-1}}{2}+ \beta\norm{d_{k-1}-d_{k-2}}{1} ,
    \end{split}
\end{equation}
where equality $(i)$ holds because $\sum_{h=0}^\infty\gamma^h h=\gamma/(1-\gamma)^2$ and equality $(ii)$ holds because of \eqref{eq:mu_update}. Hence, we know that 
\begin{equation}
    \label{eq:connecting_pi_to_mu_derivation_3}
    \begin{split}
        \norm{d_{k} - d_{k-1}}{1} \leq &\frac{\gamma c_1}{1-\gamma}\norm{\omega_{k}-\omega_{k-1}}{2}+\beta\norm{d_{k-1}-d_{k-2}}{1}\\
        \leq & \frac{\gamma c_1}{1-\gamma} \sum_{i=0}^{k-1}\brac{\norm{\omega_{i+1}-\omega_i}{2}\beta^{k-i}}+\beta^{k-1}\norm{d_1-d_0}{1}\\
        \leq & \frac{\gamma c_1}{1-\gamma}\cdot\frac{1}{1-\beta}\Delta_\omega^k + \beta^{k-1}\norm{d_1-d_0}{1}\leq\paren{\frac{\gamma c_1}{(1-\gamma)(1-\beta)}+1}\Delta_\omega^k
    \end{split}
\end{equation}
where $\Delta_\omega^k = \max_{1\leq i\leq k}\norm{\omega_{i}-\omega_{i-1}}{2}$ and the last inequality holds due to the near optimality definition (\cref{def:near-optimal-init}). Therefore, applying \eqref{eq:connecting_pi_to_mu_derivation_3} back to \eqref{eq:connecting_pi_to_mu_derivation_0}, we know that
\begin{equation}
    \begin{split}
        &\;\norm{\frac{d_{\mu_{k}}^{\pi^\star_{\omega_{k}}}}{\mu_{k}}}{\infty} \leq \frac{\norm{d^{\pi^\star_{\omega_{k}}}_{\mu_{k}}-d^{\pi^\star_{\omega_{k-1}}}_{\mu_{k-1}}}{1}}{\min\mu_{k}}+\frac{1}{\beta}\\
        \overset{(i)}{\leq} &\; \frac{1}{\min\mu_k}\paren{\frac{\gamma c_1}{(1-\gamma)(1-\beta)}+1}\Delta_\omega^k+\frac{1}{\beta}=\wt{O}\paren{\frac{L_r}{\alpha(1-\beta)}\Delta^k_{\omega} S},
    \end{split}
\end{equation}
where inequality $(i)$ holds since (1) \cref{lemma:RewardToOptPolicy} implies $c_1 = L_r/\alpha(1-\gamma)$, and we omit the $1/(1-\gamma)^6$ and $\log$ in the $\wt{O}$; (2) $1/\min \mu_k\leq S/(1-\beta)$ according to 
$\mu_{k} =\beta d^{\pi_{\omega_{k-1}}^{\star}}_{\mu_{k-1}} + (1-\beta)\rho$. Note that we can only achieve the final bound $\wt{O}\paren{\frac{L_r}{\alpha(1-\beta)}\Delta^k_{\omega} S}$ by setting $\beta$ as a constant. If we pick an arbitrarily small $\beta$, then the $1/\beta$ term will dominate the complexity and we will not have the final bound of $\wt{ O}\paren{\frac{L_r}{\alpha(1-\beta)}\Delta^k_{\omega} S}$. 
\end{proof}

\subsection{Complexity of Vanilla Stochastic PG}
\label{subsec:appendix-main-result-spg}

\begin{theorem}[Complexity of Stochastic PG (Theorem 5.1 of \cite{ding2021beyond})]
\label{thm:spg-complexity}
Consider an arbitrary tolerance level $\delta>0$ and a small enough tolerance level $\eps >0$. For every initial point $\theta_0$, if $\theta_{T+1}$ is generated by SPG (\cref{algo:SPG}) with 
\begin{equation}
    \begin{split}
        &T_1 \geq \paren{\frac{6 D(\theta_0)}{\delta \eps_0}}^{\frac{8L}{C_\delta^0\ln2}}, \; T_2 \geq \paren{\frac{\eps_0}{6\delta\eps}-1}t_0,\; T=T_1+T_2,\\
        & B_1\geq \max\Brac{\frac{30\sigma^2}{C_\delta^0\eps_0\delta},\frac{6\sigma T_1\log T_1}{\bar{\Delta}L}},\;B_2\geq \frac{\sigma^2\ln(T_2+t_0)}{6 C_\zeta \delta \eps},\\
        &\eta_t=\eta\leq \min\Brac{\frac{\log T_1}{T_1 L},\frac{8}{C_\delta^0},\frac{1}{2L}}\;\forall 1\leq t\leq T_1,\;\eta_t=\frac{1}{t-T_1+t_0} \;\forall t>T_1,
    \end{split}
\end{equation}
where 
\begin{equation}
    \label{eq:complexity_constants}
    \begin{split}
        & D(\theta_t) = V^{\pi^\star}(\rho)-V^{\pi_{\theta_t}}(\rho),\;\eps_0=\min\Brac{\paren{\frac{\alpha \min_{s\in \mc S}\rho(s)}{6\ln 2}}^2\brac{\zeta\exp\paren{-\frac{1}{(1-\gamma)\alpha}}}^4,1},\\ &t_0\geq \sqrt{\frac{3\sigma^2}{2\delta \eps_0}},\;C_\delta^0 = \frac{2\alpha}{S}\norm{\frac{d_\rho^{\pi^\star}}{\rho}}{\infty}^{-1}\min_{s\in \mc S}\rho(s)\min_{\theta\in \mc G_\delta^0}\min_{s,a}\pi_\theta(a|s)^2,\\
        &C_\zeta = \frac{2\alpha}{S}\norm{\frac{d_\rho^{\pi^\star}}{\rho}}{\infty}^{-1}\min_{s\in \mc S}\rho(s)(1-\zeta)^2\min_{s,a}\pi^\star(a|s)^2,\\
        &\mc G_{\delta}^0:=\Brac{\theta\in \bb R^{S\times A}:\min_{\theta^\star\in \Theta^\star}\norm{\theta-\theta^\star }{2}\leq(1+1/\delta)\bar{\Delta}},\;\bar{\Delta} = \norm{\log c_{\bar{\theta}_1,\eta}-\log \pi^\star}{2},\\
        &c_{\bar{\theta}_1,\eta}=\inf_{t\geq1}\min_{s,a}\pi_{\theta_t}(a|s),\;\sigma^2 = \frac{8}{(1-\gamma)^2}\paren{\frac{1+(\alpha\log A)^2}{(1-\gamma^{1/2})^2}},\;L=\frac{8+\alpha(4+8\log A)}{(1-\gamma)^3},
    \end{split}
\end{equation}
then we have $\bb P(D(\theta_{T+1})\leq \eps)\geq 1-\delta$.\footnote{Note that the $\zeta$ here is an optimization constant that appears in $\eps_0$ and $C_\zeta$.}
\end{theorem}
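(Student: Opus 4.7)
The plan is to follow the two-phase template of \citet{ding2021beyond}: argue that from an arbitrary softmax initialization one first drifts into a near-optimal basin in $T_1$ steps, and then inside that basin a sharper non-uniform \L ojasiewicz inequality takes over and gives the usual $\mathcal{O}(1/t)$ rate for $T_2$ steps. The pre-requisites are standard: (i) $L$-smoothness of the entropy-regularized softmax value function in $\theta$, with the explicit $L=(8+\alpha(4+8\log A))/(1-\gamma)^3$ appearing in the statement; (ii) a bound $\mathbb{E}\|g_t-\nabla V(\theta_t)\|_2^2 \le \sigma^2/B_t$ for the unbiased stochastic PG estimator, using the value-truncation variance computation that gives the $\sigma^2$ in the theorem; and (iii) the non-uniform \L ojasiewicz inequality $\|\nabla V(\theta)\|_2^2 \ge C(\theta)\,(V^\star(\rho)-V(\theta))$ with $C(\theta)$ proportional to $\|d^{\pi^\star}_\rho/\rho\|_\infty^{-1}\min_s\rho(s)\min_{s,a}\pi_\theta(a|s)^2$, which explains both constants $C_\delta^0$ and $C_\zeta$.

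For Phase 1 I would combine the smoothness ascent lemma
\[
V(\theta_{t+1}) \ge V(\theta_t) + \eta\,\langle \nabla V(\theta_t), g_t\rangle - \tfrac{L\eta^2}{2}\|g_t\|_2^2
\]
with the \L ojasiewicz inequality evaluated along the trajectory. The subtle point is that $C(\theta_t)$ depends on $\min_{s,a}\pi_{\theta_t}(a|s)^2$, which can be exponentially small; the fix is to show by induction/concentration that $\theta_t$ stays in the set $\mathcal G_\delta^0$ whose radius $(1+1/\delta)\bar\Delta$ is defined precisely so that $\min_{s,a}\pi_{\theta_t}(a|s) \ge c_{\bar\theta_1,\eta}$. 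A Freedman/Azuma argument on the martingale difference $g_t-\nabla V(\theta_t)$, with the batch size $B_1\ge \max\{30\sigma^2/(C_\delta^0\eps_0\delta),\,6\sigma T_1\log T_1/(\bar\Delta L)\}$ and the constant step $\eta \le \min\{\log T_1/(T_1L),8/C_\delta^0,1/(2L)\}$, then yields the recursion $\mathbb{E}[D(\theta_{t+1})] \le (1-\eta C_\delta^0/2)\,\mathbb{E}[D(\theta_t)] + \text{(variance term)}$, which after $T_1\ge(6D(\theta_0)/(\delta\eps_0))^{8L/(C_\delta^0\ln 2)}$ iterations drives the suboptimality below $\eps_0$ with probability at least $1-\delta/2$.

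For Phase 2 the initial point already lies in the neighborhood where $\min_{s,a}\pi_\theta(a|s)\ge (1-\zeta)\min_{s,a}\pi^\star(a|s)$, so the \L ojasiewicz constant upgrades to $C_\zeta$. Switching to the decaying schedule $\eta_t=1/(t-T_1+t_0)$ turns the recursion into the textbook one-step descent for stochastic approximation of a PL function; carrying out the induction on $\mathbb{E}[D(\theta_t)]$, with the variance killed by $B_2\ge \sigma^2\ln(T_2+t_0)/(6C_\zeta\delta\eps)$ and the choice $t_0\ge\sqrt{3\sigma^2/(2\delta\eps_0)}$ to absorb the initial residual, one obtains $\mathbb{E}[D(\theta_{T+1})]\le 6\delta\eps\cdot t_0/(T_2+t_0)\le\eps$ after $T_2\ge(\eps_0/(6\delta\eps)-1)t_0$ steps. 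A Markov/union-bound argument then combines the two phases to give the $(1-\delta)$-probability high-probability statement.

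The main obstacle, and the reason for the baroque constants in the statement, is controlling $\min_{s,a}\pi_{\theta_t}(a|s)$ uniformly along the stochastic trajectory in Phase 1: without it the \L ojasiewicz constant collapses and the rate breaks. I would handle this via a stopping-time argument — defining $\tau:=\inf\{t:\theta_t\notin\mathcal G_\delta^0\}$, bounding $\|\theta_{t\wedge\tau}-\theta^\star\|_2$ by a martingale whose increments are controlled by $\eta\sqrt{\sigma^2/B_1}$, and choosing $B_1$ so that $\mathbb{P}(\tau\le T_1)\le\delta/2$. Everything else (smoothness, \L ojasiewicz, unbiasedness of $g_t$) is a mechanical adaptation of the arguments in \citet{mei2020global,agarwal2021theory,ding2021beyond}; since the theorem is quoted verbatim from \citet{ding2021beyond}, I would cite their proof after this sketch rather than reproducing the full calculation.
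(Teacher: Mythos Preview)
Your sketch is a reasonable high-level outline of the two-phase argument in \citet{ding2021beyond}, and you correctly recognize at the end that the theorem is quoted verbatim from that reference. The paper itself does not prove this statement at all: it is presented purely as a restatement (``For completeness, we restate the main theorem of \citet{ding2021beyond} in \cref{thm:spg-complexity}''), with no accompanying proof or sketch, and is then used as a black box to derive the $S$-dependencies in \cref{coro:spg_complexity}. So while your outline is broadly sensible, it goes well beyond what the paper does; the paper's ``proof'' is simply a citation.
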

\begin{corollary}[Iteration Complexity and Sample Complexity for $\eps$-Optimal Policies]
\label{coro:spg_complexity}
Suppose we set the tolerance level $\eps,\delta = O(S^{-1})$, the iteration complexity and sample complexity of obtaining an $\eps$-optimal policy using stochastic softmax policy gradient (\cref{algo:SPG}) in phase 1 and phase 2 satisfies:
\begin{itemize}
    \item Phase 1: $T_1 = \wt{\Omega}\paren{S^{2S^3}}$, $B_1 = \wt{\Omega}\paren{S^{2S^3}}$,
    \item Phase 2: $T_2 = \wt{\Omega}\paren{S}$, $B_2 = \wt{\Omega}\paren{S^5}$,
\end{itemize}
with probability at least $1-\delta$.
\end{corollary}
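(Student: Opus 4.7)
My plan is to obtain both rows of the table by directly substituting $\eps,\delta = O(S^{-1})$ into \cref{thm:spg-complexity} and tracking the $S$-dependency of every constant defined in \eqref{eq:complexity_constants}. Since the corollary specializes \cref{thm:spg-complexity} to this specific scaling of the tolerances, no new inequalities are needed --- the work is bookkeeping that isolates which factors contribute polynomial dependence and which contribute exponential dependence on $S$.

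First I would specialize to $\rho = \mathrm{Unif}(\mc S)$, which gives $\min_{s}\rho(s) = 1/S$, and compute the $S$-scaling of $\eps_0$, $t_0$, $\bar\Delta$, $\sigma^2$, $L$, $D(\theta_0)$, and $\norm{d_\rho^{\pi^\star}/\rho}{\infty}$ one at a time. The expected orders are $\eps_0 = \Theta(S^{-2})$ (from the $\rho(s)^2$ factor), $t_0 = \Theta(S^{3/2})$ (from $\sqrt{1/(\delta\eps_0)}$), $\sigma^2$ and $L$ constant in $S$, $D(\theta_0)$ bounded, and the density ratio $\mathrm{poly}(S)$ at worst. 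None of these individually produces an exponential blow-up.

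Next I would analyze the two ``signal-to-noise'' constants $C_\delta^0$ and $C_\zeta$, which differ only in where the minimum softmax probability is evaluated. For $C_\zeta$ the minimum is taken at the regularized optimal policy $\pi^\star$, where the entropy bonus $\alpha\bb H$ guarantees $\pi^\star(a|s) \geq \Omega(\exp(-1/((1-\gamma)\alpha)))$ uniformly in $S$, so $1/C_\zeta = \mathrm{poly}(S)$. For $C_\delta^0$, the minimum runs over the large ball $\mc G_\delta^0$ of radius $(1+1/\delta)\bar\Delta = \Theta(S\bar\Delta)$, inside which iterates can be nearly deterministic, driving $1/C_\delta^0$ up to a high-degree polynomial in $S$. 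Plugging into $T_1 \gtrsim \paren{6D(\theta_0)/(\delta \eps_0)}^{8L/(C_\delta^0 \ln 2)}$ --- a polynomial-in-$S$ base raised to a polynomial-in-$S$ exponent --- yields $T_1 = \wt\Omega(S^{2S^3})$, and $B_1$ inherits the same rate via $B_1 \gtrsim 6\sigma T_1\log T_1/(\bar\Delta L)$.

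Finally for Phase 2 I would substitute into $T_2 \gtrsim (\eps_0/(6\delta\eps)-1)t_0$ and $B_2 \gtrsim \sigma^2\ln(T_2+t_0)/(6 C_\zeta\delta\eps)$. With $\eps_0/(\delta\eps) = \Theta(1)$ and $t_0 = \Theta(S^{3/2})$, the iteration count is $\wt\Omega(S)$ after absorbing polylog and sub-polynomial residuals into $\wt\Omega$, and combining $1/(\delta\eps)=\Theta(S^2)$ with $1/C_\zeta = \Theta(S^3)$ in $B_2$ yields $\wt\Omega(S^5)$. The step I expect to be most delicate is the one in the preceding paragraph: establishing that the $\min_{s,a}\pi_\theta(a|s)^2$ factor inside $C_\delta^0$ is only mildly bounded away from $0$ over the entire ball $\mc G_\delta^0$, while the same factor inside $C_\zeta$ enjoys an $S$-independent lower bound coming purely from entropy regularization. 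This dichotomy --- Phase 1 iterates may be close to deterministic while the regularized optimum cannot be --- is the real conceptual content of the corollary and is precisely why \ours{}'s strategy of initializing inside Phase 2's basin of attraction can skip the expensive Phase 1.
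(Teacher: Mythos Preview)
Your proposal is correct and follows the same substitution-and-bookkeeping approach as the paper: both proofs compute the $S$-dependence of $\eps_0$, $t_0$, $C_\delta^0$, $C_\zeta$ from \eqref{eq:complexity_constants} and plug directly into the bounds of \cref{thm:spg-complexity}. The one place you diverge is in your treatment of the minimum-policy-probability factors: the paper simply absorbs both $\min_{\theta\in\mc G_\delta^0}\min_{s,a}\pi_\theta(a|s)^2$ and $\min_{s,a}\pi^\star(a|s)^2$ into $\wt\Omega$ as $S$-independent constants, obtaining $1/C_\delta^0 = \wt\Omega(S^3)$ and $1/C_\zeta = \wt\Omega(S^3)$ symmetrically, so the delicate distinction you flag between nearly-deterministic iterates in $\mc G_\delta^0$ and the entropy-bounded optimum $\pi^\star$ is never actually invoked (the paper acknowledges elsewhere that $C_\delta^0$ may in fact hide MDP-dependent factors, but this is not used in the corollary). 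Your $t_0 = \Theta(S^{3/2})$ is arithmetically sharper than the paper's stated $t_0 = \wt\Omega(S)$, but since the final $T_2$ bound is reported only as $\wt\Omega(S)$ this does not affect the conclusion.
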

\begin{proof}
\label{proof:spg_complexity}
We first check the dependency of \eqref{eq:complexity_constants} on $S$. Notice that
\begin{itemize}
    \item $\eps_0$:
    \begin{equation}
        \label{eq:eps-0}
        \frac{1}{\eps_0}=\max\Brac{\paren{\frac{6\ln 2}{\alpha \min_{s\in \mc S}\rho(s)}}^2\brac{\zeta\exp\paren{-\frac{1}{(1-\gamma)\alpha}}}^{-4},1}=\wt{\Omega}(S^2);
    \end{equation} 
    \item $t_0$:
    \begin{equation}
        t_0\geq \sqrt{\frac{3\sigma^2}{2\delta \eps_0}}=\wt{\Omega}(S);
    \end{equation}
    \item $C_\delta^0$:
    \begin{equation}
        \frac{1}{C_\delta^0} = \frac{S}{2\alpha}\norm{\frac{d_\rho^{\pi^\star}}{\rho}}{\infty}\max_{s\in \mc S}\rho(s)^{-1}\frac{1}{\min_{\theta\in \mc G_\delta^0}\min_{s,a}\pi_\theta(a|s)^2}=\wt{\Omega}(S^3);
    \end{equation}
    \item $C_\zeta$: 
    \begin{equation}
        \label{eq:C-zeta}
        \frac{1}{C_\zeta}=\frac{S}{2\alpha}\norm{\frac{d_\rho^{\pi^\star}}{\rho}}{\infty}\max_{s\in \mc S}\rho(s)^{-1}(1-\zeta)^{-2}\max_{s,a}\pi^\star(a|s)^{-2}=\wt{\Omega}(S^3).
    \end{equation}
\end{itemize}
Hence, the complexities in phase 1 scales at
\begin{equation}
        T_1 \geq \paren{\frac{6 D(\theta_0)}{\delta \eps_0}}^{\frac{8L}{C_\delta^0\ln2}} =\wt{\Omega}\paren{S^{2S^3}},\;B_1 \geq \max\Brac{\frac{30\sigma^2}{C_\delta^0\eps_0\delta},\frac{6\sigma T_1\log T_1}{\bar{\Delta}L}}=\wt{\Omega}\paren{S^{2S^3}}.
\end{equation}
To enforce a positive $T_2$, the tolerance level $\eps,\delta$ should satisfy $\frac{\eps_0}{6\delta\eps}\geq1$, which implies $\frac{1}{\delta\eps}=\Omega(S^2)$. Hence, assuming $\frac{\eps_0}{\delta\eps}=o(S)$, $\eps,\delta = O(S^{-1})$, then the complexities in phase 2 scales at
\begin{equation}
        T_2 \geq \paren{\frac{\eps_0}{6\delta\eps}-1}t_0 =\wt{\Omega}\paren{S},\;B_2 \geq \frac{\sigma^2\ln(T_2+t_0)}{6 C_\zeta \delta \eps}=\wt{\Omega}\paren{S^{5}}.
\end{equation}
\end{proof}

\subsection{Complexity of Learning the Next Context}
\label{subsec:appendix-learning-next-context}
\begin{theorem}[\cref{thm:softmax_spg_convergence_context-Main}: Complexity of Learning the Next Context]
\label{thm:softmax_spg_convergence_context}
Consider the context-based stochastic softmax policy gradient (line 7 of \cref{algo:PGSPG_context}),
suppose \cref{assump:LipReward} and \cref{assump:CloseEnoughContext} hold, then the iteration number of obtaining an $\eps$-optimal policy for $\omega_{k}$ from $\theta^\star_{\omega_k-1}$ is $\wt{\Omega}\paren{S}$
and the per iteration sample complexity is $\wt{\Omega}\paren{\frac{L_r}{\alpha(1-\beta)}S^3}$.
\end{theorem}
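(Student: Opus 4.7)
The plan is to show that the assumptions force the initialization $\theta^{(k)}_1 = \theta^\star_{\omega_{k-1}}$ to already lie in the ``well-behaved'' region near the optimum of $\mc M_{\omega_k}$, so that the two-phase analysis of stochastic softmax PG (\cref{thm:spg-complexity}, \cref{coro:spg_complexity}) can skip Phase 1 and enter Phase 2 directly. The $\wt\Omega(S)$ iteration bound then comes for free from $T_2$ in \cref{coro:spg_complexity}, and the improvement from $\wt\Omega(S^5)$ to $\wt\Omega(S^3/[\alpha(1-\beta)] \cdot L_r)$ in the per-iteration sample complexity will come from replacing the generic density-mismatch term in $B_2$ by the sharper bound of \cref{lemma:mismatch_ratio_bound}.

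First I would prove a Lipschitz-type value-difference bound of the form
\begin{equation*}
V^{\pi^\star_{\omega_k}}_{\omega_k}(\rho) - V^{\pi^\star_{\omega_{k-1}}}_{\omega_k}(\rho)
\;\leq\; \frac{2 L_r \, \norm{\omega_k-\omega_{k-1}}{2}}{(1-\gamma)^2}.
\end{equation*}
The route is standard: use the performance difference lemma twice, once to compare $V^{\pi^\star_{\omega_k}}_{\omega_k}$ with $V^{\pi^\star_{\omega_k}}_{\omega_{k-1}}$ (controlled by $\|r_{\omega_k}-r_{\omega_{k-1}}\|_\infty$, which by \cref{assump:LipReward} is at most $L_r\|\omega_k-\omega_{k-1}\|_2$), and once more to compare $V^{\pi^\star_{\omega_{k-1}}}_{\omega_{k-1}}$ with $V^{\pi^\star_{\omega_{k-1}}}_{\omega_k}$. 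Each comparison costs one factor of $1/(1-\gamma)$ from summing the geometric series, explaining the $(1-\gamma)^{-2}$ prefactor.

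Second, I would combine this with \cref{assump:CloseEnoughContext}, which asks $\norm{\omega_k-\omega_{k-1}}{2} = O(S^{-2})$. Since $\eps_0$ in \eqref{eq:complexity_constants} is itself $\Omega(S^{-2})$ up to $\log$ and constant factors, the bound above gives
\begin{equation*}
V^{\pi^\star_{\omega_k}}_{\omega_k}(\rho) - V^{\pi_{\theta^{(k)}_1}}_{\omega_k}(\rho) \;\leq\; \eps_0,
\end{equation*}
i.e.\ the initialization is $\eps_0$-optimal for $\omega_k$. \cref{thm:spg-complexity} then guarantees that stochastic softmax PG bypasses Phase~1 entirely: the iteration count is driven only by $T_2 = \wt\Omega(S)$, and the per-iteration sample size is $B_2 = \wt\Omega(\sigma^2/(C_\zeta \delta \eps))$. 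Plugging in the scalings from \eqref{eq:eps-0}--\eqref{eq:C-zeta} gives $\wt\Omega(S^5)$, which is the baseline figure I need to improve.

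Finally, for the $S^3$ rate I would open up $C_\zeta$: the $S^3$ in \eqref{eq:C-zeta} decomposes as one factor of $S$ from the explicit prefactor and an $S^2$ factor hidden in $\|d^{\pi^\star_{\omega_k}}_{\mu_k}/\mu_k\|_\infty$. Using \cref{lemma:mismatch_ratio_bound}, and setting $\mu_k$ via the mixture update \eqref{eq:mu_update} of \ours{}, this density-mismatch ratio is $\wt O\bigl(\tfrac{L_r}{\alpha(1-\beta)}\Delta_\omega^k S\bigr)$ rather than the worst-case $\wt\Omega(S^2)$, shaving two powers of $S$ and introducing the advertised $L_r/[\alpha(1-\beta)]$ dependence. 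Substituting this sharpened mismatch bound into $B_2$ yields the claimed per-iteration sample complexity $\wt\Omega\!\bigl(\tfrac{L_r}{\alpha(1-\beta)} S^3\bigr)$, completing the proof. I expect the main obstacle to be the careful bookkeeping in \cref{lemma:mismatch_ratio_bound} (which the paper already supplies): one must propagate $\|d_{k}-d_{k-1}\|_1$ inductively through the mixture initialization while keeping $\min_s \mu_k(s) \gtrsim (1-\beta)/S$, and it is precisely the interaction of these two estimates that turns the $O(S^{-2})$ context step size into an $S^2$-order reduction in the mismatch ratio.
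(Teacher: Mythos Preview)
Your overall architecture matches the paper's: show that $\theta^\star_{\omega_{k-1}}$ is $\eps_0$-optimal for $\omega_k$ so that Phase~1 is skipped, then feed the sharpened mismatch bound of \cref{lemma:mismatch_ratio_bound} into $C_\zeta$ to trim $B_2$ from $\wt\Omega(S^5)$ to $\wt\Omega(S^3)$.

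Where you differ is in the value-difference step. The paper does \emph{not} use the reward-swap decomposition you describe; it proves \cref{lemma:value_diff_adj_ctx} via the soft sub-optimality identity (\cref{lemma:Soft_subopt}),
\[
V^{\pi^\star_{\omega_k}}_{\omega_k}(\rho)-V^{\pi^\star_{\omega_{k-1}}}_{\omega_k}(\rho)
=\frac{\alpha}{1-\gamma}\sum_s d^{\pi^\star_{\omega_{k-1}}}_\rho(s)\,D_{\mathrm{KL}}\!\bigl(\pi^\star_{\omega_{k-1}}(\cdot|s)\,\|\,\pi^\star_{\omega_k}(\cdot|s)\bigr),
\]
and then bounds the KL term by controlling $\|Q^{\pi^\star_{\omega_k}}-Q^{\pi^\star_{\omega_{k-1}}}\|_\infty$ through the contraction of soft value iteration (\cref{lemma:RewardToOptPolicy}, \cref{lemma:ContractionSoftVI}) together with the Lipschitz property of $\log\mathrm{softmax}$. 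Your route---insert $V^{\pi^\star_{\omega_k}}_{\omega_{k-1}}$ and $V^{\pi^\star_{\omega_{k-1}}}_{\omega_{k-1}}$, use optimality of $\pi^\star_{\omega_{k-1}}$ for the middle term, and bound the two ``same policy, different reward'' pieces by $\|r_{\omega_k}-r_{\omega_{k-1}}\|_\infty/(1-\gamma)$---is more elementary and actually yields the tighter factor $2L_r\|\omega_k-\omega_{k-1}\|_2/(1-\gamma)$ rather than $(1-\gamma)^{-2}$; your bookkeeping here is off (the two $1/(1-\gamma)$ contributions \emph{add}, they do not multiply), but the error is in your favour. Two minor quibbles: what you call the ``performance difference lemma'' in those two comparisons is really just a reward-shift bound under a fixed policy; and your explanation of the $S^2$ saving in $B_2$ is slightly garbled---the paper's calculation is $B_2=\wt\Omega(S^4)\cdot\|d^{\pi^\star_{\omega_k}}_{\mu_k}/\mu_k\|_\infty$, and the improvement comes because \cref{lemma:mismatch_ratio_bound} plus $\Delta_\omega^k=O(S^{-2})$ makes that mismatch term $\wt O(L_r/[\alpha(1-\beta)]\cdot S^{-1})$ rather than the generic $\wt\Omega(S)$. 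None of these affect correctness.
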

We first introduce the following lemma to aid the proof of \cref{thm:softmax_spg_convergence_context}.
\begin{lemma}[Bounded Optimal Values Between two Adjacent Contexts]
\label{lemma:value_diff_adj_ctx}
Under the same conditions as \cref{thm:softmax_spg_convergence_context}, we have
\begin{equation}
    V^{\pi^\star_{\omega_{k}}}_{\omega_{k}}(\rho) - V^{\pi^\star_{\omega_{k-1}}}_{\omega_{k}}(\rho)\leq \frac{2L_r\norm{\omega_k-\omega_{k-1}}{2}}{(1-\gamma)^2}.
\end{equation}
\end{lemma}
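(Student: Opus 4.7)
\textbf{Proof proposal for \cref{lemma:value_diff_adj_ctx}.}

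The plan is to use a standard "add-and-subtract" decomposition that passes through the value of each optimal policy evaluated under its \emph{own} context, so that one term can be discarded by optimality and the other two are evaluations of the \emph{same} policy under two different reward functions (with identical transitions), where the Lipschitz assumption can be applied directly.

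First, I would write
\begin{align*}
V^{\pi^\star_{\omega_k}}_{\omega_k}(\rho) - V^{\pi^\star_{\omega_{k-1}}}_{\omega_k}(\rho)
 &= \bigl[V^{\pi^\star_{\omega_k}}_{\omega_k}(\rho) - V^{\pi^\star_{\omega_k}}_{\omega_{k-1}}(\rho)\bigr] \\
 &\quad + \bigl[V^{\pi^\star_{\omega_k}}_{\omega_{k-1}}(\rho) - V^{\pi^\star_{\omega_{k-1}}}_{\omega_{k-1}}(\rho)\bigr] \\
 &\quad + \bigl[V^{\pi^\star_{\omega_{k-1}}}_{\omega_{k-1}}(\rho) - V^{\pi^\star_{\omega_{k-1}}}_{\omega_k}(\rho)\bigr].
\end{align*}
The middle bracket is non-positive because $\pi^\star_{\omega_{k-1}}$ is, by definition, an optimal policy for the $\alpha$-MaxEnt objective on $\mc{M}_{\omega_{k-1}}$, so it can be dropped. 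This reduces the problem to bounding $|V^\pi_\omega(\rho) - V^\pi_{\omega'}(\rho)|$ for a fixed policy $\pi$.

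Next, I would exploit the fact that all contextual MDPs share the same transition kernel $\mb P$ and that the entropy bonus $\alpha \bb H(\rho,\pi)$ depends only on $(\pi,\mb P,\rho)$, not on $\omega$. Consequently, for any fixed policy $\pi$ the entropy terms cancel, leaving
\begin{equation*}
V^\pi_\omega(\rho) - V^\pi_{\omega'}(\rho)
 = \E^\pi\!\left[\sum_{t=0}^\infty \gamma^t \bigl(r_\omega(s_t,a_t)-r_{\omega'}(s_t,a_t)\bigr) \,\middle|\, s_0\sim\rho\right].
\end{equation*}
Applying \cref{assump:LipReward} pointwise and then summing the geometric series yields
$|V^\pi_\omega(\rho) - V^\pi_{\omega'}(\rho)| \le L_r \|\omega-\omega'\|_2/(1-\gamma)$.

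Finally, I would apply this bound separately to the first and third brackets in the decomposition (with $(\omega,\omega')=(\omega_k,\omega_{k-1})$ and $\pi=\pi^\star_{\omega_k},\pi^\star_{\omega_{k-1}}$ respectively), add them, and loosen $1/(1-\gamma)$ to $1/(1-\gamma)^2$ using $\gamma\in(0,1)$ to match the statement. The only subtlety worth being careful about is the cancellation of the entropy term: it requires that the discount factor, initial distribution, and transition kernel all be context-independent, which is exactly the setting fixed in the preliminaries. I do not expect any serious obstacle; the proof is essentially a two-line simulation-lemma-style argument once the decomposition and the entropy cancellation are made explicit.
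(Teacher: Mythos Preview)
Your proposal is correct and takes a genuinely different route from the paper. The paper proves the lemma via the soft sub-optimality identity (\cref{lemma:Soft_subopt}), which expresses $V^{\pi^\star_{\omega_k}}_{\omega_k}(\rho)-V^{\pi^\star_{\omega_{k-1}}}_{\omega_k}(\rho)$ as $\tfrac{\alpha}{1-\gamma}\,\E_{s\sim d_\rho^{\pi^\star_{\omega_{k-1}}}}[D_{\mathrm{KL}}(\pi^\star_{\omega_{k-1}}(\cdot|s)\,\|\,\pi^\star_{\omega_k}(\cdot|s))]$, and then bounds the KL term using the closed-form softmax expression for the MaxEnt optimal policy together with the bound $\|Q^{\pi^\star_{\omega_k}}-Q^{\pi^\star_{\omega_{k-1}}}\|_\infty\le L_r\|\omega_k-\omega_{k-1}\|_2/(1-\gamma)$ from \cref{lemma:RewardToOptPolicy} (itself proved via contraction of soft value iteration). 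Your argument bypasses all of this structure: the three-term decomposition plus optimality kills the policy-comparison term outright, and the entropy cancellation reduces the remaining two terms to pure reward differences that the Lipschitz assumption controls directly. This is more elementary---it needs neither the soft sub-optimality lemma, nor the explicit MaxEnt optimal-policy form, nor the $Q^\star$ perturbation bound---and in fact yields the sharper constant $2L_r\|\omega_k-\omega_{k-1}\|_2/(1-\gamma)$ before you loosen it. The paper's route, on the other hand, reuses machinery (\cref{lemma:RewardToOptPolicy}, \cref{lemma:Soft_subopt}) that is needed elsewhere in the analysis (e.g., in \cref{lemma:mismatch_ratio_bound}), so the extra investment is not wasted in context.
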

\begin{proof}
\label{proof:value_diff_adj_ctx}
Let $V^{\pi}_\omega$ denote the value function of policy $\pi$ with reward function $r_\omega$. From \eqref{eq:sub_opt_lemma} of \cref{lemma:Soft_subopt}, we know that for any initial distribution $\rho$, we have
\begin{equation}
    \label{eq:softmax_spg_convergence_context_derivation0}
    V^{\pi^\star_{\omega_{k}}}_{\omega_{k}}(\rho) - V^{\pi^\star_{\omega_{k-1}}}_{\omega_{k}}(\rho) = \frac{1}{1-\gamma} \sum_s\brac{d_\rho^{\pi^\star_{\omega_{k-1}}}(s)\cdot\alpha\cdot D_{\mr{KL}}\paren{\pi^\star_{\omega_{k-1}}(\cdot|s)||\pi^\star_{\omega_{k}}(\cdot|s)}}.
\end{equation}
From \eqref{eq:MaxEntRLOptPolicy} of \cref{lemma:OptPolicyMaxEnt}, we know that
\begin{equation}
    \begin{split}
        \pi^\star_{\omega_{k-1}}(a|s) &= \brac{\mr{softmax}(Q^{\pi^\star_{\omega_{k-1}}}(\cdot,s)/\alpha)}_a := \frac{\exp\brac{Q^{\pi_{\omega_{k-1}}^\star}(s,a)/\alpha}}{\sum_{a^\prime}\exp\brac{Q^{\pi_{\omega_{k-1}}^\star}(s,a^\prime)/\alpha}}\\
        \pi^\star_{\omega_k}(a|s) &= \brac{\mr{softmax}(Q^{\pi^\star_{\omega}}(\cdot,s)/\alpha)}_a :=  \frac{\exp\brac{Q^{\pi_{\omega_k}^\star}(s,a)/\alpha}}{\sum_{a^\prime}\exp\brac{Q^{\pi_{\omega_k}^\star}(s,a^\prime)/\alpha}},
    \end{split}
\end{equation}
hence, we have
\begin{equation}
    \label{eq:softmax_spg_convergence_context_derivation1}
    \begin{split}
        &D_\mr{KL}\paren{\pi^\star_{\omega_{k-1}}(\cdot|s)||\pi^\star_{\omega_{k}}(\cdot|s)}\\
        = &\sum_a \pi_{\omega_{k-1}}^\star(a|s)\Brac{\log \paren{\brac{\mr{softmax}(Q^{\pi^\star_{\omega_{k-1}}}(a,s)/\alpha)}_a} - \log \paren{\brac{\mr{softmax}(Q^{\pi^\star_{\omega_{k}}}(a,s)/\alpha)}_a}}.
    \end{split}
\end{equation}
Let $f(\mb x)$ denote the log soft max function for an input vector $\mb x=[x_1,x_2,\dots,x_{A}]^\top$ such that $x_i\geq 0$, then for a small perturbation $\mb \Delta\in \bb R^{A}$, the intermediate value theorem implies
\begin{equation}
    \abs{\brac{f(\mb x+\mb \Delta)}_i- \brac{f(\mb x)}_i} = \abs{\mb \Delta^\top \nabla_{\mb z} \brac{f(\mb z)}_i}, 
\end{equation}
for some vector $\mb z$ on the segment $[\mb x, \mb x+\mb \Delta]$. Now consider the Jacobian of the log softmax function $\partial[\nabla_{\mb z}f(\mb z)]_i/\partial z_j$:
\begin{equation}
    \frac{\partial[\nabla_{\mb z}f(\mb z)]_i}{\partial z_j} = \begin{cases}
            1-p_i(\mb z)\in(0,1)  &  \text{ if }i=j,\\
            -p_j(\mb z)\in(-1,0) & \text{ otherwise},
    \end{cases}
\end{equation}
where $p_i(\mb z) = \exp(z_i)/\sum_{k=1}^A\exp(z_k)$. hence, we know that
\begin{equation}
    \label{eq:softmax_spg_convergence_context_derivation2}
    \begin{split}
        &\abs{\brac{f(\mb x+\mb \Delta)}_i- \brac{f(\mb x)}_i} = \abs{\mb \Delta^\top \nabla_{\mb z} \brac{f(\mb z)}_i}\leq \norm{\mb \Delta}{\infty}\sum_{k=1}^A\abs{\frac{\partial [f(\mb z)]_i}{\partial z_k}}\\
        =&\norm{\mb\Delta}{\infty}\paren{1-p_i(\mb z)+\sum_{j\neq i}p_j(\mb z)}\leq 2\norm{\mb \Delta}{\infty}.
    \end{split}
\end{equation}
Now let
\begin{equation}
    \begin{split}
        \mb x &= \frac{1}{\alpha}[Q^{\pi_{{\omega_{k-1}}}^\star}(s,a_1),Q^{\pi_{{\omega_{k-1}}}^\star}(s,a_2),\dots,Q^{\pi_{{\omega_{k-1}}}^\star}(s,a_A)],\\
        \mb x +\mb \Delta &= \frac{1}{\alpha}[Q^{\pi_{\omega_k}^\star}(s,a_1),Q^{\pi_{\omega_k}^\star}(s,a_2),\dots,Q^{\pi_{\omega_k}^\star}(s,a_{A})],
    \end{split}
\end{equation}
\eqref{eq:Q_diff_infty} from \cref{lemma:RewardToOptPolicy} implies that 
\begin{equation}
    \label{eq:softmax_spg_convergence_context_derivation3}
    \frac{1}{\alpha}\norm{Q^{\pi^\star_{\omega_k}}-Q^{\pi^\star_{\omega_{k-1}}}}{\infty}\leq\frac{L_r\norm{\omega_k-\omega_{k-1}}{2}}{\alpha(1-\gamma)},
\end{equation}
substituting \eqref{eq:softmax_spg_convergence_context_derivation3} and \eqref{eq:softmax_spg_convergence_context_derivation2} into \eqref{eq:softmax_spg_convergence_context_derivation1}, yields 
\begin{equation}
    \label{eq:softmax_spg_convergence_context_derivation4}
    D_\mr{KL}\paren{\pi^\star_{\omega_{k-1}}(\cdot|s)||\pi^\star_{\omega_{k}}(\cdot|s)} \leq \sum_a 2\pi_{\omega_{k-1}}^\star(a|s)\norm{\mb \Delta}{\infty}\leq 2\norm{\mb \Delta}{\infty}\leq \frac{2L_r\norm{\omega_k-\omega_{k-1}}{2}}{\alpha(1-\gamma)}.
\end{equation}
Combine \eqref{eq:softmax_spg_convergence_context_derivation4} with \eqref{eq:softmax_spg_convergence_context_derivation0}, we have
\begin{equation}
    \begin{split}
        V^{\pi^\star_{\omega_{k}}}_{\omega_{k}}(\rho) - V^{\pi^\star_{\omega_{k-1}}}_{\omega_{k}}(\rho) =& \frac{1}{1-\gamma} \sum_s\brac{d_\rho^{\pi^\star_{\omega_{k-1}}}(s)\cdot\alpha\cdot D_{\mr{KL}}\paren{\pi^\star_{\omega_{k-1}}(\cdot|s)||\pi^\star_{\omega_{k}}(\cdot|s)}}\\
        \leq& \frac{2L_r\norm{\omega_{k}-\omega_{k-1}}{2}}{(1-\gamma)^2},
    \end{split}
\end{equation}
which completes the proof.
\end{proof}
Now we are ready to proceed to the proof of \cref{thm:softmax_spg_convergence_context}.

\begin{proof}
\label{proof:softmax_spg_convergence_context}
From \eqref{eq:complexity_constants} we know that
\begin{equation}
    \eps_0 = \min\Brac{\paren{\frac{\alpha \min_{s\in \mc S}\rho(s)}{6\ln 2}}^2\brac{\zeta\exp\paren{-\frac{1}{(1-\gamma)\alpha}}}^4,1}=O\paren{\frac{1}{S^2}}.
\end{equation}
And from Section 6.2 of \cite{ding2021beyond}, we can directly enter phase 2 of the stochastic PG when 
\begin{equation}
    V^{\pi^\star_{\omega_{k}}}_{\omega_{k}}(\rho) - V^{\pi^\star_{\omega_{k-1}}}_{\omega_{k}}(\rho)\leq \eps_0.
\end{equation}
Hence, when $\Delta_\omega^k = \max_{1\leq i\leq k}\norm{\omega_{i}-\omega_{i-1}}{2} = O(1/S^2)$, we have
\begin{equation}
    \label{eq:final_bound/2}
    V^{\pi^\star_{\omega_{k}}}_{\omega_{k}}(\rho) - V^{\pi^\star_{\omega_{k-1}}}_{\omega_{k}}(\rho)\leq \frac{2L_r\Delta_\omega}{(1-\gamma)^2}\leq\frac{\eps_0}{2},
\end{equation}
which implies we can directly enter phase 2 and enjoys the faster iteration complexity of $T_2 = \Omega\paren{S}$ (by choosing $\delta = O(S^{-1})$) and the smaller batch size of
\begin{equation}
    \label{eq:beta-discussion}
    \begin{split}
        B_2 \geq \frac{\sigma^2\ln(T_2+t_0)}{6 C_\zeta \delta \eps} \overset{(i)}{=} \wt{\Omega}\paren{\frac{L_r}{\alpha(1-\beta)}\Delta^k_{\omega} S^5} \overset{(ii)}{=} \wt{\Omega}\paren{\frac{L_r}{\alpha(1-\beta)}S^3},
    \end{split}
\end{equation}
where equation $(i)$ holds by applying \cref{lemma:mismatch_ratio_bound} to \eqref{eq:C-zeta}:
\begin{equation*}
    \frac{\sigma^2\ln(T_2+t_0)}{6 C_\zeta \delta \eps} = \wt{\Omega}\paren{S^4\cdot\norm{d_{\mu_{k}}^{\pi^\star_{\omega_{k}}}/\mu_{k}}{\infty}}=\wt{\Omega}\paren{\frac{L_r}{\alpha(1-\beta)}\Delta^k_{\omega} S^5},
\end{equation*}
and equality $(ii)$ holds by the assumption that $\Delta^k_{\omega}= O(S^{-2})$ and we omit the $\log$ term and components not related to $S$ in $\wt{\Omega}$.
\end{proof}
\subsection{Total Complexity of \ours{}}
\label{subsec:appendix-learning-final-context}

\begin{theorem}[\cref{thm:total_complexity-Main}: Total Complexity of Learning the Target Context]
\label{thm:total_complexity}
Suppose \cref{assump:LipReward} and \cref{assump:CloseEnoughContext} hold, and $\theta^{(0)}_0$ is an near-optimal initialization, then the total number of iteration of learning $\pi^\star_{\omega_K}$ using \cref{algo:PGSPG_context} is $\Omega(KS)$ and the per iteration is $\wt{\Omega}\paren{S^{3}}$, with high probability. 
\end{theorem}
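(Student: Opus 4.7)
The plan is induction on the context index $k$, invoking \cref{thm:softmax_spg_convergence_context-Main} at every step. The invariant I would maintain is that after \ours{} finishes context $\omega_k$, it has returned a policy $\pi_{\theta^\star_{\omega_k}}$ that is $\eps$-optimal for $\omega_k$ with $\eps$ chosen much smaller than $\eps_0$, while the inner loop for $\omega_k$ used at most $\wt{\Omega}(S)$ iterations and $\wt{\Omega}(S^3)$ samples per iteration. The base case $k=0$ is immediate: \cref{assump:CloseEnoughContext} gives a near-optimal $\theta_0^{(0)}$ for $\omega_0$ with $\mu_0=\rho$, so line 3 of \cref{algo:PGSPG_context} starts directly in phase 2 of the two-phase analysis, and \cref{coro:spg_complexity} yields the claimed per-context cost.

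For the inductive step from $k-1$ to $k$, set $\theta_0^{(k)} = \theta^\star_{\omega_{k-1}}$ and $\mu_k = \beta d^{\pi^\star_{\omega_{k-1}}}_{\mu_{k-1}} + (1-\beta)\rho$ as in lines 5--6, and decompose the initial suboptimality under $\omega_k$ as
\begin{align*}
V^{\pi^\star_{\omega_k}}_{\omega_k}(\rho) - V^{\pi_{\theta_0^{(k)}}}_{\omega_k}(\rho) \leq \underbrace{V^{\pi^\star_{\omega_k}}_{\omega_k}(\rho) - V^{\pi^\star_{\omega_{k-1}}}_{\omega_k}(\rho)}_{(a)} + \underbrace{V^{\pi^\star_{\omega_{k-1}}}_{\omega_k}(\rho) - V^{\pi_{\theta_0^{(k)}}}_{\omega_k}(\rho)}_{(b)}.
\end{align*}
Term $(a)$ is bounded directly by \cref{lemma:value_diff_adj_ctx}, giving $(a) \leq 2L_r\norm{\omega_k-\omega_{k-1}}{2}/(1-\gamma)^2$. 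For term $(b)$, I would add and subtract $V^{\pi}_{\omega_{k-1}}(\rho)$ for each of the two policies and invoke \cref{assump:LipReward} twice to bound the two cross-context differences by $L_r\norm{\omega_k-\omega_{k-1}}{2}/(1-\gamma)$ each, leaving only the inductive $\eps$-suboptimality of $\theta_0^{(k)}$ under $\omega_{k-1}$. Combining with \cref{assump:CloseEnoughContext} ($\norm{\omega_k-\omega_{k-1}}{2}=O(S^{-2})$) and the scaling $\eps_0=\Theta(S^{-2})$ from \cref{coro:spg_complexity}, both summands can be forced below $\eps_0/2$ once $\eps$ is chosen small enough, so $\theta_0^{(k)}$ satisfies \cref{def:near-optimal-init} for $\omega_k$ and \cref{thm:softmax_spg_convergence_context-Main} applies. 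The resulting sample bound is uniform in $k$ because \cref{lemma:mismatch_ratio_bound} controls the density-mismatch ratio $\norm{d^{\pi^\star_{\omega_k}}_{\mu_k}/\mu_k}{\infty}$ under the \ours{} update of $\mu_k$.

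Summing across $k = 0, 1, \ldots, K$ yields $\Omega(KS)$ total iterations with per-iteration sample complexity $\wt{\Omega}(S^3)$. The high-probability guarantee follows by applying \cref{thm:spg-complexity} at each step with failure probability $\delta/(K+1)$ and taking a union bound. The main obstacle is exactly term $(b)$ above: in practice we only have an approximately optimal policy for $\omega_{k-1}$ rather than the exact $\pi^\star_{\omega_{k-1}}$, and this slack must not compound across $K$ rounds. Controlling this requires picking the per-context tolerance $\eps$ polynomially small in $1/S$ and independent of $K$, so that the triangle-inequality slack in $(b)$ stays below $\eps_0$ at every step, and then checking that the extra logarithmic factors this creates (both in the per-iteration batch size of \cref{thm:spg-complexity} and in the union bound over $K+1$ calls) are absorbed into $\wt{\Omega}(\cdot)$ without disturbing the polynomial dependence on $S$.
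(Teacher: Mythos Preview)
Your proposal is correct and follows essentially the same route as the paper: induction on $k$, with the two-term decomposition into the context-shift term~$(a)$ handled by \cref{lemma:value_diff_adj_ctx} and the carry-over term~$(b)$ handled by the inductive hypothesis, so that each inner call starts in phase~2 and \cref{thm:softmax_spg_convergence_context-Main} supplies the per-context $\wt{\Omega}(S)$ iteration and $\wt{\Omega}(S^3)$ sample bounds. Your treatment of~$(b)$---inserting $V^\pi_{\omega_{k-1}}(\rho)$ for each policy and using \cref{assump:LipReward} to absorb the two cross-context remainders---is in fact more explicit than the paper's own argument, which writes line~5 of \cref{algo:PGSPG_context} as if $\theta^\star_{\omega_{k-1}}$ were the exact optimum and effectively treats~$(b)$ as zero; your version makes the approximation slack visible and shows why it does not compound. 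The explicit union bound over $K{+}1$ calls is likewise something the paper leaves implicit.
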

\begin{proof}
\label{proof:total_complexity}
From \cref{lemma:value_diff_adj_ctx}, we know that 
\begin{equation}
    V^{\pi^\star_{\omega_{k}}}_{\omega_{k}}(\rho) - V^{\pi^\star_{\omega_{k-1}}}_{\omega_{k}}(\rho)\leq \frac{2L_r\norm{\omega_k-\omega_{k-1}}{2}}{(1-\gamma)^2}.
\end{equation}
Suppose for each context $\omega_k$, we initialize the parameters of the policy as $\theta_0^{(k)}=\theta^\star_{\omega_{k-1}}$, and let $\theta_t^{(k)}$ denote the parameters at the $t^\text{th}$ iteration of SPG. We will use induction to show that when $t=\wt{\Omega}(S)$, $\forall k\in[K]$, we have 
\begin{equation}
    \label{eq:induction_result}
    V^{\pi^\star_{\omega_{k}}}_{\omega_{k}}(\rho) - V^{\pi_{\theta_t^{(k-1)}}}_{\omega_{k}}(\rho)<\eps_0,
\end{equation}
this implies that for any context $\omega_{k},k\in [K]$, we can always find a good initialization by setting $\theta_0^{(k)}=\theta_t^{(k-1)}$ from learning $\pi^\star_{\omega_{k-1}}$ using SPG after $t=\Omega(S)$ iteration. This result guarantees that every initialization $\theta_0^{(k)}$ for learning the optimal contextual policy $\pi_{\omega_k}^\star$ will directly start from the efficient phase 2.\\

{\bf Induction: $k=0$. } When $k=0$, \cref{assump:CloseEnoughContext} and the near-optimal initialization (\cref{def:near-optimal-init}) of $\theta_0^{(0)}$ implies that \begin{equation}
    V^{\pi^\star_{\omega_{0}}}_{\omega_{0}}(\rho) - V^{\pi_{\theta_0^{(0)}}}_{\omega_{0}}(\rho)<\eps_0.
\end{equation}
This result implies that a near-optimal initialization allows the initialization to directly start from phase 2 of SPG.

{\bf Induction: from $k-1$ to $k$. }
Suppose the result in \eqref{eq:induction_result} holds for $k-1$, then we know that 
\begin{equation}
    V^{\pi^\star_{\omega_{k-1}}}_{\omega_{k-1}}(\rho) - V^{\pi_{\theta_0^{(k-1)}}}_{\omega_{k-1}}(\rho) = V^{\pi^\star_{\omega_{k-1}}}_{\omega_{k-1}}(\rho) - V^{\pi_{\theta_t^{(k-2)}}}_{\omega_{k-1}}(\rho)<\eps_0.    
\end{equation}
Select $\eps$ such that $\eps\leq\eps_0/2$. \cref{thm:softmax_spg_convergence_context} suggests that when $t^\prime=\tilde{\Omega}(S)$, with high probability, we have
\begin{equation}
    \label{eq:induction_derivation_1}
    V^{\pi^\star_{\omega_{k}}}_{\omega_{k}}(\rho) - V^{\pi_{\theta_{t^\prime}^{(k-1)}}}_{\omega_{k}}(\rho)<\eps\leq \frac{\eps_0}{2}.
\end{equation}
Hence, if we initialize $\theta_0^{(k)} = \theta_{t}^{(k-1)}$, with high probability when $t^\prime=\wt{\Omega}(S)$, we have
\begin{equation}
    \begin{split}
        &V^{\pi^\star_{\omega_{k}}}_{\omega_{k}}(\rho) - V^{\pi_{\theta_{t^\prime}^{(k-1)}}}_{\omega_{k}}(\rho) = V^{\pi^\star_{\omega_{k}}}_{\omega_{k}}(\rho) - V^{\pi^\star_{\omega_{k-1}}}_{\omega_{k}}(\rho) + V^{\pi^\star_{\omega_{k-1}}}_{\omega_{k}}(\rho) - V^{\pi_{\theta_{t^\prime}^{(k-1)}}}_{\omega_{k}}(\rho)\\
        \overset{(i)}{\leq} & \frac{\eps_0}{2} + V^{\pi^\star_{\omega_{k}}}_{\omega_{k}}(\rho) - V^{\pi_{\theta_{t^\prime}^{(k-1)}}}_{\omega_{k}}(\rho)\overset{(ii)}{<}\eps_0,
    \end{split}
\end{equation}
where inequality $(i)$ holds by equation~\eqref{eq:final_bound/2} in \cref{thm:softmax_spg_convergence_context}, inequality $(ii)$ holds because of the induction assumption in~\eqref{eq:induction_derivation_1}. 

Therefore, we have shown \eqref{eq:induction_result} holds for $t=\wt{\Omega}(S)$, $\forall k\in[K]$. Since we have $K$ contexts in total, we know that \cref{algo:PGSPG_context} can enforce a good initialization $\theta_0^{(k)}$ that directly starts from phase 2 for learning all $\pi^\star_{\omega_k}$, and for each $k\in [K]$, the iteration complexity is $\wt{\Omega}(S)$. Hence the total iteration complexity of obtaining an $\eps$-optimal policy for the final context $\omega_{K}$ is $\wt{\Omega}\paren{KS}$, with per iteration sample complexity of $\wt{\Omega}\paren{S^3}$.
\end{proof}
\section{Key Lemmas}
\subsection{Optimal Policy of Maximum Entropy RL \cite{nachum2017bridging}}
\begin{lemma}
\label{lemma:OptPolicyMaxEnt}
The optimal policy $\pi^\star$ that maximizes the $\alpha$-MaxEnt RL objective \eqref{eq:MaxEntRLV} with penalty term $\alpha$ satisfies:
\begin{align}
    \label{eq:MaxEntRLOptPolicy}
    \pi^\star(a|s) = \exp\left[\paren{Q^{\pi^\star}(s, a) - V^{\pi^\star}(s)}/ \alpha\right] = \frac{\exp \paren{Q^{\pi^\star}(s,a)/\alpha}}{\sum_a\exp\paren{Q^{\pi^\star}(s,a)/\alpha}}
\end{align}
for all $h \in \bb N$, where 
\begin{equation}
    \label{eq:MaxEntRLOptV}
    \begin{split}
        Q^{\pi^\star}(s, a) &:= r(s, a)  + \gamma \mathbb{E}_{s' \sim P(s' | s, a)} V(s') \\
        V^{\pi^\star}(s) &:= \alpha\log \left(\sum_a \exp\left(Q^{\pi^\star}(s, a) / \alpha\right)\right).
    \end{split}
\end{equation}
\end{lemma}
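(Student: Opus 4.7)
The plan is to derive the softmax form as the solution to a per-state constrained optimization problem obtained from the entropy-regularized Bellman optimality equation. First, I would set up the entropy-regularized Bellman operator $\mathcal T$ acting on value functions $V: \mc S \to \R$ by
\begin{equation*}
    (\mathcal T V)(s) = \max_{\pi(\cdot|s) \in \mc P(\mc A)} \sum_a \pi(a|s)\bigl[r(s,a) + \gamma\, \E_{s'\sim P(\cdot|s,a)}[V(s')] - \alpha \log \pi(a|s)\bigr],
\end{equation*}
verify that $\mathcal T$ is a $\gamma$-contraction in $\|\cdot\|_\infty$ (the entropy term is bounded uniformly in $\pi$ for fixed action support, and the standard contraction argument carries over with the shift by an $\alpha$-entropy bonus), and invoke Banach's fixed point theorem to obtain a unique $V^\star$ with $V^\star = \mathcal T V^\star$. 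Defining $Q^\star(s,a) := r(s,a) + \gamma \E_{s'\sim P(\cdot|s,a)}[V^\star(s')]$ then puts the Bellman equation into the form $V^\star(s) = \max_{\pi(\cdot|s)} \sum_a \pi(a|s) Q^\star(s,a) - \alpha \sum_a \pi(a|s)\log\pi(a|s)$.

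Next, I would solve this per-state concave maximization over the probability simplex. Because the entropy term is strictly concave and the linear term is continuous, the maximizer is unique. Using Lagrange multiplier $\lambda$ for the constraint $\sum_a \pi(a|s) = 1$ (the non-negativity constraints are inactive at the optimum since $\pi \log \pi \to 0^+$ forces interior solutions), the KKT stationarity condition reads
\begin{equation*}
    Q^\star(s,a) - \alpha \log \pi^\star(a|s) - \alpha - \lambda = 0 \quad \Longrightarrow \quad \pi^\star(a|s) \propto \exp\!\bigl(Q^\star(s,a)/\alpha\bigr).
\end{equation*}
Normalization over $a$ yields the right-most expression of \eqref{eq:MaxEntRLOptPolicy}, and substituting this $\pi^\star$ back into the Bellman equation gives the log-sum-exp formula $V^\star(s) = \alpha \log \sum_a \exp(Q^\star(s,a)/\alpha)$ stated in \eqref{eq:MaxEntRLOptV}. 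The first equality in \eqref{eq:MaxEntRLOptPolicy} then follows by rewriting $\pi^\star(a|s) = \exp(Q^\star(s,a)/\alpha)/\exp(V^\star(s)/\alpha) = \exp((Q^\star(s,a) - V^\star(s))/\alpha)$.

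Finally, I would verify that the policy $\pi^\star$ so constructed is indeed optimal for the entropy-regularized objective $V^\pi$ in \eqref{eq:MaxEntRLV}. This step uses a standard telescoping (performance-difference) argument for the soft value function: for any competing policy $\pi$, the per-state sub-optimality can be written in terms of $\sum_s d^\pi_\rho(s)\cdot\alpha\cdot D_\mr{KL}(\pi(\cdot|s)\,\|\,\pi^\star(\cdot|s))$, which is non-negative (and is exactly the identity invoked as \eqref{eq:sub_opt_lemma} of \cref{lemma:Soft_subopt} elsewhere in the paper); hence $V^{\pi^\star}(\rho) \geq V^\pi(\rho)$ for every $\pi$ and every initial distribution $\rho$, matching the notation $Q^{\pi^\star}, V^{\pi^\star}$ in the statement.

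The main obstacle is less the algebra and more the careful setup: establishing the contraction property of the soft Bellman operator (so that $V^\star$ is well-defined and the softmax policy is consistent with $V^{\pi^\star}, Q^{\pi^\star}$ as defined recursively) and confirming that the per-state maximizer is an interior point so that the Lagrangian analysis actually delivers the softmax rather than a boundary policy. Both are standard but need to be stated cleanly for the lemma to be rigorous.
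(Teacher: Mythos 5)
Your proposal is correct and its core step---the per-state stationarity condition $Q^{\pi^\star}(s,a)-\alpha\log\pi(a|s)-\alpha-\lambda=0$ yielding $\pi^\star(a|s)\propto\exp(Q^{\pi^\star}(s,a)/\alpha)$---is exactly the computation the paper performs in its proof of \cref{lemma:OptPolicyMaxEnt}. The additional scaffolding you supply (contraction of the soft Bellman operator and the KL-based global-optimality check) is sound and corresponds to material the paper keeps in separate lemmas (\cref{lemma:ContractionSoftVI} and \cref{lemma:Soft_subopt}) rather than inside this proof.
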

\begin{proof}
Similar proof appears in~\citep{nachum2017bridging}, we provide the proof for completeness. At the optimal policy $\pi_\theta = \pi^\star$, take the gradient of \eqref{eq:MaxEntRLV} w.r.t. $p\in \Delta(\mc A)$ and set it to 0, we have
\begin{equation}
    \frac{\partial}{\partial p(a)}\brac{ \sum_{a\in \mc A}p(a)\paren{Q^{\pi^\star}(s,a) - \alpha\ln p(a)}} = Q^{\pi^\star}(s,a)-\alpha\ln p(a)-\alpha=0,
\end{equation}
which implies 
\begin{equation}
    p(a) = \exp\paren{\frac{Q^{\pi^\star}(s,a)}{\alpha}-1}\propto \exp\paren{\frac{Q^{\pi^\star}(s,a)}{\alpha}}.
\end{equation}
Hence, we conclude that $\pi^\star(a|s)\propto \exp(Q^\star(s,a)/\alpha)$.
\end{proof}
\subsection{Bounding the Difference between Optimal Policies}
\begin{lemma}
\label{lemma:RewardToOptPolicy}
Suppose \cref{assump:LipReward} holds, let $\pi^\star_\omega(a|s),\pi^\star_{\omega^\prime}(a|s)$ denote the optimal policy for $\alpha$-MaxEnt RL \eqref{eq:MaxEntRLOptPolicy}, then $\forall (s,a)\in \mc S\times\mc A$, the optimal policies of $\alpha$-MaxEnt RL under context $\omega,\omega^\prime$ satisfy:
\begin{equation}
    \label{eq:OmegaToPi}
    \abs{\pi^\star_\omega(a|s)-\pi^\star_{\omega^\prime}(a|s)} \leq \frac{L_r\norm{\omega-\omega^\prime}{2}}{\alpha(1-\gamma)}.
\end{equation}
\end{lemma}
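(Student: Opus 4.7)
The plan is to reduce the bound on the policy difference to a bound on the optimal soft Q-functions, since Lemma~\ref{lemma:OptPolicyMaxEnt} gives $\pi^\star_\omega(a|s) \propto \exp(Q^{\pi^\star_\omega}(s,a)/\alpha)$. First I would establish the intermediate claim
\[
\norm{Q^{\pi^\star_\omega}-Q^{\pi^\star_{\omega'}}}{\infty}\;\leq\;\frac{L_r\norm{\omega-\omega'}{2}}{1-\gamma},
\]
which is exactly the factor that shows up (as $\eqref{eq:softmax_spg_convergence_context_derivation3}$) in the proof of Lemma~\ref{lemma:value_diff_adj_ctx}. Second, I would use a direct Jacobian calculation on the softmax to pass from the $Q$-function bound to the policy bound.

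For the first step, define the soft Bellman optimality operator
\[
(T_\omega Q)(s,a)\;:=\;r_\omega(s,a)+\gamma\,\E_{s'\sim P(\cdot|s,a)}\!\brac{\alpha\log\sum_{a'}\exp(Q(s',a')/\alpha)}.
\]
Two standard facts give the result: (i) the log-sum-exp functional $\alpha\log\sum_{a'}\exp(\cdot/\alpha)$ is $1$-Lipschitz with respect to $\norm{\cdot}{\infty}$, so $T_\omega$ is a $\gamma$-contraction in $\norm{\cdot}{\infty}$; (ii) by Lemma~\ref{lemma:OptPolicyMaxEnt}, $Q^{\pi^\star_\omega}$ is the unique fixed point of $T_\omega$. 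Combining these with the triangle inequality
\[
\norm{T_\omega Q^{\pi^\star_\omega}-T_{\omega'}Q^{\pi^\star_{\omega'}}}{\infty}
\leq \gamma\norm{Q^{\pi^\star_\omega}-Q^{\pi^\star_{\omega'}}}{\infty}+\norm{r_\omega-r_{\omega'}}{\infty},
\]
rearranging, and invoking Assumption~\ref{assump:LipReward} ($\norm{r_\omega-r_{\omega'}}{\infty}\leq L_r\norm{\omega-\omega'}{2}$) delivers the stated $Q$-function bound.

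For the second step, fix an arbitrary state $s$ and set $\mathbf{x}=Q^{\pi^\star_\omega}(s,\cdot)/\alpha$ and $\mathbf{x}+\mathbf{\Delta}=Q^{\pi^\star_{\omega'}}(s,\cdot)/\alpha$, so that $\norm{\mathbf{\Delta}}{\infty}\leq L_r\norm{\omega-\omega'}{2}/(\alpha(1-\gamma))$. Writing $g_i(\mathbf{z})=\exp(z_i)/\sum_k\exp(z_k)$ for the softmax, a direct computation gives $\partial g_i/\partial z_j=g_i(\delta_{ij}-g_j)$, and hence
\[
\sum_{j}\abs{\partial g_i/\partial z_j}\;=\;g_i(1-g_i)+g_i\sum_{j\neq i}g_j\;=\;2g_i(1-g_i)\;\leq\;1.
\]
By the mean value theorem applied along the segment $[\mathbf{x},\mathbf{x}+\mathbf{\Delta}]$, we obtain $\abs{g_i(\mathbf{x}+\mathbf{\Delta})-g_i(\mathbf{x})}\leq \norm{\mathbf{\Delta}}{\infty}$, which specialized to $i=a$ yields the claim.

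I expect no serious obstacle; the only place worth being careful is the contraction argument for $T_\omega$ (step one), since the soft operator differs from the usual hard-max Bellman operator and one must verify that log-sum-exp does not enlarge $\ell^\infty$ distances. Everything else is routine calculus on the softmax and a single triangle-inequality-plus-fixed-point maneuver.
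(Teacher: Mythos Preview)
Your proposal is correct and follows essentially the same two-step approach as the paper: first bound $\norm{Q^{\pi^\star_\omega}-Q^{\pi^\star_{\omega'}}}{\infty}$ via the $\gamma$-contraction of the soft Bellman operator (the paper isolates this as Lemma~\ref{lemma:ContractionSoftVI}), then push the bound through the softmax via its Jacobian and the mean value theorem. The only cosmetic difference is that you obtain the $Q$-function bound by a one-line fixed-point/triangle-inequality rearrangement, whereas the paper runs an explicit iterative argument $\eps_{t+1}\leq\norm{r_\omega-r_{\omega'}}{\infty}+\gamma\eps_t$ and takes $t\to\infty$; your route is slightly cleaner but equivalent.
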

\begin{proof}
\label{proof:RewardToOptPolicy}
From \cref{lemma:ContractionSoftVI}, we know that the soft value iteration 
\begin{equation}
    \mc T Q(s,a) = r(s,a) +\gamma\alpha \bb E_{s^\prime}\brac{\log \sum_{a^\prime}\exp Q(s^\prime,a^\prime)/\alpha}
\end{equation}
is a contraction. Let $Q^t_\omega,Q^t_{\omega^\prime}$ denote the Q functions at the $t^\text{th}$ value iteration under context $\omega,\omega^\prime$ respectively, we know $Q^\infty_\omega = Q^{\pi^\star}_\omega$ and $Q^\infty_{\omega^\prime} = Q^{\pi^\star}_{{\omega^\prime}}$. Let $\eps_t = \norm{Q_\omega^t-Q_{\omega^\prime}^t}{\infty}$, then we have
\begin{equation}
    \label{eq:Qstar_diff_derivation1}
    \begin{split}
        &\eps_{t+1} = \norm{Q^{t+1}_\omega -  Q^{t+1}_{\omega^\prime}}{\infty}\\
        = &\norm{r_\omega(s,a) - r_{\omega^\prime}(s,a) +\gamma\alpha \bb E_{s^\prime}\brac{\log \sum_{a^\prime}\exp \frac{Q^t_{\omega^\prime}(s^\prime,a^\prime)}{\alpha}} - \gamma\alpha \bb E_{s^\prime}\brac{\log \sum_{a^\prime}\exp \frac{Q^t_{\omega^\prime}(s^\prime,a^\prime)}{\alpha}}}{\infty}\\
        \leq &\norm{r_\omega - r_{\omega^\prime}}{\infty} + \gamma \alpha\norm{\bb E_{s^\prime}\log\sum_{s^\prime}\exp Q_\omega^t(s^\prime,a^\prime)/\alpha-\bb E_{s^\prime}\log\sum_{s^\prime}\exp Q_{\omega^\prime}^t(s^\prime,a^\prime)/\alpha}{\infty}\\
        \leq& \norm{r_\omega-r_{\omega^\prime}}{\infty} + \gamma \norm{Q_\omega^t-Q_{\omega^\prime}^t}{\infty} = \norm{r_\omega - r_{\omega^\prime}}{\infty} + \gamma \eps_t,
    \end{split}
\end{equation}
where the last inequality holds because $f(\mb x)=\log\sum_{i=1}^n\exp(x_i)$ is a contraction. From \eqref{eq:Qstar_diff_derivation1}, we have
\begin{equation}
    \eps_{t+1} \leq \norm{r_\omega-r_{\omega^\prime}}{\infty} + \gamma \eps_t \leq (1+\gamma)\norm{r_\omega-r_{\omega^\prime}}{\infty} + \gamma^2 \eps_{t-1} \leq \dots \leq \norm{r_\omega-r_{\omega^\prime}}{\infty} \sum_{i=0}^t\gamma^i +\gamma^t\eps_1,
\end{equation}
which implies 
\begin{equation}
    \norm{Q_\omega^{\pi^\star} - Q_{\omega^\prime}^{\pi^\star}}{\infty} = \eps_\infty \leq \frac{\norm{r_\omega-r_{\omega^\prime}}{\infty}}{1-\gamma}\leq \frac{L_r\norm{\omega-\omega^\prime}{2}}{1-\gamma},
\end{equation}
where the last inequality holds by \cref{assump:LipReward}. Hence, we have
\begin{equation}
    \label{eq:Q_diff}
    \frac{1}{\alpha}\abs{Q^{\pi_\omega^\star}(s,a) - Q^{\pi_{\omega^\prime}^\star}(s,a)}\leq\frac{L_r\norm{\omega-\omega^\prime}{2}}{\alpha(1-\gamma)},\; \forall s,a\in \mc S\times \mc A
\end{equation}
which implies
\begin{equation}
    \label{eq:Q_diff_infty}
   \frac{1}{\alpha}\norm{Q^{\pi^\star_\omega}-Q^{\pi^\star_{\omega^\prime}}}{\infty}\leq\frac{L_r\norm{\omega-\omega^\prime}{2}}{\alpha(1-\gamma)}.
\end{equation}
Next, let $\pi^\star_{\omega},\pi^\star_{\omega^\prime}$ denote the maximum entropy policy RL under context $\omega, \omega^\prime$ respectively. Then for a fixed state action pair $(s,a)\in \mc S\times \mc A$, we have 
\begin{equation}
    \begin{split}
        \pi^\star_\omega(a|s) &= \brac{\mr{softmax}(Q^{\pi^\star_{\omega}}(\cdot,s)/\alpha)}_a :=  \frac{\exp\brac{Q^{\pi_\omega^\star}(s,a)/\alpha}}{\sum_{a^\prime}\exp\brac{Q^{\pi_\omega^\star}(s,a^\prime)/\alpha}},\\
        \pi^\star_{\omega^\prime}(a|s) &= \brac{\mr{softmax}(Q^{\pi^\star_{\omega^\prime}}(\cdot,s)/\alpha)}_a := \frac{\exp\brac{Q^{\pi_{\omega^\prime}^\star}(s,a)/\alpha}}{\sum_{a^\prime}\exp\brac{Q^{\pi_{\omega^\prime}^\star}(s,a^\prime)/\alpha}},
    \end{split}
\end{equation}
where $Q^{\pi^\star_{\omega}}(\cdot,s),Q^{\pi^\star_{\omega^\prime}}(\cdot,s)\in \bb R^{A}$, and we want to bound $\abs{\pi_\omega^\star(a|s) - \pi_{\omega^\prime}^\star(a|s)}
$. Next we will use \eqref{eq:Q_diff_infty} to bound $\abs{\pi_{\omega}^\star(a|s) - \pi^\star_{\omega^\prime}(a|s)}$, where the last inequality holds by \eqref{eq:Q_diff}. Let $f(\mb x)$ denote the softmax function for an input vector $\mb x=[x_1,x_2,\dots,x_{A}]^\top$ such that $x_i\geq 0$, then for a small perturbation $\mb \Delta\in \bb R^{A}$, the intermediate value theorem implies
\begin{equation}
    \abs{\brac{f(\mb x+\mb \Delta)}_i- \brac{f(\mb x)}_i} = \abs{\mb \Delta^\top \nabla_{\mb x} \brac{f(\mb z)}_i}, 
\end{equation}
for some vector $\mb z$ on the segment $[\mb x, \mb x+\mb \Delta]$. Hence 
\begin{equation}
    \label{eq:f_FirstOrderBound}
    \begin{split}
        &\abs{\brac{f(\mb x+\mb \Delta)}_i- \brac{f(\mb x)}_i}= \abs{\mb \Delta^\top\brac{\nabla_{\mb x}f(\mb z)}_i}\leq \norm{\mb \Delta}{\infty}\sum_{k=1}^A\abs{\frac{\partial [f(\mb z)]_i}{\partial z_k}}\\
        \leq&\norm{\mb \Delta}{\infty}\paren{p_i(\mb z)(1-p_i(\mb z))+\sum_{j\neq i}p_i(\mb z)p_j(\mb z)}<\norm{\mb\Delta}{\infty}\paren{p_i(\mb z)+\sum_{j\neq i}p_j(\mb z)}=\norm{\mb\Delta}{\infty},
    \end{split}
\end{equation}
where the Jacobian of the softmax function $\partial \brac{\nabla_{\mb x}f(\mb z)}_i/\partial z_j$ satisfies:
\begin{equation}
    \frac{\partial \brac{\nabla_{\mb x}f(\mb z)}_i }{\partial z_j }=
    \begin{cases}
        p_i(\mb z)(1-p_i(\mb z))& \text{ if }i=j,\\
        p_i(\mb z)p_j(\mb z)& \text{ otherwise},\\
    \end{cases}
\end{equation}
and $p_i(\mb z) = \exp(z_i)/\sum_{k=1}^A\exp(z_k)$. Now let
\begin{equation}
    \begin{split}
        \mb x &= \frac{1}{\alpha}[Q^{\pi_\omega^\star}(s,a_1),Q^{\pi_\omega^\star}(s,a_2),\dots,Q^{\pi_\omega^\star}(s,a_{A})],\\
        \mb x+\mb \Delta &= \frac{1}{\alpha}[Q^{\pi_{\omega^\prime}^\star}(s,a_1),Q^{\pi_{\omega^\prime}^\star}(s,a_2),\dots,Q^{\pi_{\omega^\prime}^\star}(s,a_A)].
    \end{split}
\end{equation}
We know that $f(\mb x) = \pi_\omega^\star(a|s)$ and $f(\mb x+\mb \Delta) = \pi_{\omega^\prime}^\star(a|s)$. Then \eqref{eq:Q_diff_infty} implies that 
\begin{equation}
    \norm{\mb \Delta}{\infty}\leq\frac{L_r\norm{\omega-\omega^\prime}{2}}{\alpha(1-\gamma)}, 
\end{equation}
substituting this bound on $\norm{\mb \Delta}{\infty}$ into \eqref{eq:f_FirstOrderBound}, we have 
\begin{equation}
    \abs{\pi^\star_\omega(a|s)-\pi^\star_{\omega^\prime}(a|s)} = \abs{f(\mb x) - f(\mb x + \mb \Delta)} \leq \norm{\mb \Delta}{\infty}\leq\frac{L_r\norm{\omega-\omega^\prime}{2}}{\alpha(1-\gamma)},
\end{equation}
which completes the proof.
\end{proof}
\subsection{Soft Sub-Optimality lemma (Lemma 25 \& 26 of \cite{mei2020global})}
\begin{lemma}
\label{lemma:Soft_subopt}
For any policy $\pi$ and any initial distribution $\rho$, the value function $V^\pi(\rho)$ of the $\alpha$-MaxEnt RL \eqref{eq:MaxEntRLOptV} satisfies:
\begin{equation}
    \label{eq:sub_opt_lemma}
    V^{\pi^\star}(\rho) - V^{\pi}(\rho) = \frac{1}{1-\gamma} \sum_s\brac{d_\rho^\pi(s)\cdot\alpha\cdot D_{\mr{KL}}\paren{\pi(\cdot|s)||\pi^\star(\cdot|s)}},
\end{equation}
where $\pi^\star$ is the optimal policy of the $\alpha$-MaxEnt RL \eqref{eq:MaxEntRLV}.
\end{lemma}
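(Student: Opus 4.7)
The plan is to adapt the classical performance-difference-lemma argument to the entropy-regularized value $V^\pi(\rho) = \bb E\brac{\sum_{h\geq 0}\gamma^h\paren{r(s_h,a_h) - \alpha\log\pi(a_h|s_h)}}$, and then collapse the resulting per-step summand into a KL divergence using the closed-form characterization of $\pi^\star$ provided by \cref{lemma:OptPolicyMaxEnt}.

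First, I would roll out $V^\pi(s_0)$ along a trajectory drawn from $\pi$ starting at $s_0$, add and subtract $V^{\pi^\star}(s_t)$ inside each time step, and use the trivial telescoping identity $V^{\pi^\star}(s_0) = \bb E_{\tau\sim\pi|s_0}\brac{\sum_{t\geq 0}\gamma^t\paren{V^{\pi^\star}(s_t) - \gamma V^{\pi^\star}(s_{t+1})}}$. After recognizing $\bb E_{s_{t+1}|s_t,a_t}\brac{r(s_t,a_t) + \gamma V^{\pi^\star}(s_{t+1})} = Q^{\pi^\star}(s_t,a_t)$, rearranging yields
\begin{equation*}
V^\pi(s_0) - V^{\pi^\star}(s_0) = \bb E_{\tau\sim\pi|s_0}\brac{\sum_{t\geq 0}\gamma^t\paren{Q^{\pi^\star}(s_t,a_t) - V^{\pi^\star}(s_t) - \alpha\log\pi(a_t|s_t)}}.
\end{equation*}

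Second, \cref{lemma:OptPolicyMaxEnt} gives $\pi^\star(a|s) = \exp\paren{(Q^{\pi^\star}(s,a) - V^{\pi^\star}(s))/\alpha}$, equivalently $Q^{\pi^\star}(s,a) - V^{\pi^\star}(s) = \alpha\log\pi^\star(a|s)$. Substituting this inside the summand reduces each bracketed term to $-\alpha\log\paren{\pi(a_t|s_t)/\pi^\star(a_t|s_t)}$, and taking the inner expectation over $a_t\sim\pi(\cdot|s_t)$ produces $-\alpha\,D_{\mr{KL}}\paren{\pi(\cdot|s_t)\|\pi^\star(\cdot|s_t)}$, which depends only on $s_t$.

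Third, I would swap the discounted time sum with the state expectation via $\sum_{t\geq 0}\gamma^t\bb P^\pi(s_t=s|s_0) = d^\pi_{s_0}(s)/(1-\gamma)$, average over $s_0\sim\rho$ to obtain $d^\pi_\rho$, and negate both sides so the nonnegative quantity $V^{\pi^\star}(\rho) - V^\pi(\rho)$ appears on the left, giving exactly the claimed identity. There is no real obstacle beyond bookkeeping; the only subtle point is the direction of the KL (the optimal policy ends up in the \emph{denominator} even though the outer expectation is under $\pi$), which drops out automatically once $Q^{\pi^\star} - V^{\pi^\star} = \alpha\log\pi^\star$ is substituted. Convergence of the telescoping tail is immediate because $V^{\pi^\star}$ is uniformly bounded by $(1 + \alpha\log A)/(1-\gamma)$, so Fubini applies and the rearrangements are justified.
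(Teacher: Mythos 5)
Your proof is correct, and it reaches the identity by a slightly different route than the paper. You run the classical Kakade--Langford telescoping argument directly against $V^{\pi^\star}$: expand $V^\pi(s_0)$ along trajectories of $\pi$, insert the telescoping sum $\sum_t\gamma^t\paren{V^{\pi^\star}(s_t)-\gamma V^{\pi^\star}(s_{t+1})}$, absorb $r+\gamma\E[V^{\pi^\star}]$ into $Q^{\pi^\star}$, and then use $Q^{\pi^\star}-V^{\pi^\star}=\alpha\log\pi^\star$ from \cref{lemma:OptPolicyMaxEnt} to collapse the per-state summand into $-\alpha D_{\mr{KL}}\paren{\pi(\cdot|s)\|\pi^\star(\cdot|s)}$. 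The paper instead first proves the general soft performance difference lemma for two arbitrary policies $\pi,\pi^\prime$ (by recursively unrolling the Bellman expansion of $V^{\pi^\prime}(s)-V^{\pi}(s)$ over $d^\pi$), and only then specializes $\pi^\prime=\pi^\star$, at which point the non-KL term vanishes because $Q^{\pi^\star}(s,a)-\alpha\log\pi^\star(a|s)=V^{\pi^\star}(s)$ is constant in $a$ and $\sum_a\paren{\pi^\star(a|s)-\pi(a|s)}=0$. The two computations are mathematically the same at their core; your version is more direct and avoids the intermediate two-policy lemma, while the paper's version produces that more general performance-difference identity as a reusable byproduct. Your handling of the two potential pitfalls is also right: the KL direction falls out automatically from the substitution (with $\pi^\star$ in the denominator despite the outer expectation being under $\pi$), and the interchange of the discounted time sum with the state expectation is justified by the uniform bound $V^{\pi^\star}\leq(1+\alpha\log A)/(1-\gamma)$.
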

\begin{proof}
\label{proof:Soft_subopt}
Similar proof appears in Lemma 25 \& 26 of \cite{mei2020global}, we provide the proof here for completeness.
\paragraph{Soft performance difference.}
We first show a soft performance difference result for the MaxEnt value function (Lemma 25 of \cite{mei2020global}). By the definition of MaxEnt value function and $Q$-function \eqref{eq:MaxEntRLV}, \eqref{eq:MaxEntRLAQ}, $\forall \pi,\pi^\prime$, we have
\begin{equation}
    \begin{split}
        &V^{\pi^\prime}(s) - V^{\pi}(s)\\
        =&\sum_a\pi^\prime(a|s)\cdot\brac{Q^{\pi^\prime}(s,a)-\alpha\log\pi^\prime(a|s)} - \sum_a\pi(a|s)\cdot\brac{Q^{\pi}(s,a)-\alpha\log\pi(a|s)}\\
        =& \sum_a\paren{\pi^\prime(a|s)-\pi(a|s)}\cdot\brac{Q^{\pi^\prime}(a|s)-\alpha\log\pi^\prime(a|s)}\\
        &+ \sum_a\pi(a|s)\cdot\brac{Q^{\pi^\prime}(s,a)-\alpha \log \pi^\prime(a|s)-Q^\pi(s,a)+\alpha\log\pi(a|s)}\\
        =&\sum_a\paren{\pi^\prime(a|s)-\pi(a|s)}\cdot\brac{Q^{\pi^\prime}(a|s)-\alpha\log\pi^\prime(a|s)}+\alpha D_{\mr{KL}}\paren{\pi(\cdot|s)||\pi^\prime(\cdot|s)}\\
        &+\gamma \sum_a\pi(a|s)\sum_{s^\prime}P(s^\prime|s,a)\cdot\brac{V^{\pi^\prime}(s^\prime)-V^\pi(s^\prime)}\\
        =&\frac{1}{1-\gamma}\sum_{s^\prime}d_{s}^\pi(s^\prime)\Bigg[\sum_{a^\prime}(\pi^\prime(a^\prime|s^\prime)-\pi(a^\prime|s^\prime))\brac{Q^{\pi^\prime}(s^\prime,a^\prime)-\alpha \log \pi^\prime(a^\prime|s^\prime)}\\
        &+\alpha D_{\mr {KL}}\paren{\pi(\cdot|s^\prime)||\pi^\prime(\cdot|s^\prime)}\Bigg],
    \end{split}
\end{equation}
where the last equality holds because by the definition of state visitation distribution
\begin{equation}
    d_{s_0}^\pi(s) = (1-\gamma)\sum_{t=0}^\infty \gamma^t \bb P^\pi(s_t=s|s_0),
\end{equation}
taking expectation of $s$ with respect to $s\sim\rho$, yields
\begin{equation}
    \label{eq:soft_perf_diff_lemma}
    \begin{split}
        &V^{\pi^\prime}(\rho) - V^{\pi}(\rho)\\
        =&\frac{1}{1-\gamma}\sum_{s^\prime}d_{\rho}^\pi(s^\prime)\Bigg[\sum_{a^\prime}(\pi^\prime(a^\prime|s^\prime)-\pi(a^\prime|s^\prime))\cdot\brac{Q^{\pi^\prime}(s^\prime,a^\prime)-\alpha \log \pi^\prime(a^\prime|s^\prime)}\\
        &+\alpha D_{\mr {KL}}\paren{\pi(\cdot|s^\prime)||\pi^\prime(\cdot|s^\prime)}\Bigg],
    \end{split}
\end{equation}
and \eqref{eq:soft_perf_diff_lemma} is known as the soft performance difference lemma (Lemma 25 in \cite{mei2020global}). 
\paragraph{Soft sub-optimality.}
Next we will show the soft sub-optimality result. By the definition of the optimal policy of $\alpha$-MaxEnt RL \eqref{eq:MaxEntRLOptPolicy}, we have
\begin{equation}
    \alpha\log\pi^\star(a|s) = Q^{\pi^\star}(s,a) - V^{\pi^\star}(s).
\end{equation}
Substituting $\pi^\star$ into the performance difference lemma \eqref{eq:soft_perf_diff_lemma}, we have 
\begin{equation}
    \begin{split}
        &V^{\pi^\star}(s)-V^\pi(s)\\
        = &\frac{1}{1-\gamma}\sum_{s^\prime}d^\pi_{s}(s^\prime)\cdot\Bigg[\sum_{a^\prime}\paren{\pi^\star(a^\prime|s^\prime)-\pi(a^\prime|s^\prime)}\cdot\underbrace{\brac{Q^{\pi^\star}(s^\prime,a^\prime)-\alpha \log \pi^\star(a^\prime|s^\prime)}}_{ = V^{\pi^\star}(s^\prime)}\\
        &+\alpha D_{\mr {KL}}\paren{\pi(\cdot|s^\prime)||\pi^\star(\cdot|s^\prime)}\Bigg]\\
        =&\frac{1}{1-\gamma}\sum_{s^\prime}d_s^\pi(s^\prime)\cdot\Bigg[\underbrace{\sum_{a^\prime}\paren{\pi^\star(a^\prime|s^\prime)-\pi(a^\prime|s^\prime)}}_{=0}\cdot V^{\pi^\star}(s^\prime)+\alpha D_{\mr {KL}}\paren{\pi(\cdot|s^\prime)||\pi^\star(\cdot|s^\prime)}\Bigg]\\
        =&\frac{1}{1-\gamma}\sum_{s^\prime}\brac{d_s^\pi(s^\prime)\cdot\alpha D_{\mr {KL}}\paren{\pi(\cdot|s^\prime)||\pi^\star(\cdot|s^\prime)}},
    \end{split}
\end{equation}
taking expectation $s\sim \rho$ yields
\begin{equation}
    V^{\pi^\star}(\rho) - V^{\pi}(\rho) = \frac{1}{1-\gamma} \sum_s\brac{d_\rho^\pi(s)\cdot\alpha\cdot D_{\mr{KL}}\paren{\pi(\cdot|s)||\pi^\star(\cdot|s)}},
\end{equation}
which completes the proof.
\end{proof}

\section{Supporting Lemmas}

\subsection{Bellman Consistency Equation of MaxEnt RL}
\begin{lemma}[Contraction of Soft Value Iteration]
\label{lemma:ContractionSoftVI}
From \eqref{eq:MaxEntRLOptV} and \eqref{eq:MaxEntRLAQ}, the soft value iteration operator $\mc T$ defined as
\begin{equation}
    \label{eq:SoftValueIter}
    \mc T Q(s,a) := r(s,a) +\gamma\alpha \bb E_{s^\prime}\brac{\log \sum_{a^\prime}\exp \left( Q(s^\prime,a^\prime)/\alpha\right)}
\end{equation}
is a contraction.
\end{lemma}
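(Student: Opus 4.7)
}
The plan is to show directly that for any two action–value functions $Q_1, Q_2 \in \mathbb{R}^{\mc S \times \mc A}$, we have $\norm{\mc T Q_1 - \mc T Q_2}{\infty} \leq \gamma \norm{Q_1 - Q_2}{\infty}$, which gives a contraction whenever $\gamma < 1$. The first step is trivial: in the definition \eqref{eq:SoftValueIter}, the reward term $r(s,a)$ cancels in the difference $\mc T Q_1(s,a) - \mc T Q_2(s,a)$, so the entire argument reduces to bounding
\begin{equation*}
    \gamma\,\alpha\,\mathbb{E}_{s'}\left[\,\log\sum_{a'}\exp\paren{Q_1(s',a')/\alpha} - \log\sum_{a'}\exp\paren{Q_2(s',a')/\alpha}\,\right].
\end{equation*}

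The main technical step is to show that the log-sum-exp operator $f(\mathbf{x}) := \log \sum_i \exp(x_i)$ is $1$-Lipschitz with respect to $\norm{\cdot}{\infty}$. I would do this by the standard two-sided comparison: for any vectors $\mathbf{x},\mathbf{y}$, the bound $x_i \leq y_i + \norm{\mathbf{x}-\mathbf{y}}{\infty}$ gives $\sum_i \exp(x_i) \leq \exp(\norm{\mathbf{x}-\mathbf{y}}{\infty})\sum_i \exp(y_i)$, hence $f(\mathbf{x}) - f(\mathbf{y}) \leq \norm{\mathbf{x}-\mathbf{y}}{\infty}$; symmetry yields $|f(\mathbf{x})-f(\mathbf{y})| \leq \norm{\mathbf{x}-\mathbf{y}}{\infty}$. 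Applying this at each state $s'$ with $\mathbf{x} = Q_1(s',\cdot)/\alpha$ and $\mathbf{y} = Q_2(s',\cdot)/\alpha$, and multiplying by $\alpha$, we obtain
\begin{equation*}
    \abs{\,\alpha\log\sum_{a'}\exp\paren{Q_1(s',a')/\alpha} - \alpha\log\sum_{a'}\exp\paren{Q_2(s',a')/\alpha}\,} \leq \max_{a'}\abs{Q_1(s',a') - Q_2(s',a')} \leq \norm{Q_1 - Q_2}{\infty}.
\end{equation*}

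The final step is simply to take absolute values, push them inside the expectation over $s'$ using Jensen/monotonicity, and then take the supremum over $(s,a)$, yielding $\norm{\mc T Q_1 - \mc T Q_2}{\infty} \leq \gamma \norm{Q_1 - Q_2}{\infty}$. The hardest part of the argument is the $1$-Lipschitz property of log-sum-exp, but this is a classical fact and the exponential-bound derivation above makes it elementary; no additional properties of the transition kernel or the reward are needed. Since the statement in the paper only requires ``contraction'' and $\gamma \in (0,1)$ is the standard regime for the main results, I would note the degenerate case $\gamma = 1$ separately as giving a non-expansive (but not strictly contractive) operator.
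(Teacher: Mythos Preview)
Your proposal is correct and follows essentially the same approach as the paper: both arguments reduce to the $1$-Lipschitz property of the log-sum-exp map in $\ell^\infty$, proved via the two-sided bound $\sum_i \exp(x_i) \leq \exp(\norm{\mathbf{x}-\mathbf{y}}{\infty})\sum_i \exp(y_i)$, and then absorb the factor $\gamma$ to get the contraction. The only cosmetic difference is that you isolate the Lipschitz property as a separate lemma-like step, whereas the paper inlines it directly into the bound on $\mc T Q_1(s,a) - \mc T Q_2(s,a)$.
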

\begin{proof}
\label{proof:ContractionSoftVI}
A similar proof appears in \cite{haarnoja2018acquiring}, we provide the proof for completeness. To see \eqref{eq:SoftValueIter} is a contraction, for each $(s,a)\in \mc S\times \mc A$, we have
\begin{equation}
    \begin{split}
        &\mc T Q_1(s,a) = r(s,a) +\gamma\alpha\log \sum_{a^\prime}\exp \paren{\frac{Q_1(s,a)}{\alpha}}\\
        \leq& r(s,a)+\gamma\alpha\log \sum_{a^\prime}\exp \paren{\frac{Q_2(s,a)+\norm{Q_1-Q_2}{\infty}}{\alpha}}\\
        \leq&r(s,a)+\gamma\alpha\log\Brac{\exp\paren{\frac{\norm{Q_1-Q_2}{\infty}}{\alpha}}\sum_{a^\prime}\exp\paren{\frac{Q_2(s,a)}{\alpha}}}\\
        = &\gamma\norm{Q_1-Q_2}{\infty}+r(s,a)+\gamma\alpha\log\sum_{a^\prime}\exp\paren{\frac{Q_2(s,a)}{\alpha}} = \gamma\norm{Q_1-Q_2}{\infty}+ \mc TQ_2(s,a),
    \end{split}
\end{equation}
which implies $\mc T Q_1(s,a) - \mc TQ_2(s,a)\leq \gamma\norm{Q_1-Q_2}{\infty}$. Similarly, we also have $\mc T Q_2(s,a) - \mc TQ_1(s,a)\leq \gamma\norm{Q_1-Q_2}{\infty}$, hence we conclude that 
\begin{equation}
    \abs{Q_1(s,a) - Q_2(s,a)}\leq \gamma\norm{\mc TQ_1-\mc TQ_2}{\infty},\;\forall (s,a)\in \mc S\times \mc A,
\end{equation}
which implies $\norm{Q_1-Q_2}{\infty} \leq \gamma \norm{\mc TQ_1 -\mc TQ_2}{\infty}$. Hence $\mc T$ is a $\gamma$-contraction and the optimal policy $\pi^\star$ of it is unique.
\end{proof}

\subsection{Constant Minimum Policy Probability}
\begin{lemma}[Lemma 16 of \cite{mei2020global}]
\label{lemma:min_policy_constant}
Using the policy gradient method (\cref{algo:PG}) with an initial distribution $\rho$ such that $\rho(s)>0, \forall S$, we have
\begin{equation}
    c:=\inf_{t\geq1}\min_{s,a}\pi_{\theta_t}(a|s)>0   
\end{equation}
is a constant that does not depend on $t$.
\end{lemma}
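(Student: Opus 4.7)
The plan is to show that the entropy regularization in $V^\pi$ acts as a log-barrier that prevents any coordinate $\pi_{\theta_t}(a|s)$ from drifting to zero along the policy gradient trajectory. The starting point is the softmax PG identity for the entropy-regularized value,
\begin{equation*}
    \frac{\partial V^{\pi_\theta}(\rho)}{\partial \theta(s,a)} = \frac{1}{1-\gamma}\, d_\rho^{\pi_\theta}(s)\, \pi_\theta(a|s)\, A^{\pi_\theta}(s,a),
\end{equation*}
with $A^{\pi_\theta}(s,a) = Q^{\pi_\theta}(s,a) - \alpha \log \pi_\theta(a|s) - V^{\pi_\theta}(s)$ as defined in the appendix. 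Because $r_\omega \in [0,1]$ and the discounted entropy is at most $\log A/(1-\gamma)$, both $|Q^{\pi_\theta}|$ and $|V^{\pi_\theta}|$ are uniformly bounded by some $M := (1+\alpha\log A)/(1-\gamma)$, independent of $\theta$.

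The first key step is to observe that the barrier ``kicks in'' once a probability is small: if $\pi_{\theta_t}(a|s) \le c_\star := \exp(-3M/\alpha)$, then $-\alpha\log \pi_{\theta_t}(a|s) \ge 3M$, so $A^{\pi_{\theta_t}}(s,a) \ge M > 0$. Combined with $d_\rho^{\pi_{\theta_t}}(s) \ge (1-\gamma)\rho(s) > 0$ coming from the full-support assumption on $\rho$, the partial derivative above is strictly positive at any such coordinate. Hence the gradient update always tries to push $\theta_t(s,a)$ upward as soon as $\pi_{\theta_t}(a|s)$ drops below $c_\star$.

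Next I would convert this one-sided gradient information into a uniform lower bound by an invariant-set argument. Fix some $c_1 \in (0, c_\star)$ and define $\mc G_{c_1} := \condset{\theta}{\min_{s,a}\pi_\theta(a|s) \ge c_1}$. With a step size $\eta$ small enough so that one update cannot move any coordinate $\theta(s,a)$ by more than a fixed amount (which holds because $d_\rho^\pi$, $\pi(a|s)$, $|Q^\pi|$, $|V^\pi|$, and $|\log \pi|$ on $\mc G_{c_1}$ are all uniformly bounded), I would show: coordinates with $\pi_{\theta_t}(a|s) \in [c_1, c_\star]$ see a non-negative gradient in $\theta(s,a)$ by the previous step and so cannot decrease below $c_1$; coordinates with $\pi_{\theta_t}(a|s) \ge c_\star$ cannot cross $c_1$ in a single bounded step. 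Picking $c_1$ small enough to include $\theta_1$ then gives a uniform-in-$t$ lower bound.

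The main obstacle will be the coupling introduced by the softmax normalizer: raising $\theta_t(s,a)$ simultaneously lowers $\pi_{\theta_t}(a'|s)$ for every $a' \ne a$, so the invariance argument cannot be carried out coordinate-by-coordinate. I expect to resolve this by treating $\log \pi_{\theta_t}(a|s)$ itself as a Lyapunov function: writing
\begin{equation*}
    \log \pi_{\theta_{t+1}}(a|s) - \log \pi_{\theta_t}(a|s) = \paren{\theta_{t+1}(s,a) - \theta_t(s,a)} - \paren{\log Z_{t+1}(s) - \log Z_t(s)},
\end{equation*}
where $Z_t(s) = \sum_{a'} \exp \theta_t(s,a')$, and bounding the change of the log-normalizer by a weighted average (under $\pi_{\theta_t}(\cdot|s)$) of coordinate changes. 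This means the contribution of coordinates $a' \ne a$ to the normalizer is damped by $\pi_{\theta_t}(a'|s)$, so at the barrier level $\log c_\star$ the positive pull on $\theta_t(s,a)$ dominates the normalizer drift, closing the induction and yielding $c := \inf_{t \ge 1}\min_{s,a}\pi_{\theta_t}(a|s) > 0$.
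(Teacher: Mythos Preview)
The paper does not actually prove this lemma: it is stated as ``Lemma 16 of \cite{mei2020global}'' and immediately followed only by a remark on the possible state-space dependence of the constant $c$. There is therefore no in-paper proof to compare your proposal against; the authors simply import the result from Mei et al.\ (2020).

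That said, a quick comment on your plan. Your barrier intuition is correct and is indeed the mechanism behind the result, and the first key step (once $\pi_{\theta_t}(a\mid s)\le c_\star$, the advantage $A^{\pi_{\theta_t}}(s,a)$ is strictly positive) is fine. The difficulty is exactly where you flag it: the softmax coupling. In your Lyapunov step you want the positive pull on $\theta_t(s,a)$ to dominate the log-normalizer drift, but note that the pull has magnitude
\[
\eta\,\frac{d_\rho^{\pi_{\theta_t}}(s)}{1-\gamma}\,\pi_{\theta_t}(a\mid s)\,A^{\pi_{\theta_t}}(s,a)
\;\le\;
\eta\,\frac{d_\rho^{\pi_{\theta_t}}(s)}{1-\gamma}\,c_\star\bigl(2M-\alpha\log c_\star\bigr),
\]
which is \emph{exponentially small in $M/\alpha$} (since $c_\star=\exp(-3M/\alpha)$), whereas the normalizer drift $\sum_{a'}\pi_{\theta_t}(a'\mid s)\,\Delta\theta_t(s,a')$ is driven by actions with $\pi_{\theta_t}(a'\mid s)$ of order one and is in general $O(\eta)$, not $O(\eta\,c_\star)$. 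So the domination you assert in the last sentence does not follow from the ingredients you have assembled; as written, the induction does not close. The original proof in Mei et al.\ (2020) sidesteps this coordinate-level balance by combining the monotone ascent of $V^{\pi_{\theta_t}}(\rho)$ (from smoothness of the entropy-regularized objective and the step-size restriction in \cref{algo:PG}) with a sublevel-set argument, rather than trying to control $\log\pi_{\theta_t}(a\mid s)$ directly against the normalizer.
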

\begin{remark}[State Space Dependency of constant $c,C_\delta^0$]
For the exact PG case, $c$ in \cref{lemma:min_policy_constant} could also depend on $S$, similarly for the constant $C_\delta^0$ in the stochastic PG case. As pointed out by~\citet{li2021softmax} (Table 1), the constant $c$ (or $C_\delta^0$ in \cref{thm:spg-complexity} of the SPG case) may depend on the structure of the MDP. The ROLLIN technique only improves the mismatch coefficient $\norm{d_\mu^{\pi^\star}/\mu}{\infty}$, instead of the constant $c$ (or $C_\delta^0$). Still, in the exact PG case, if one replaces the constant $c$ with other $S$ dependent function $f(S)$, one still can apply a similar proof technique for \cref{thm:softmax_spg_convergence_context-Main} to show that \ours{} reduces the iteration complexity, and the final iteration complexity bound in \cref{thm:softmax_spg_convergence_context-Main} will include an additional $f(S)$. In addition, omitting the factor $C_\delta^0$, \ours{} can improve the exponential complexity dependency incurred by the stochastic optimization to a polynomial dependency.
\end{remark}

\section{Supporting Algorithms}
\begin{algorithm}[H]
  \caption{PG for $\alpha$-MaxEnt RL (Algorithm 1 in \cite{mei2020global})}\label{algo:PG}
  \begin{algorithmic}[1]
    \State \textbf{Input:} $\rho$, $\theta_0$, $\eta>0$. 
      \For{$t=0,\dots,T$}
        \State $\theta_{t+1}\leftarrow \theta_t+\eta\cdot \frac{\partial V^{\pi_{\theta_t}}(\rho)}{\partial \theta_t}$
      \EndFor
  \end{algorithmic}
\end{algorithm}
\begin{algorithm}[H]
  \caption{Two-Phase SPG for $\alpha$-MaxEnt RL (Algorithm 5.1 in \cite{ding2021beyond})}\label{algo:SPG}
  \begin{algorithmic}[1]
    \State \textbf{Input:} $\rho,\theta_0,\alpha,B_1,B_2,T_1,T,\Brac{\eta_t}_{t=0}^T$ 
    \For{$t=0,1,\dots,T$}
      \If{$t\leq T_1$}
      \State $B=B_1$ \Comment{Phase 1}
      \Else
      \State $B=B_2$ \Comment{Phase 2}
      \EndIf
      \State Run random horizon SPG with $\rho,\alpha,\theta_t,B,t,\eta_t$ \Comment{\cref{algo:random-horizon-spg}} 
    \EndFor
  \end{algorithmic}
\end{algorithm}

\begin{algorithm}[H]
  \caption{Random-horizon SPG for $\alpha$-MaxEnt RL Update (Algorithm 3.2 in \cite{ding2021beyond})}\label{algo:random-horizon-spg}
  \begin{algorithmic}[1]
    \State {\bf Input:} $\rho,\alpha,\theta_0,B,t,\eta_t$ 
      \For{$i=1,2,...,B$}
        \State $s_{H_t}^i,a_{H_t}^i\leftarrow \mr{SamSA}(\rho,\theta_t,\gamma)$ \Comment{\cref{algo:SampleSA}}
        \State $\hat{Q}^{\pi_{\theta_t},i}\leftarrow\mr{EstEntQ}(s_{H_t}^i,a_{H_t}^i,\theta_t,\gamma,\alpha)$ \Comment{\cref{algo:EstimateEntQ}}
      \EndFor
      \State $\theta_{t+1}\leftarrow \theta_t+\frac{\eta_t}{(1-\gamma)B}\sum_{i=1}^B\brac{\nabla_\theta\log \pi_{\theta_t}(a_{H_t}^i|s_{H_t}^i)\paren{\hat{Q}^{\pi_{\theta_t},i}-\alpha\log \pi_{\theta_t}}(a_{H_t}^i|s_{H_t}^i)}$
  \end{algorithmic}
\end{algorithm}
\begin{remark}
Lemma 3.4 in \cite{ding2021beyond} implies that the estimator
\begin{equation}
    \frac{1}{(1-\gamma)}\brac{\nabla_\theta\log \pi_{\theta_t}(a_{H_t}^i|s_{H_t}^i)\paren{\hat{Q}^{\pi_{\theta_t},i}-\alpha\log \pi_{\theta_t}}(a_{H_t}^i|s_{H_t}^i)}
\end{equation}
in line 6 of \cref{algo:SampleSA} is an unbiased estimator of the gradient $\nabla_\theta V^{\pi_\theta}(\rho)$.
\end{remark}
\begin{algorithm}[H]
  \caption{SamSA: Sample $s,a$ for SPG (Algorithm 8.1 in \cite{ding2021beyond})}\label{algo:SampleSA}
  \begin{algorithmic}[1]
    \State {\bf Input:} $\rho,\theta,\gamma$
    \State Draw $H\sim\mr{Geom}(1-\gamma)$ \Comment{$\mr{Geom}(1-\gamma)$ geometric distribution with parameter $1-\gamma$}
    \State Draw $s_0\sim\rho,a_0\sim \pi_\theta(\cdot|s_0)$
    \For{$h=1,2,\dots,H-1$}
      \State Draw $s_{h+1}\sim \bb P(\cdot |s_h,a_h),a_{h+1}\sim\pi_{\theta_t}(\cdot|s_{h+1})$
    \EndFor
  \State {\bf Output: } $s_H,a_H$
  \end{algorithmic}
\end{algorithm}

\begin{algorithm}[H]
  \caption{EstEntQ: Unbiased Estimation of MaxEnt Q (Algorithm 8.2 in \cite{ding2021beyond})}\label{algo:EstimateEntQ}
  \begin{algorithmic}[1]
    \State {\bf Input:} $s,a,\theta,\gamma,\alpha$
    \State Initialize $s_0\leftarrow s,a_0\leftarrow a,\hat{Q}\leftarrow r(s_0,a_0)$
    \State Draw $H\sim\mr{Geom}(1-\gamma)$
    \For{$h=0,1,\dots,H-1$}
      \State $s_{h+1}\sim \bb P(\cdot|s_h,a_h),a_{h+1}\sim\pi_\theta(\cdot|s_{h+1})$
      \State $\hat{Q}\leftarrow\hat{Q}+\gamma^{h+1}/2\brac{r(s_{h+1},a_{h+1})-\alpha \log \pi_\theta(a_{h+1}|s_{h+1})}$
    \EndFor
    \State {\bf Output:} $\hat{Q}$
  \end{algorithmic}
\end{algorithm}

\newpage
\section{Experimental Details}
We use the SAC implementation from  \url{https://github.com/ikostrikov/jaxrl}~\citep{jaxrl} for all our experiments in the paper. 
\subsection{Goal Reaching with an Oracle Curriculum}
\label{appendix:oracle-details}
For our \texttt{antmaze-umaze} experiments with oracle curriculum, we use a sparse reward function where the reward is $0$ when the distance $D$ between the ant and the goal is greater than $0.5$ and $r = \exp(-5D)$ when the distance is smaller than or equal to $0.5$. The performance threshold is set to be $R=200$.  Exceeding such threshold means that the ant stays on top of the desired location for at least $200$ out of $500$ steps, where $500$ is the maximum episode length of the \texttt{antmaze-umaze} environment. We use the average return of the last $10$ episodes and compare it to the performance threshold $R$. For both of the SAC agents, we use the same set of hyperparameters shown in \cref{table:sac-param}. See \cref{algo:detailed-impl}, for a more detailed pseudocode.

\begin{algorithm}[H]
  \caption{Practical Implementation of \ours{}}\label{algo:detailed-impl}
  \begin{algorithmic}[1]
    \State \textbf{Input:} $\{\omega_k\}_{k=0}^K$: input curriculum, $\rho$: initial state distribution, $R$: near-optimal threshold, $\beta$: roll-in ratio, discount factor $\gamma$. 
    \State Initialize $\mathcal{D} \leftarrow \emptyset,\mathcal{D}_{\mathrm{exp}} \leftarrow \emptyset,k \leftarrow 0$, and two off-policy RL agents $\pi_\mathrm{main}$ and $\pi_\mathrm{exp}$.
    \For {each environment step}
        \If{episode terminating or beginning of training}
            \If {average return of the last 10 episodes under  context $\omega_k$ is greater than $R$ }
                \State $k \leftarrow k + 1$, $\mathcal{D}_{\mathrm{exp}} \leftarrow \emptyset$
                \State Re-initialize the exploration agent $\pi_{\mathrm{exp}}$
            \EndIf
            \State Start a new episode under context $\omega_k$ with $s_0 \sim \rho$, $t \leftarrow 0$
            \If {$k > 0$ and with probability of $\beta$}
                \State enable \textbf{Rollin} for the current episode.
            \Else
                \State disable \textbf{Rollin} for the current episode.
            \EndIf
        \EndIf
        \If{\textbf{Rollin} is enabled for the current episode}
            \If{\textbf{Rollin} is stopped for the current episode}
                \State $a_t \sim \pi_{\mathrm{exp}}(a_t | s_t, \omega_k)$
            \Else
                \State $a_t \sim \pi_{\mathrm{main}}(a_t | s_t, \omega_{k-1})$
                \State with probability of $1 - \gamma$, stop \textbf{Rollin} for the current episode
            \EndIf
        \Else
            \State $a_t \sim \pi_{\mathrm{main}}(a_t | s_t, \omega_{k})$
        \EndIf
        \State take action $a_t$ in the environment and receives $s_{t+1}$ and $r_t = r_{\omega_k}(s_t, a_t)$
        \State add $(s_t, a_t, s_{t+1}, r_t)$ in replay buffer $\mc{D}$
        \If{\textbf{Rollin} is disabled for the current episode}
            \State update $\pi_{\mathrm{main}}$ using $\mc{D}$.
        \EndIf
        \If{$\pi_{\mathrm{exp}}$ was used to produce $a_t$}
            \State add $(s_t, a_t, s_{t+1}, r_t)$ in replay buffer $\mc{D}_{\mathrm{exp}}$
            \State update $\pi_{\mathrm{exp}}$ using $\mc{D}_{\mathrm{exp}}$.
        \EndIf
        \State $t \leftarrow t + 1$
    \EndFor
    \State \textbf{Output:} $\pi_{\mathrm{main}}$
  \end{algorithmic}
\end{algorithm}

\begin{figure}[h]
    \centering
    \includegraphics[width=0.35\linewidth]{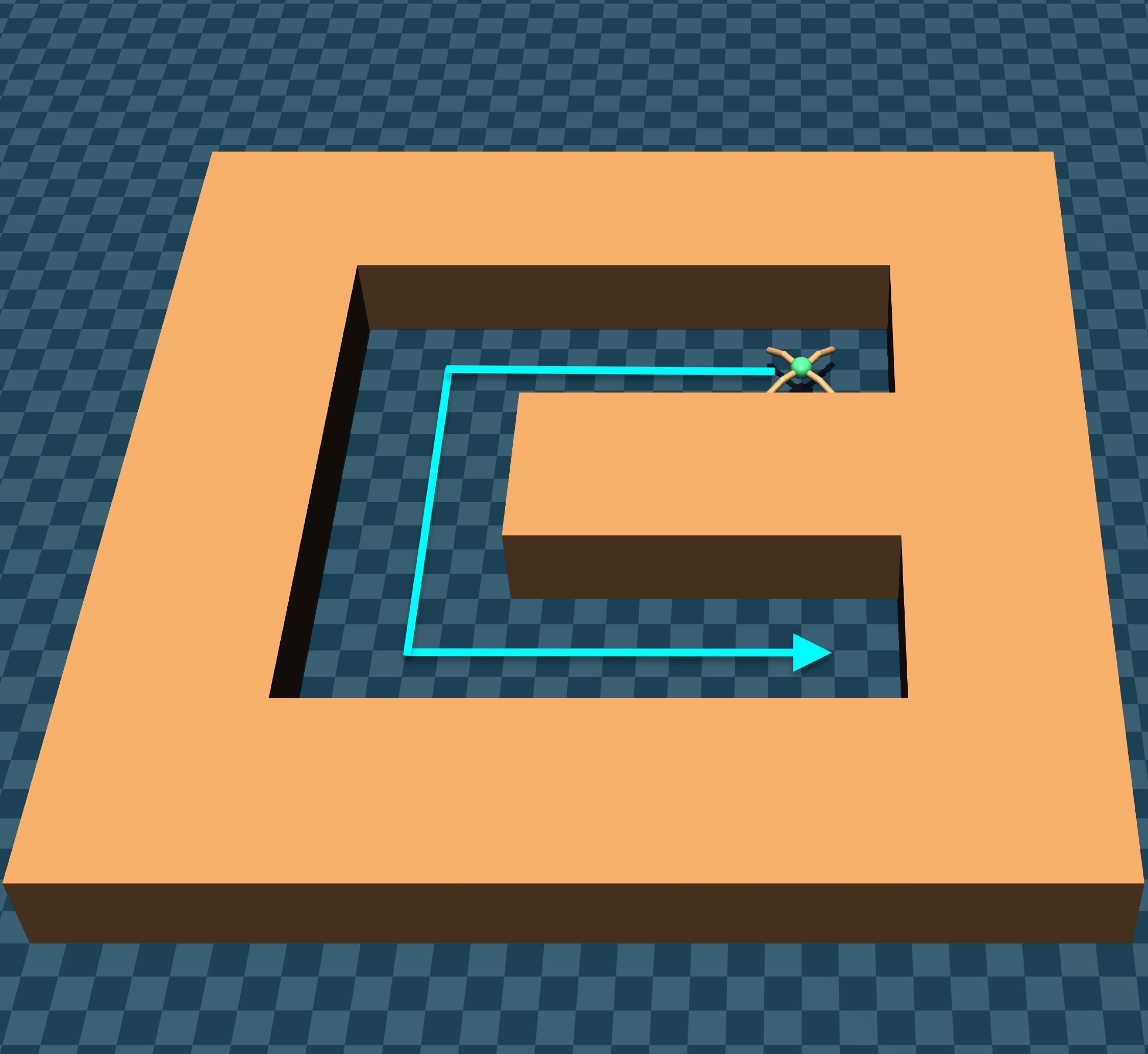}
    \captionof{figure}{\textbf{Oracle curriculum of desired goals on} \texttt{antmaze-umaze}. The ant starts from the right top corner and the farthest goal is located at the bottom right corner.}
    \label{fig:antmaze-path-cshape} 
\end{figure}

\begin{center}
\begin{table}[H]
\centering
\small
\def\arraystretch{1.35}
\begin{tabular}{|l|l|c|} 
\hline
\textbf{Initial Temperature} &  & $1.0$ \\
\textbf{Target Update Rate} & update rate of target networks & $0.005$ \\
\textbf{Learning Rate} & learning rate for the Adam optimizer  & $0.0003$ \\
\textbf{Discount Factor} & & $0.99$ \\
\textbf{Batch Size} & & $256$ \\
\textbf{Warmup Period} & number of steps of initial random exploration (random actions) & $10000$ \\
\textbf{Network Size} &  & $(256, 256)$ \\
\hline
\end{tabular}
\caption{Hyperparameters used for the SAC algorithm~\citep{haarnoja2018soft}}
\label{table:sac-param}
\end{table}
\end{center}

\subsection{Non-Goal Reaching}

\label{appendix:meta-learning-details}

For the non goal reaching tasks in \texttt{walker2d}, \texttt{hopper}, \texttt{humanoid}, and \texttt{ant} experiments, the desired $x$-velocity range $[\lambda\kappa,\lambda(\kappa+0.1))$, the near-optimal threshold $R(\kappa)$, and the $\texttt{healthy\_reward}$ all depend on the environments. The maximum episode length 1000. Details are provided in \cref{table:appendix-meta-learning-params-details}.

\begin{table}[ht]
\centering
\begin{minipage}[b]{\linewidth}
\centering
\small
\begin{tabular}{c|ccccc}
\toprule
\multicolumn{1}{c}{}&
\multicolumn{2}{c}{}&
\multicolumn{3}{c}{\texttt{healthy\_reward}}\\
\cmidrule(lr){4-6} \multicolumn{1}{c}{\texttt{Env.}} & $\lambda$& $R(\kappa)$ & original & \texttt{high} & \texttt{low} \\
\midrule \texttt{walker}  & $5$ & $500+4500\kappa$ & $1.0$ & $1.5$ & $0.5$\\ 
\midrule \texttt{hopper} & $3$ & $500+4500\kappa$ & $1.0$ & $1.5$ & $0.5$\\
\midrule
\texttt{humanoid} & $1$  & $2500+2500\kappa$ & $5.0$ & $7.5$ & $2.5$\\
\midrule
\texttt{ant} & $6$ & $500+4500\kappa$ & {$1.0$} & $1.5$ & $0.25$ \\
\bottomrule
\end{tabular}
\caption{\textbf{Learning progress $\kappa$, average $x$-velocity, and average return at the 0.75 and 1.0 million environment steps in \texttt{walker}, \texttt{hopper}, \texttt{humanoid}, and \texttt{ant}.} The average $x$-velocity and return are estimated using the last 50k time steps. We pick $\beta = 0.1$ for all experiments using \ours{}, the results of using other $\beta$s can be found in \cref{table:meta-learning-progress}, \cref{table:meta-learning-speed}, and \cref{table:meta-learning-return} in \cref{appendix:learning-curves-meta-learning}. The standard error is computed over 8 random seeds.} 
\label{table:appendix-meta-learning-params-details}
\end{minipage}
\hfill
\end{table}

\section{Numerical Experiments: The Four Room Navigation}
\label{appendix:numerical}
\subsection{MDP Setup}
The grid world consists of $12 \times 12$ grid cells where each cell corresponds to a state in the MDP. The agent can take four different actions to move itself in four directions to a different cell or take a fifth action to receive reward (positive if close to the goal, 0 otherwise). Each context in the context space represents a distinct goal state in the grid-world. The agent (when taking the fifth action) receives higher reward the closer it is to the goal state and receives 0 reward when it is too far (4 steps away for easy, and 5 steps away for hard). We also include 100 additional dummy actions in the action space (taking these actions do not result in reward nor state changes) to make the exploration problem challenging. See Figure \ref{fig:numerical-mdp} for a visualization of the environment and the two reward functions we use. More concretely, let $D(s, g)$ be the number of action it takes to go from state $s$ to state $g$ (the current goal) if the walls did not exist (the Manhattan distance), the reward received when taking the fifth action at state $s$ is
\begin{align*}
    r_{\mathrm{four\_room}}(s) = \begin{cases}
        \gamma_{\mathrm{reward}}^{D(s, g)}, & D(s, g) \leq D_{\mathrm{threshold}} \\
        0, & D(s, g) > D_{\mathrm{threshold}} \\
    \end{cases}
\end{align*}
For the easy reward function, $\gamma_{\mathrm{reward}} = 0.9$, $D_{\mathrm{threshold}} = 5$. For the hard reward function, $\gamma_{\mathrm{reward}} = 0.5$, $D_{\mathrm{threshold}} = 4$.

\subsection{Pre-defined curriculum.} Our curriculum contains $16$ contexts in sequence, $\{\omega_k\}_{k=0}^{16}$, which form a continuous path from the start location of the agent $(0, 0)$ to the goal location of the agent at $(8, 8)$. We use a fixed success rate threshold (an episode is considered to be successful if the agent reaches the goal state and perform the fifth action at that goal state) to determine convergence of the stochastic PG algorithm. We switch to the next context/curriculum step whenever the success rate exceeds 50\%. We use $\kappa \in [0, 1]$ to denote a normalized curriculum progress which is computed as the current curriculum step index divided by the total number of curriculum steps.

\subsection{Stochastic PG description} We follow Algorithm \ref{algo:PGSPG_context} closely for our implementation. In particular, we adopt the softmax parameterization of $\pi$ that is parameterized by $\theta \in \mathbb{R}^{S \times A}$ as $\pi_\theta(a=j | s = i) = \frac{\exp(\theta_{ij})}{\sum_{j'} \exp(\theta_{ij'})}$ with $i \in [S]$ and $j \in [A]$ (in this MDP, $S = 144$ as there are $12 \times 12 = 144$ cells in the grid world and $A = 105$ due to the dummy actions). To sample from $d_{\mu_{k-1}}^{\pi^\star_{\omega_{k-1}}}$ (Line 6), we rollout the policy from the previous context $\pi_{\theta_{k-1}}$ in the MDP for $h$ steps (where $h$ being sampled from $\mathrm{Geom}(1 - \gamma)$, the geometric distribution with a rate of $1 - \gamma$) and take the resulting state as a sample from $d_{\mu_{k-1}}^{\pi^\star_{\omega_{k-1}}}$. We implement the stochastic PG using Adam optimizer~\citep{kingma2017adam} on $\theta$ with a constant learning rate of 0.001. Every gradient step is computed over 2000 trajectories with the trajectory length capped at 50 for each. The empirical policy gradient for $\pi_\theta$ is computed as over $B=2000$ trajectories ($\{(s^b_0, a^b_0, s^b_1, \cdots, s^b_T \}_{b=1}^{B}$) and $T=50$ time steps collected by rolling out the current policy $\pi_\theta$: $\frac{1}{BT}\sum_{b} \sum_{t}\nabla_\theta \log \pi_\theta(a^b_t | s^b_t) R^b_t$ with $R^b_t = -\sum_{t'=t}^{T} \gamma^{t' - t} r^b_{\mathrm{ent}, t'}, r^b_{\mathrm{ent}, t} = r(s^b_t, a^b_t) - \alpha \log \pi(a^b_t | s^b_t)$ where $R^b_t$ is the Monte-Carlo estimate of the discounted, entropy-regularized cumulative reward.

\subsection{Results}
\label{appendix:numerical-detailed}
\begin{table*}[h]
\centering
\begin{minipage}[b]{\columnwidth}
\centering
\tiny
\begin{tabular}{cc|ccccccc}
\toprule
\multicolumn{1}{c}{Setting} & \multicolumn{1}{c}{Entropy Coefficient} & $\beta = 0.0$ (Baseline)& $\beta = 0.1$& $\beta = 0.2$ & $\beta = 0.3$ & $\beta = 0.5$ & $\beta = 0.75$ & $\beta = 0.9$\\ \midrule
Hard & $\alpha = 0.01$ & $0.500\pm 0.000$ & 
$0.506\pm0.001$ & $0.512\pm0.001$ & {\color{purple}$0.525\pm 0.000$} & {\color{purple}$0.562 \pm 0.000$} & {\color{purple}$0.562 \pm 0.000$} & {\color{purple}$0.562 \pm 0.000$} \\
& $\alpha=0.001$& $0.856\pm0.006$ & $0.981\pm0.003$ & $0.981\pm0.001$& {\color{purple} $1.000\pm0.000$} & {\color{purple} $1.000\pm0.000$} & {\color{purple} $1.000\pm0.000$} & {\color{purple} $1.000\pm0.000$} \\ \midrule
Easy & $\alpha=0.01$ & $0.944 \pm 0.003$ & $0.994\pm0.001$ & $0.994\pm0.001$ & {\color{purple}$1.000\pm0.000$} & {\color{purple}$1.000\pm0.000$} & {\color{purple}$1.000\pm0.000$} & {\color{purple}$1.000\pm0.000$}\\
& $\alpha = 0.001$ &  {\color{purple}$1.000\pm0.000$}& {\color{purple}$1.000\pm0.000$} &{\color{purple}$1.000\pm0.000$}& {\color{purple}$1.000\pm0.000$} & {\color{purple}$1.000\pm0.000$} & {\color{purple}$1.000\pm0.000$} & {\color{purple}$1.000\pm0.000$} \\
\bottomrule
\end{tabular}
\caption{\footnotesize \textbf{Curriculum progress $\kappa$ on the four-room navigation with stochastic PG at step 50,000.} We tested with two different entropy coefficients and seven different $\beta$'s. The standard error is computed over 10 random seeds.}

\label{table:numerical-progress}
\end{minipage}
\hfill
\end{table*}

\begin{table*}[h]
\centering
\begin{minipage}[b]{\columnwidth}
\centering
\tiny
\begin{tabular}{cc|ccccccc}
\toprule
\multicolumn{1}{c}{Setting} & \multicolumn{1}{c}{Entropy Coefficient} & $\beta = 0.0$ (Baseline)& $\beta = 0.1$& $\beta = 0.2$ & $\beta = 0.3$ & $\beta = 0.5$ & $\beta = 0.75$ & $\beta = 0.9$\\ \midrule
Hard & $\alpha = 0.01$ & $0.000\pm 0.000$ & 
$0.000\pm0.000$ & $0.000\pm0.000$ & $0.000\pm 0.000$ & 
$0.000\pm0.000$ & $0.000\pm0.000$ & $0.000\pm0.000$ \\
& $\alpha=0.001$& $0.424\pm0.023$ & $0.939\pm0.0014$ & $0.710\pm0.021$& $1.010\pm0.005$ & $1.062\pm0.000$ & {\color{purple} $1.067\pm0.000$} & $1.060\pm0.000$ \\ \midrule
Easy & $\alpha=0.01$ & $4.093 \pm 0.224$ & $5.136\pm0.230$ & $4.156\pm0.228$ & $1.040\pm0.140$ & $3.913\pm0.218$ & {\color{purple}$7.374\pm0.216$} & $4.227\pm0.232$\\
& $\alpha = 0.001$ &  $10.536\pm0.002$& $10.566 \pm 0.003$ & $10.602\pm0.003$ & $10.593\pm0.002$  & $10.611\pm0.002$  & {\color{purple}$10.620\pm0.002$}  & $10.575\pm0.002$  \\
\bottomrule
\end{tabular}
\caption{\footnotesize \textbf{Final return $V^\pi$ on the four-room navigation with stochastic PG at step 50,000.} We tested with two different entropy coefficients and seven different $\beta$'s. The standard error is computed over 10 random seeds.}

\label{table:numerical-return}
\end{minipage}
\hfill
\end{table*}
\begin{figure}[H]
    \centering
    \includegraphics[width=0.87\textwidth]{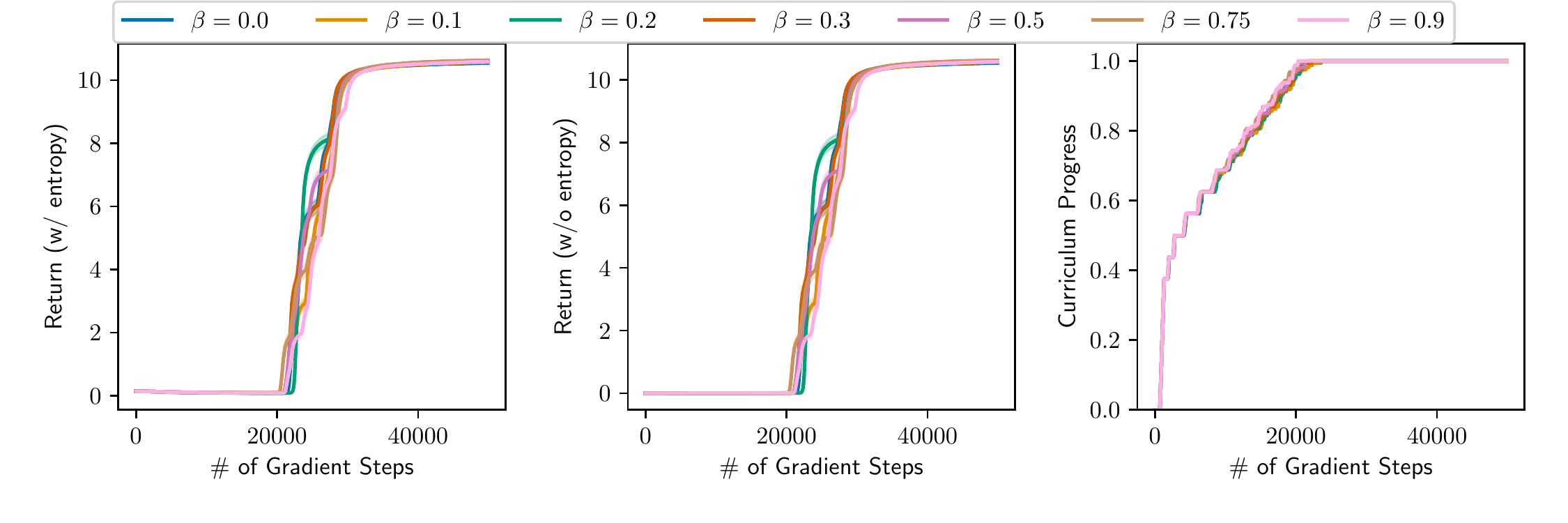}
    \includegraphics[width=0.87\textwidth]{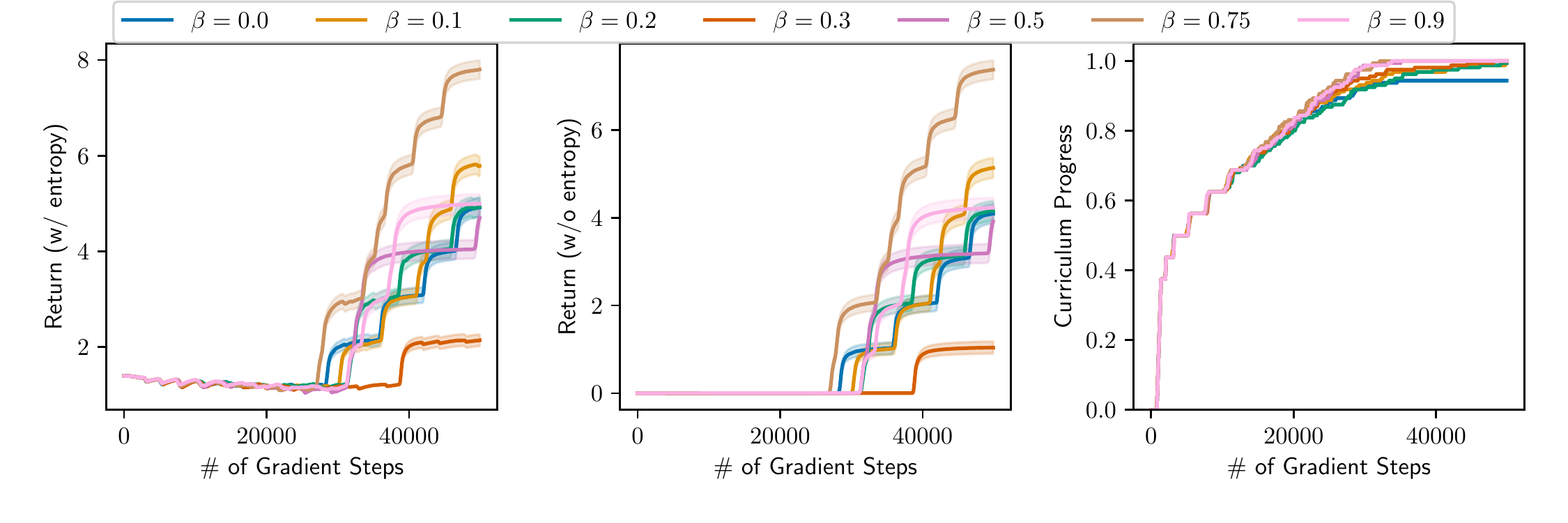}
    \caption{\textbf{Learning curves for the numerical experiments on the easy curriculum.}}
    \label{fig:numerical-easy}
\end{figure}

\begin{figure}[H]
    \centering
    \includegraphics[width=0.87\textwidth]{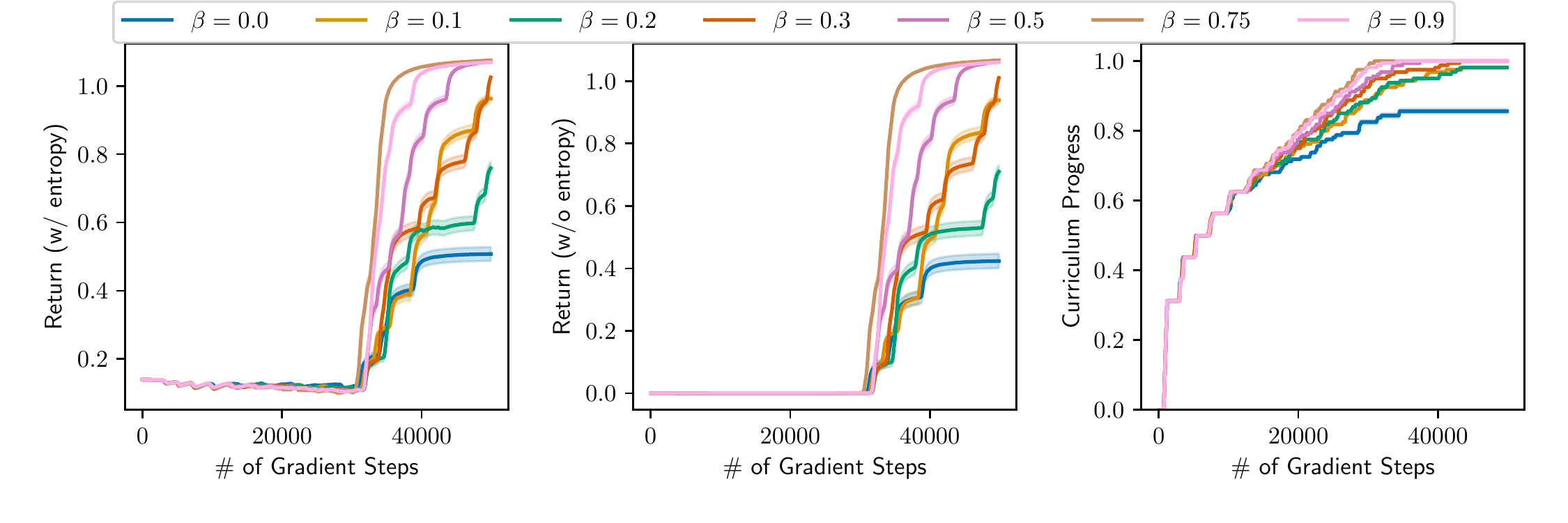}
    \includegraphics[width=0.87\textwidth]{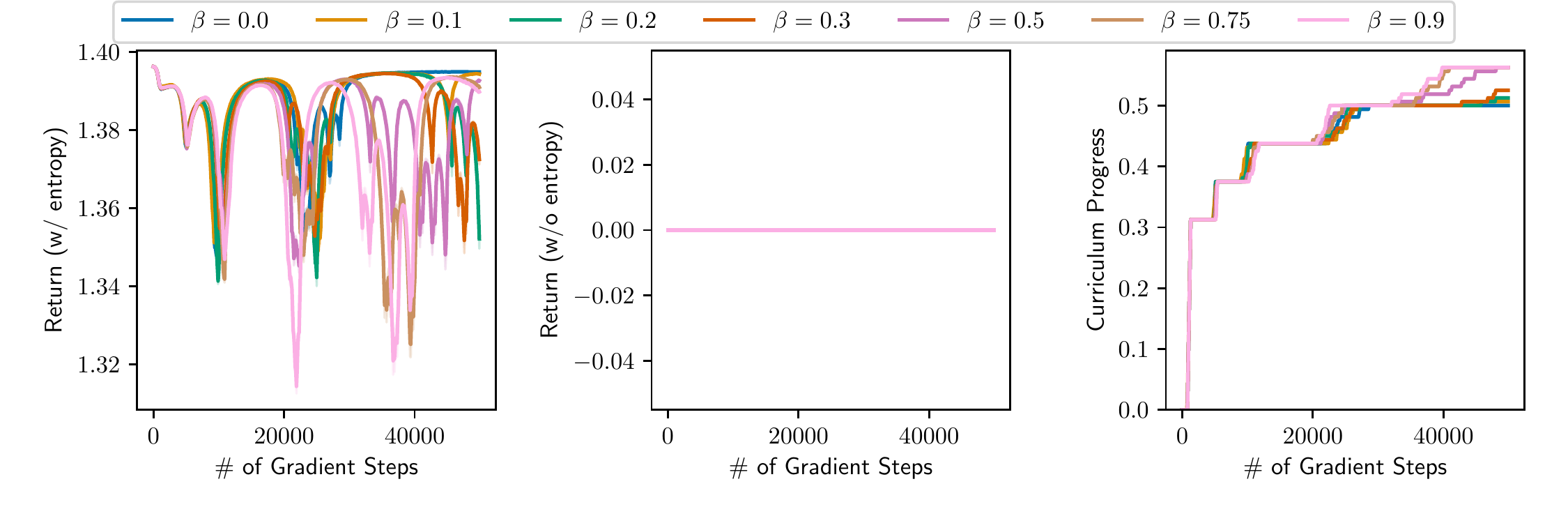}
    \caption{\textbf{Learning curves for the numerical experiments on the hard curriculum.}}
    \label{fig:numerical-hard}
\end{figure}

\section{Additional Learning Curves and Tables}
\subsection{Goal Reaching}
\label{appendix:learning-curves-goal-reaching-oracle}
\begin{figure}[H]
    \centering
    \begin{minipage}{0.95\linewidth}
    \centering\captionsetup[subfigure]{justification=centering}
    \includegraphics[width=\linewidth]{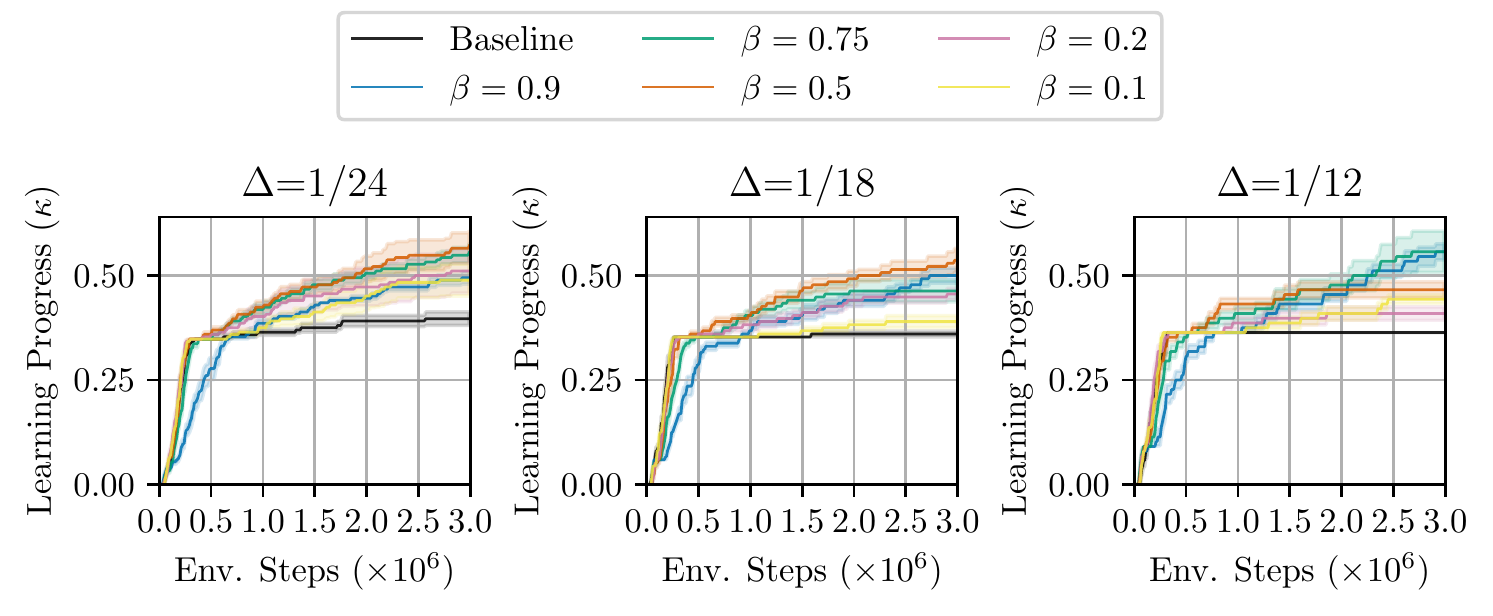}
        \label{fig:drl-vannila-rollin-geo-false} 
        \subcaption{Vanilla \ours{} without geometric sampling}
    \includegraphics[trim=0 0 0 0,clip,width=\textwidth]{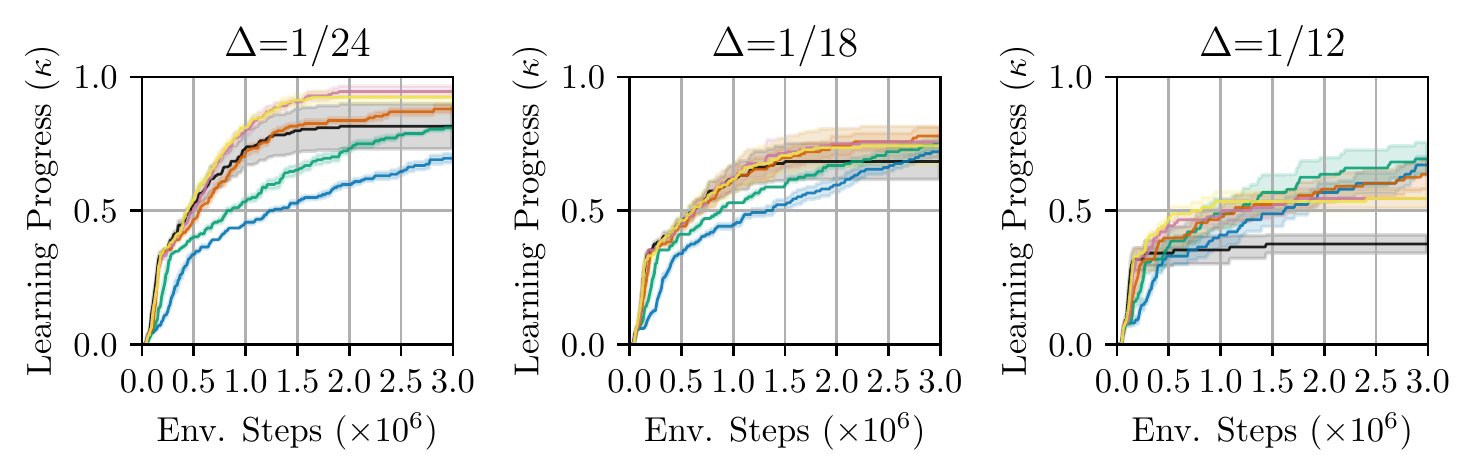}
        \label{fig:drl-vannila-rollin-geo-true} 
        \subcaption{Vanilla \ours{} with geometric sampling}
        \end{minipage}
    \caption{\textbf{Vanilla Goal reaching.} Accelerating learning on \texttt{antmaze-umaze} with \ours{} on an oracle curriculum in \cref{fig:antmaze-path-cshape}. The confidence interval represents the standard error computed over 8 random seeds.}
    \label{fig:drl-vannila-rollin}
\end{figure}
\begin{figure}[H]
    \centering
    \begin{minipage}{0.95\linewidth}
    \centering\captionsetup[subfigure]{justification=centering}
    \includegraphics[width=\linewidth]{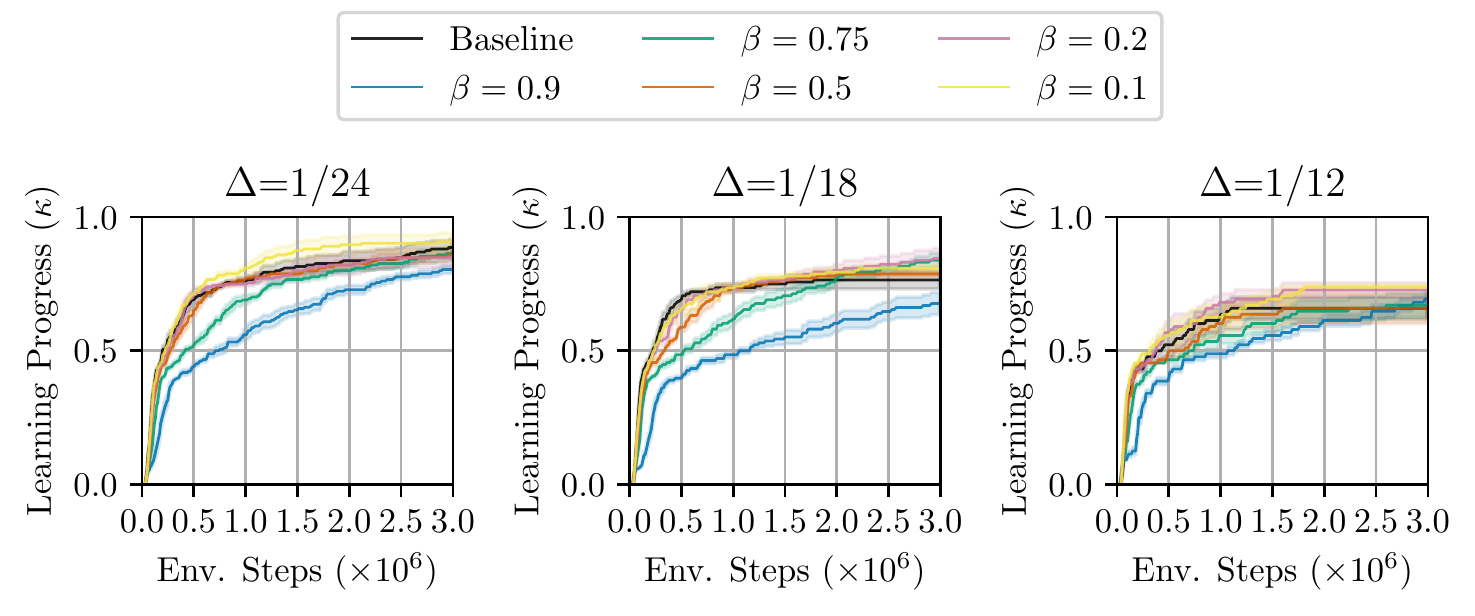}
        \label{fig:drl-rel-rollin-geo-false}
        \subcaption{\ours{} + relabeling without geometric sampling}
    \includegraphics[trim=0 0 0 0,clip,width=\textwidth]{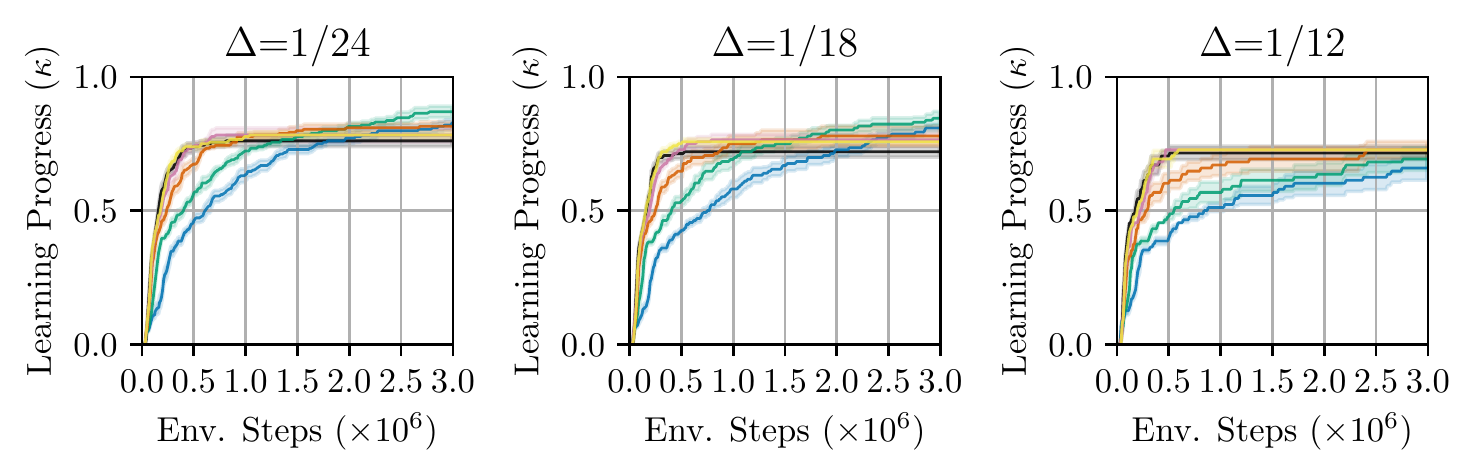}
        \label{fig:drl-rel-rollin-geo-true}
        \subcaption{\ours{} + relabeling with geometric sampling}
        \end{minipage}
    \caption{\textbf{Goal relabeling.} Accelerating learning on \texttt{antmaze-umaze} with \ours{} on an oracle curriculum in \cref{fig:antmaze-path-cshape}. The confidence interval represents the standard error computed over 8 random seeds.}
    \label{fig:drl-relabeling-rollin}
\end{figure}
\begin{figure}[H]
    \centering
    \begin{minipage}{0.95\linewidth}
    \centering\captionsetup[subfigure]{justification=centering}
    \includegraphics[width=\linewidth]{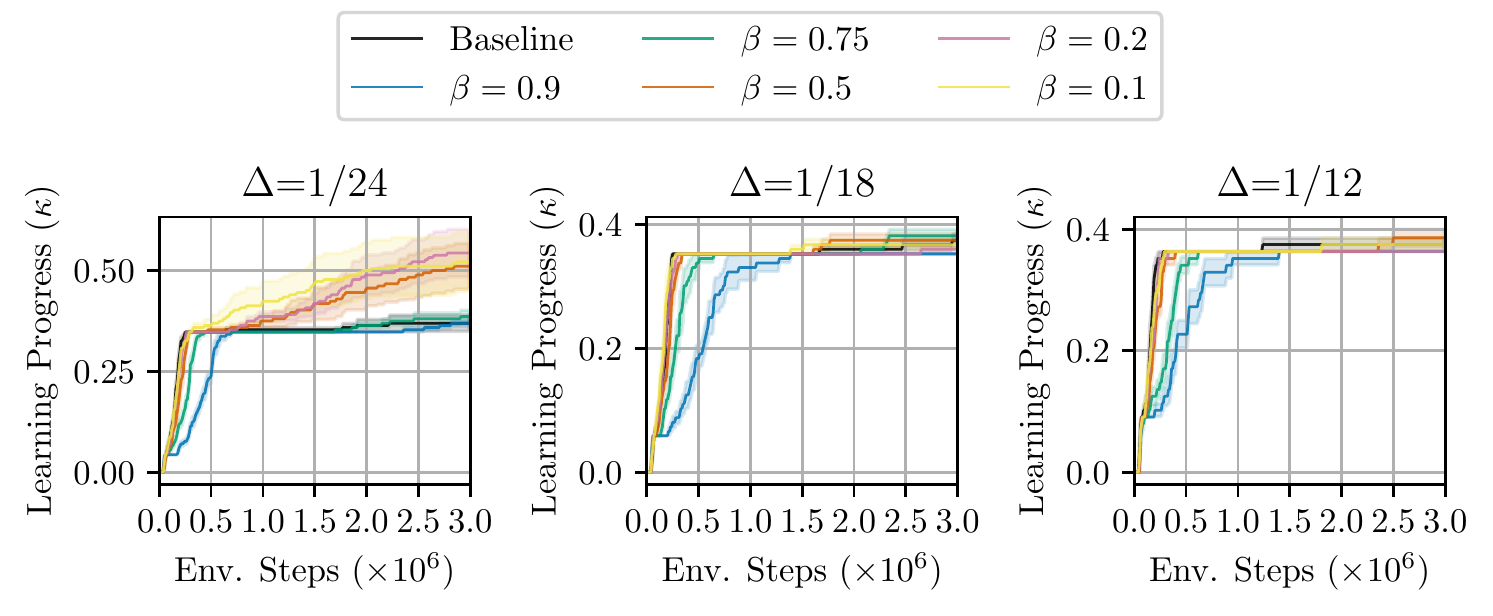}
        \label{fig:drl-go-expl-0.1-rollin-geo-false}
        \subcaption{\ours{} + Go-Explore without geometric sampling}
    \includegraphics[trim=0 0 0 0,clip,width=\textwidth]{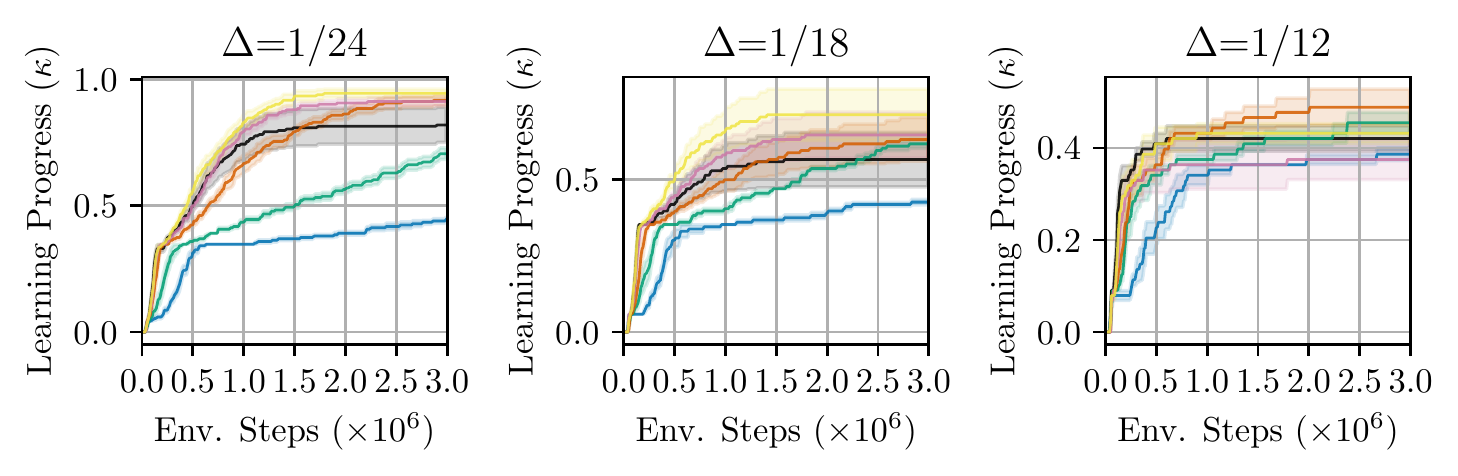}
        \label{fig:drl-go-expl-0.1-rollin-geo-true}
        \subcaption{\ours{} + Go-Explore with geometric sampling}
        \end{minipage}
    \caption{\textbf{Go-Explore (exploration noise = 0.1).} Accelerating learning on \texttt{antmaze-umaze} with \ours{} on an oracle curriculum in \cref{fig:antmaze-path-cshape}. The confidence interval represents the standard error computed over 8 random seeds.}
    \label{fig:drl-go-explore-0.1-rollin}
\end{figure}

\begin{figure}[H]
    \centering
    \begin{minipage}{0.95\linewidth}
    \centering\captionsetup[subfigure]{justification=centering}
    \includegraphics[width=\linewidth]{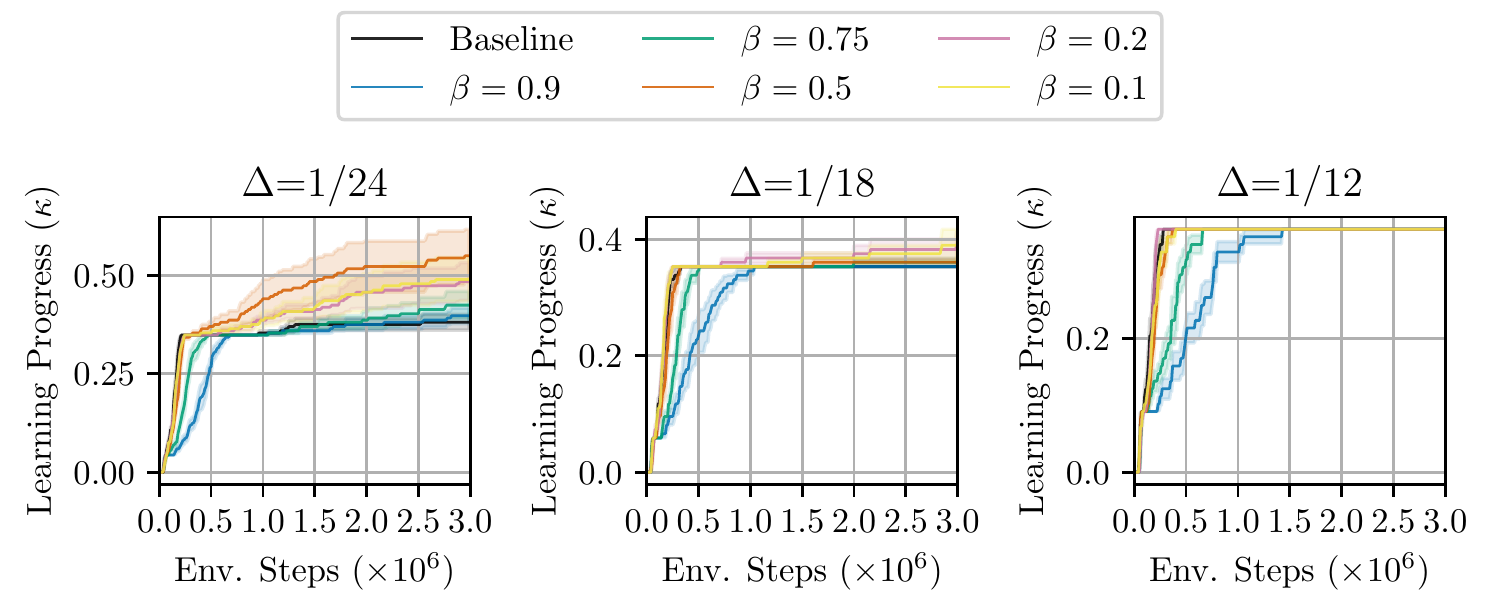}
        \label{fig:drl-go-expl-0.25-rollin-geo-false}
        \subcaption{\ours{} + Go-Explore without geometric sampling}
    \includegraphics[trim=0 0 0 0,clip,width=\textwidth]{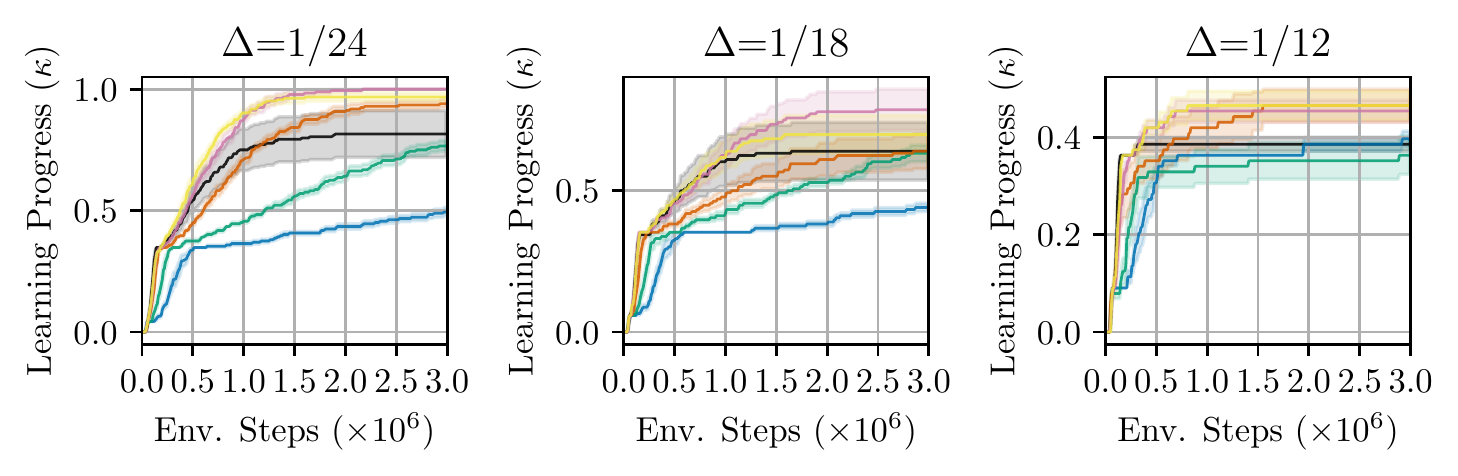}
        \label{fig:drl-go-expl-0.25-rollin-geo-true}
        \subcaption{\ours{} + Go-Explore with geometric sampling}
        \end{minipage}
    \caption{\textbf{Go-Explore (exploration noise = 0.25).} Accelerating learning on \texttt{antmaze-umaze} with \ours{} on an oracle curriculum in \cref{fig:antmaze-path-cshape}. The confidence interval represents the standard error computed over 8 random seeds.}
    \label{fig:drl-go-explore-0.25-rollin}
\end{figure}
\begin{figure}[H]
    \centering
    \begin{minipage}{0.95\linewidth}
    \centering\captionsetup[subfigure]{justification=centering}
    \includegraphics[width=\linewidth]{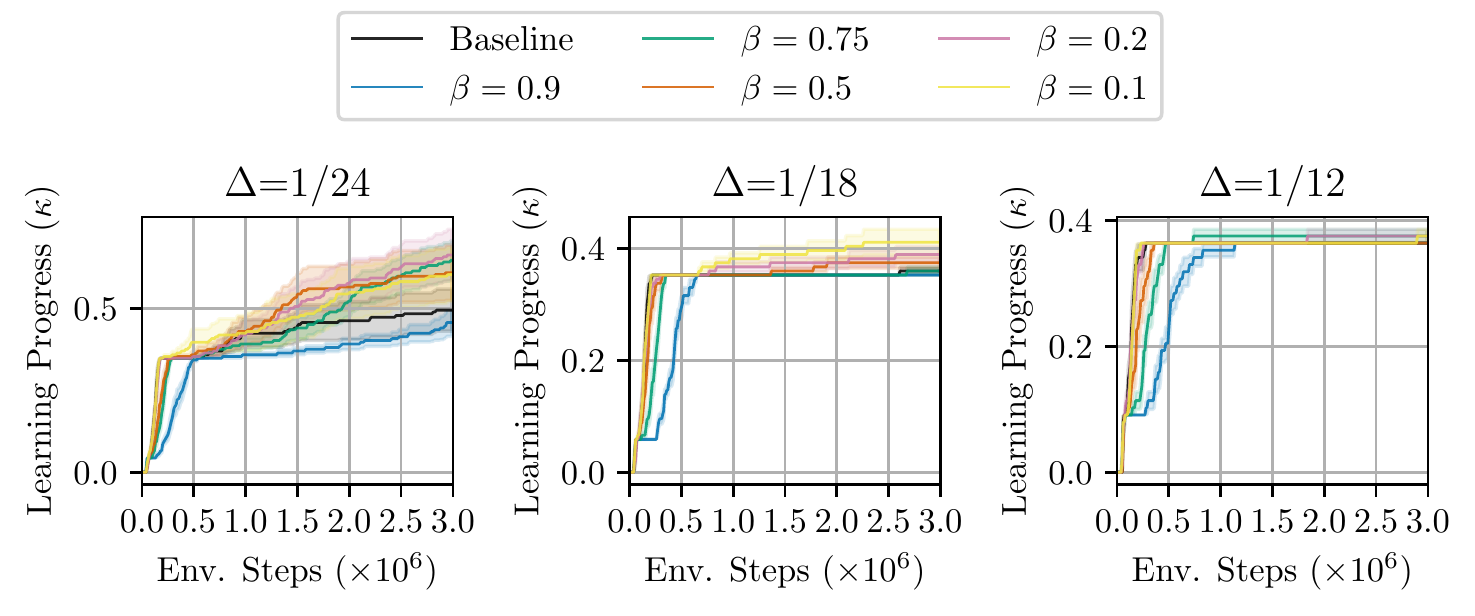}
        \label{fig:drl-go-expl-0.5-rollin-geo-false}
        \subcaption{\ours{} + Go-Explore without geometric sampling}
    \includegraphics[trim=0 0 0 0,clip,width=\textwidth]{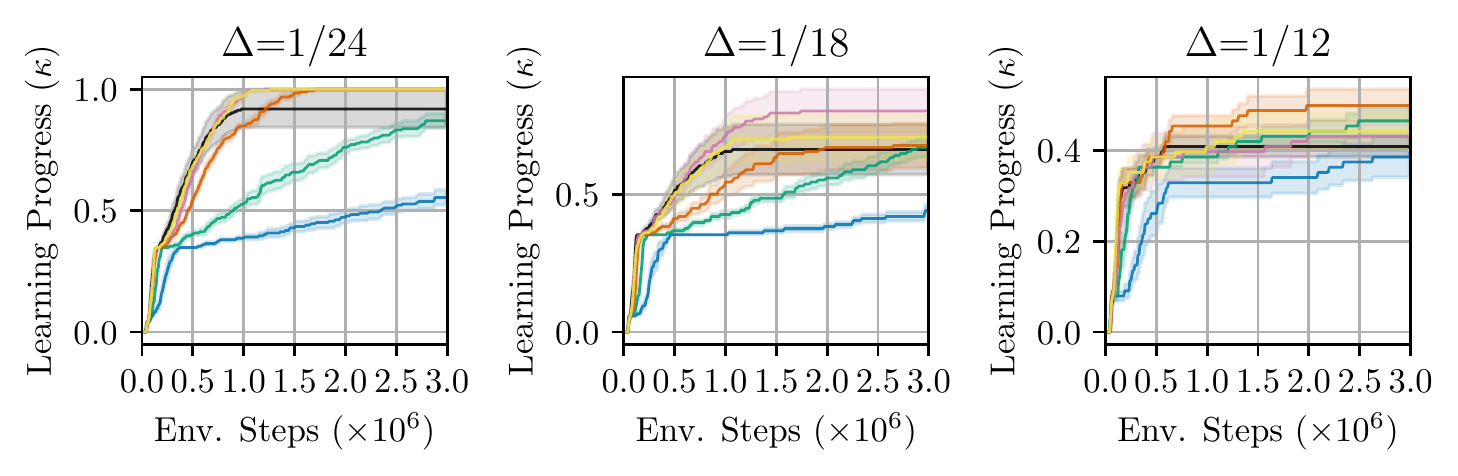}
        \label{fig:drl-go-expl-0.5-rollin-geo-true}
        \subcaption{\ours{} + Go-Explore with geometric sampling}
        \end{minipage}
    \caption{\textbf{Go-Explore (exploration noise = 0.5).} Accelerating learning on \texttt{antmaze-umaze} with \ours{} on an oracle curriculum in \cref{fig:antmaze-path-cshape}. The confidence interval represents the standard error computed over 8 random seeds.}
    \label{fig:drl-go-explore-0.5-rollin}
\end{figure}

\newpage
\begin{table}[H]
\centering
\begin{minipage}[b]{\linewidth}
\centering
\small
\begin{tabular}{cccccccc}
\toprule
\texttt{Geo}&$\Delta$& Baseline &$\beta=0.1$&$\beta=0.2$ & $\beta=0.5$ & $\beta=0.75$ & $\beta=0.9$ \\ \midrule
\xmark&$1/24$ & $0.40 \pm 0.02$ &{\color{purple}$0.49\pm0.04$} & {\color{purple} $0.51\pm0.05$} & {\color{purple}$\mb{0.57\pm0.04}$} &{\color{purple}$0.55\pm0.02$} & {\color{purple}$0.49\pm0.02$}\\
\xmark&$1/18$ & $0.36\pm0.01$ &{\color{purple}$0.39\pm0.01$}& {\color{purple}$0.46\pm0.01$} &{\color{purple}$\mb{0.54\pm0.03}$} & {\color{purple}$0.46\pm0.03$} & {\color{purple}$0.50\pm0.02$} \\ 
\xmark&$1/12$ & $0.36\pm0.00$ &{\color{purple}$0.44\pm0.01$}&{\color{purple}$0.41\pm0.02$} & {\color{purple}$0.47\pm0.02$} & {\color{purple}$\mb{0.56\pm0.05}$} & {\color{purple}$\mb{0.56\pm0.02}$}\\
\midrule
\cmark&$1/24$ & $0.82 \pm 0.08$ &{\color{purple}$0.92\pm0.02$} & {\color{purple}$\mb{0.95\pm0.02}$} & {\color{purple}$0.88\pm0.01$} &$0.81\pm0.01$ & $0.70\pm0.02$\\
\cmark&$1/18$ & $0.68\pm0.07$ &{\color{purple}$0.74\pm0.07$}& {\color{purple}$0.76\pm0.06$} &{\color{purple}$\mb{0.78\pm0.03}$} & {\color{purple}$0.75\pm0.02$} & {\color{purple}$0.72\pm0.02$} \\ 
\cmark&$1/12$ & $0.38\pm0.03$ &{\color{purple}$0.55\pm0.04$}&{\color{purple}$0.55\pm0.04$} & {\color{purple}$0.64\pm0.06$} & {\color{purple}$\mb{0.69\pm0.06}$} & {\color{purple}$0.67\pm0.03$}\\\bottomrule
\end{tabular}
\caption{\textbf{Vanilla Goal reaching.} Learning progress $\kappa$ at 3 million environment steps with varying $\beta$ and curriculum step size $\Delta$ of vanilla goal reaching task. \texttt{Geo} indicates the usage of geometric sampling. Baseline corresponds to $\beta = 0$, where no \ours{} is used. The standard error is computed over 8 random seeds. We highlight the values that are larger than the baseline ($\beta=0$) in {\color{purple} purple}, and the largest value in {\color{purple} \bf bold font}.}
\label{table:vanilla-antumaze}
\end{minipage}
\hfill
\end{table}

\begin{table}[H]
\centering
\begin{minipage}[b]{\linewidth}
\centering
\small
\begin{tabular}{cccccccc}
\toprule
\texttt{Geo}&$\Delta$&$\beta=0$ &$\beta=0.1$&$\beta=0.2$ & $\beta=0.5$ & $\beta=0.75$ & $\beta=0.9$ \\ \midrule
\xmark&$1/24$ & $0.89\pm 0.03$ &{\color{purple}$\mb{0.91\pm0.03}$} & $0.85\pm0.04$ & $0.86\pm0.02$ &$0.86\pm0.02$ & $0.80\pm0.02$\\
\xmark&$1/18$ & $0.76\pm0.03$ &{\color{purple}$0.81\pm0.01$}& {\color{purple}$\mb{0.85\pm0.04}$} &{\color{purple}$0.79\pm0.01$} & {\color{purple}$0.84\pm0.03$} & $0.68\pm0.04$ \\ 
\xmark&$1/12$ & $0.66\pm0.04$ &{\color{purple}$\mb{0.74\pm0.01}$}&{\color{purple}$0.73\pm0.03$} & $0.66\pm0.06$ & {\color{purple}$0.67\pm0.05$} & {\color{purple}$0.69\pm0.03$}\\
\midrule
\cmark&$1/24$ & $0.76 \pm 0.02$ &{\color{purple}$0.78\pm0.01$} & {\color{purple}$0.78\pm0.03$} & {\color{purple}$0.82\pm0.02$} &{\color{purple}$\mb{0.87\pm0.02}$} &{\color{purple} $0.83\pm0.02$}\\
\cmark&$1/18$ & $0.72\pm0.02$ &{\color{purple}$0.76\pm0.01$}& {\color{purple}$0.76\pm0.02$} &{\color{purple}$0.78\pm0.04$} & {\color{purple}$\mb{0.85\pm0.03}$} & {\color{purple}$0.81\pm0.01$} \\ 
\cmark&$1/12$ & $0.72\pm0.03$ &{\color{purple}$\mb{0.73\pm0.00}$}&{\color{purple}$\mb{0.73\pm0.00}$} & {\color{purple}$\mb{0.73\pm0.03}$} & $0.69\pm0.04$ & $0.66\pm0.04$\\\bottomrule
\end{tabular}
\caption{\textbf{Goal relabeling.} All other settings are the same as \cref{table:vanilla-antumaze}.}
\label{table:relabeling-antumaze}
\end{minipage}
\hfill
\end{table}

\begin{table}[H]
\centering
\begin{minipage}[b]{\linewidth}
\centering
\small
\begin{tabular}{cccccccc}
\toprule
\multicolumn{1}{c}{\texttt{EN}   \texttt{Geo}}&$\Delta$&$\beta=0$ &$\beta=0.1$&$\beta=0.2$ & $\beta=0.5$ & $\beta=0.75$ & $\beta=0.9$ \\ \midrule
0.1\quad\xmark&$1/24$ & $0.37\pm 0.02$ & {\color{purple}$0.52\pm 0.07$} & {\color{purple}$\mb{0.54\pm 0.06}$}& {\color{purple}$0.51\pm 0.06$} & {\color{purple}$0.39\pm 0.02$} & $0.37\pm 0.01$ \\
0.1\quad\xmark&$1/18$ & $0.38\pm0.01$ & $0.37\pm0.01$ & $0.36\pm0.01$ & $0.38\pm0.01$ & $0.38\pm0.01$ & $0.35\pm0.00$ \\ 
0.1\quad\xmark&$1/12$ & $0.38\pm0.01$ & $0.38\pm0.01$ & $0.36\pm0.00$ & {\color{purple}$\mb{0.39\pm0.01}$} & $0.36\pm0.00$ & $0.36\pm0.00$ \\ 
\midrule
0.1\quad\cmark&$1/24$ & $0.82\pm0.07$ &{\color{purple} $\mb{0.95\pm0.02}$} & {\color{purple} ${0.91\pm0.02}$} & {\color{purple} ${0.92\pm0.02}$} & {${0.71\pm0.02}$} & { ${0.45\pm0.01}$}\\
0.1\quad\cmark&$1/18$ & $0.57\pm0.09$ &{\color{purple} $\mb{0.71\pm0.08}$} & {\color{purple} ${0.65\pm0.07}$} & {\color{purple} ${0.63\pm0.07}$} & {\color{purple} ${0.62\pm0.02}$} & { ${0.43\pm0.01}$}\\ 
0.1\quad\cmark&$1/12$ & $0.42\pm0.03$ &{\color{purple} ${0.43\pm0.02}$} & {${0.38\pm0.04}$} & {\color{purple} $\mb{0.49\pm0.04}$} & {\color{purple} ${0.45\pm0.02}$} & {${0.39\pm0.01}$}\\
\midrule 0.25\quad\xmark& $1/24$ & {${0.38\pm0.02}$} & {\color{purple} ${0.49\pm0.06}$} & {\color{purple} ${0.48\pm0.05}$} & {\color{purple} $\mb{0.55\pm0.07}$} & {\color{purple} ${0.43\pm0.04}$} & {\color{purple} ${0.40\pm0.02}$} \\
0.25\quad\xmark&$1/18$ & {${0.35\pm0.00}$} & {\color{purple} $\mb{0.39\pm0.03}$} & {\color{purple} $\mb{0.39\pm0.02}$} & {\color{purple} ${0.36\pm0.01}$} & {\color{purple} ${0.36\pm0.01}$} & {${0.35\pm0.00}$}\\ 
0.25\quad\xmark&$1/12$ & {${0.36\pm0.00}$} & {${0.36\pm0.00}$} & {${0.36\pm0.00}$} & {${0.36\pm0.00}$} & {${0.36\pm0.00}$} & {${0.36\pm0.00}$} \\
\midrule
0.25\quad\cmark&$1/24$ & {${0.82\pm0.10}$} & {\color{purple} ${0.97\pm0.02}$} & {\color{purple} $\mb{1.00\pm0.00}$} & {\color{purple} ${0.94\pm0.02}$} & {${0.77\pm0.02}$} & {${0.49\pm0.02}$} \\
0.25\quad\cmark&$1/18$ & {${0.64\pm0.10}$} & {\color{purple} ${0.70\pm0.07}$} & {\color{purple} $\mb{0.79\pm0.07}$} & {\color{purple} ${0.64\pm0.06}$} & {${0.63\pm0.03}$} & {${0.44\pm0.01}$} \\ 
0.25\quad\cmark&$1/12$ & {${0.39\pm0.01}$} & {\color{purple} $\mb{0.47\pm0.03}$} & {\color{purple} ${0.45\pm0.02}$} & {\color{purple} $\mb{0.47\pm0.03}$} & {${0.36\pm0.04}$} & {\color{purple}${0.40\pm0.02}$} \\
\midrule
0.5\quad\xmark&$1/24$ & $0.49\pm0.06$ &  {\color{purple} ${0.60\pm0.08}$} & {\color{purple} $\mb{0.66\pm0.08}$} & {\color{purple} ${0.61\pm0.08}$} & {\color{purple} ${0.65\pm0.06}$} & {${0.46\pm0.04}$} \\
0.5\quad\xmark&$1/18$ & $0.36\pm0.01$ &  {\color{purple} $\mb{0.41\pm0.02}$} & {\color{purple} ${0.39\pm0.01}$} & {\color{purple} ${0.38\pm0.01}$} & {\color{purple} ${0.37\pm0.01}$} & {${0.35\pm0.00}$} \\ 
0.5\quad\xmark&$1/12$ & $0.36\pm0.00$ &  {\color{purple} $\mb{0.38\pm0.01}$} & {\color{purple} $\mb{0.38\pm0.01}$} & {${0.36\pm0.00}$} & {\color{purple} $\mb{0.38\pm0.01}$} & {${0.36\pm0.00}$} \\
\midrule
0.5\quad\cmark&$1/24$ & $0.92\pm0.08$ &  {\color{purple} $\mb{1.00\pm0.00}$} & {\color{purple} $\mb{1.00\pm0.00}$} & {\color{purple} $\mb{1.00\pm0.00}$} & {${0.87\pm0.03}$} & {${0.55\pm0.03}$} \\
0.5\quad\cmark&$1/18$ & $0.66\pm0.09$ &  {\color{purple} ${0.71\pm0.08}$} & {\color{purple} $\mb{0.80\pm0.08}$} & {\color{purple} ${0.68\pm0.08}$} & {\color{purple} ${0.67\pm0.04}$} & {${0.44\pm0.02}$} \\ 
0.5\quad\cmark&$1/12$ & $0.41\pm0.02$ &  {\color{purple} ${0.44\pm0.04}$} & {\color{purple} ${0.43\pm0.03}$} & {\color{purple} $\mb{0.50\pm0.04}$} & {\color{purple} ${0.47\pm0.03}$} & {${0.39\pm0.04}$} \\
\bottomrule
\end{tabular}
\caption{\textbf{Go-Explore with different exploration noise.} \texttt{EN} represents the multiplier for the Gaussian exploration noise. All other settings are the same as \cref{table:vanilla-antumaze}.}
\label{table:go-explore-antumaze}
\end{minipage}
\hfill 
\end{table}

\subsection{Non Goal Reaching Tasks}
\label{appendix:learning-curves-meta-learning}
\begin{figure}[H]
    \centering
    \begin{minipage}{0.85\linewidth}
    \centering\captionsetup[subfigure]{justification=centering}
    \includegraphics[width=\linewidth]{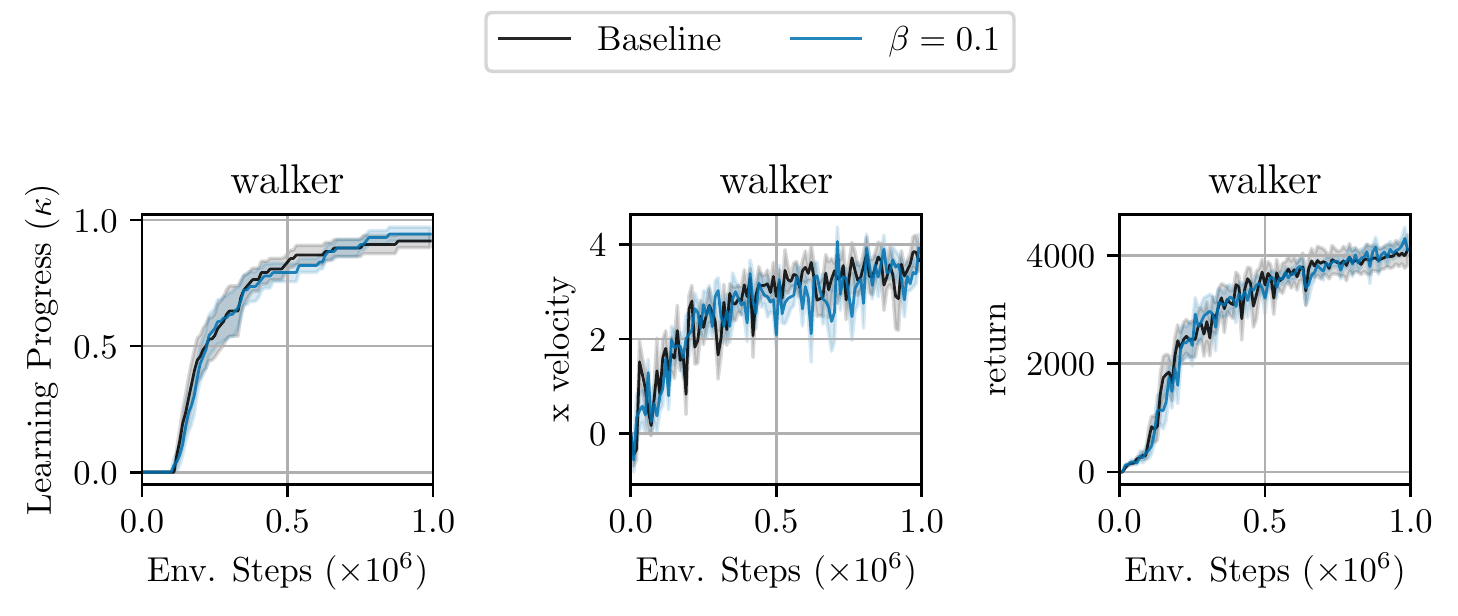}
    \includegraphics[trim=0 0 0 0,clip,width=\textwidth]{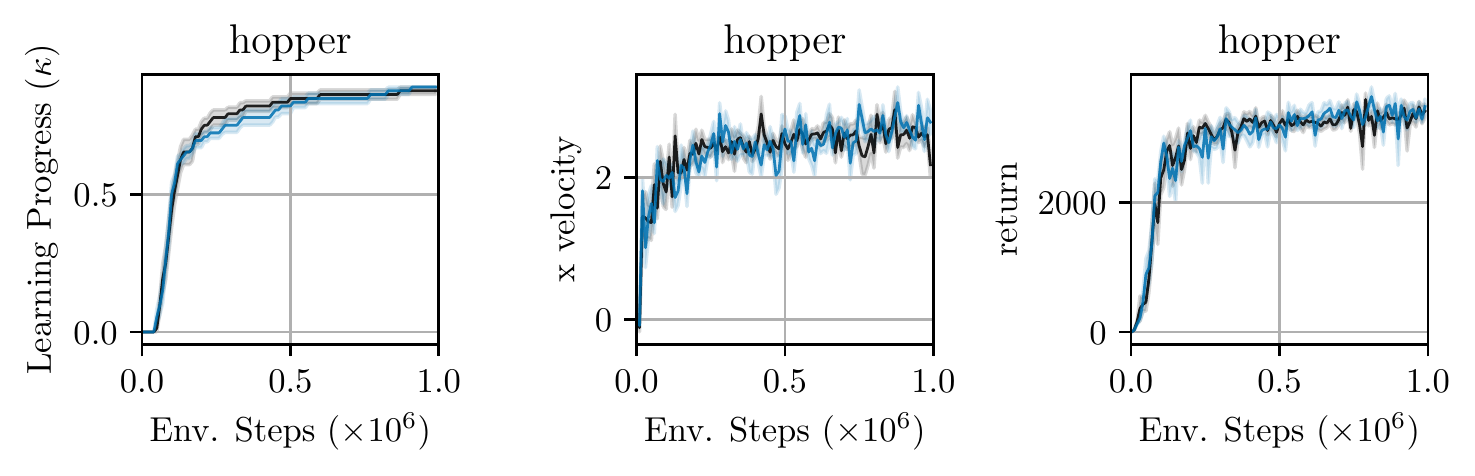}
    \includegraphics[trim=0 0 0 0,clip,width=\textwidth]{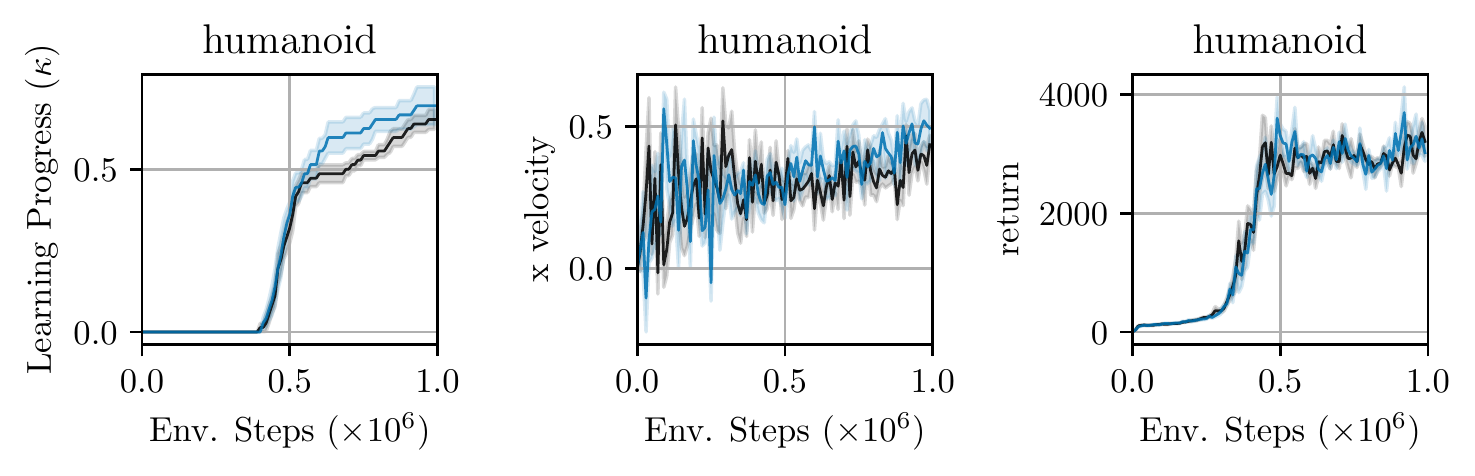}
    \includegraphics[trim=0 0 0 0,clip,width=\textwidth]{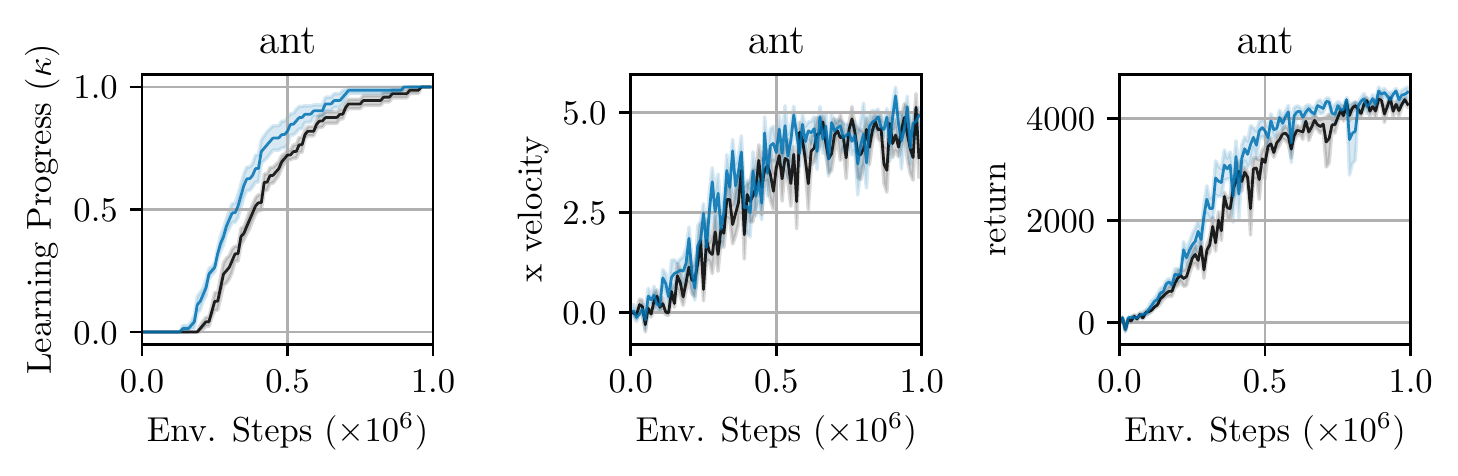}
    \end{minipage}
    \caption{\textbf{Accelerating learning on several non goal-reaching tasks.} The confidence interval represents the standard error computed over 8 random seeds, for $\beta=0.1$.}
    \label{fig:drl-meta-rollin}
\end{figure}

\begin{table}[H]
\centering
\begin{minipage}[b]{\linewidth}
\centering
\small
\begin{tabular}{ccccccc}
\toprule
\texttt{Env.}&Step&$\beta=0$ &$\beta=0.1$&$\beta=0.2$ & $\beta=0.5$ & $\beta=0.75$ \\ \midrule
\texttt{walker}& 0.5m& $0.83\pm 0.03$ &{$0.79\pm0.04$} & $0.75\pm0.04$ & $0.78\pm0.05$ &$0.76\pm0.05$\\
& 1m& $0.92\pm0.03$ &{\color{purple}$\mb{0.94\pm0.03}$}& {$0.90\pm0.01$} &{$0.92\pm0.04$} & {$0.92\pm0.03$} \\
\midrule\texttt{hopper}& 0.5m& $0.85\pm 0.02$ &{${0.82\pm0.03}$} & {$0.83\pm0.02$} & $0.78\pm0.02$ &$0.75\pm0.02$\\
& 1m& $0.88\pm0.01$ &{\color{purple}$\mb{0.89\pm0.00}$}& {\color{purple}$\mb{0.89\pm0.03}$} &{$0.82\pm0.02$} & {$0.81\pm0.02$} \\
\midrule
\texttt{humanoid}& 0.5m& $0.32 \pm 0.05$ &{\color{purple}$\mb{0.36\pm0.04}$} & {$0.21\pm0.07$} & {\color{purple}${0.33\pm0.04}$} &{${0.14\pm0.06}$}\\
& 1m & $0.67\pm0.03$ &{\color{purple}$0.69\pm0.06$}& {$0.62\pm0.02$} &{\color{purple}$\mb{0.76\pm0.03}$} & {\color{purple}${0.71\pm0.06}$} \\ \midrule
\texttt{ant}& 0.5m& $0.72 \pm 0.02$ &{\color{purple}$\mb{0.82\pm0.06}$} & {$0.68\pm0.08$} & {$0.64\pm0.05$} &{$0.47\pm0.05$}\\
& 1m & $1.00\pm0.00$ &{$1.00\pm0.00$}& {$0.83\pm0.08$} &{$0.86\pm0.06$} & {${0.71\pm0.07}$} \\ 
\bottomrule
\end{tabular}
\caption{\textbf{Learning progress $\kappa$ at 0.5 and 1.0 million environment steps with varying $\beta$ of non goal reaching tasks.} Baseline corresponds to $\beta = 0$, where no \ours{} is used. The standard error is computed over 8 random seeds. We highlight the values that are larger than the baseline ($\beta=0$) in {\color{purple} purple}, and the largest value in {\color{purple} \bf bold font}.}
\label{table:meta-learning-progress}
\end{minipage}
\hfill 
\end{table}

\begin{table}[H]
\centering
\begin{minipage}[b]{\linewidth}
\centering
\small
\begin{tabular}{ccccccc}
\toprule
\texttt{Env.}&Step&$\beta=0$ &$\beta=0.1$&$\beta=0.2$ & $\beta=0.5$ & $\beta=0.75$ \\ \midrule
\texttt{walker}& 0.5m& $3.09\pm 0.31$ &{${2.83\pm0.31}$} & {$2.41\pm0.33$} & $2.77\pm0.31$ &{ $2.88\pm0.32$}\\
& 1m& $3.69\pm0.27$ &{${3.62\pm0.26}$}& {$3.09\pm0.28$} &{$3.48\pm0.27$} & {$3.14\pm0.34$} \\
\midrule\texttt{hopper}& 0.5m& $2.42\pm 0.18$ &{${2.26\pm0.22}$} & {\color{purple}$\mb{2.45\pm0.14}$} & $2.34\pm0.16$ &{${2.34\pm0.16}$}\\
& 1m& $2.58\pm0.16$ &{\color{purple}$\mb{2.65\pm0.15}$}& {\color{purple}$\mb{2.65\pm0.17}$} &{$2.39\pm0.18$} & {$2.52\pm0.19$} \\
\midrule
\texttt{humanoid}& 0.5m& $0.26 \pm 0.05$ &{\color{purple}$0.32\pm0.07$} & {\color{purple}$0.27\pm0.05$} & {\color{purple}${0.34\pm0.05}$} &{\color{purple}$\mb{0.38\pm0.07}$}\\
& 1m & $0.39\pm0.05$ &{\color{purple}$0.46\pm0.09$}& {\color{purple}$0.41\pm0.05$} &{\color{purple}${0.41\pm0.06}$} & {\color{purple}$\mb{0.49\pm0.10}$} \\ \midrule
\texttt{ant}& 0.5m& $3.38 \pm 0.43$ &{\color{purple}$\mb{3.85\pm0.41}$} & {\color{purple} $3.43\pm0.53$} & {$3.15\pm0.45$} &{$2.38\pm0.46$}\\
& 1m & $4.29\pm0.51$ &{\color{purple}$\mb{4.66\pm0.30}$}& {$3.93\pm0.45$} &{$3.99\pm0.48$} & {${3.50\pm0.49}$} \\ 
\bottomrule
\end{tabular}
\caption{\textbf{Average $x$-direction velocity of the last 50k time steps, at 0.5 and 1.0 million environment steps with varying $\beta$ of non goal reaching tasks.} Baseline corresponds to $\beta = 0$, where no \ours{} is used. The standard error is computed over 8 random seeds. We highlight the values that are larger than the baseline ($\beta=0$) in {\color{purple} purple}, and the largest value in {\color{purple} \bf bold font}.}
\label{table:meta-learning-speed}
\end{minipage}
\hfill 
\end{table}

\begin{table}[H]
\centering
\begin{minipage}[b]{\linewidth}
\centering
\small
\begin{tabular}{ccccccc}
\toprule
\texttt{Env.}&Step&$\beta=0$ &$\beta=0.1$&$\beta=0.2$ & $\beta=0.5$ & $\beta=0.75$ \\ \midrule
\texttt{walker}& 0.5m& $3450.1\pm 307.4$ &{${3350.4\pm184.6}$} & $2897.4\pm276.5$ & $3255.9\pm203.8$ &$3185.8\pm341.5$\\
& 1m& $4032.3\pm224.3$ &{\color{purple}$\mb{4128.8\pm159.6}$}& {$3685.5\pm135.6$} &{$4028.8\pm164.2$} & {$3895.4\pm265.4$} \\
\midrule\texttt{hopper}& 0.5m& $3192.5\pm 80.4$ &{$ {3148.6\pm160.7}$} & {\color{purple}$\mb{3241.5\pm130.8}$} & $3116.5\pm141.8$ &$3059.6\pm153.8$\\
& 1m& $3386.2\pm124.7$ &{\color{purple}$\mb{3421.9\pm109.8}$}& {${3262.3\pm98.1}$} &{$3170.7\pm180.6$} & {\color{purple}$3394.5\pm126.5$} \\
\midrule
\texttt{humanoid}& 0.5m& $2910.1 \pm 262.9$ &{\color{purple}$2939.7\pm392.0$} & {$2598.9\pm309.8$} & {\color{purple}$\mb{3137.3\pm305.6}$} &{${2259.6\pm245.4}$}\\
& 1m & $3017.2\pm169.0$ &{\color{purple}$3173.6\pm238.3$}& {$2935.8\pm181.1$} &{${2905.5\pm125.9}$} & {\color{purple}$\mb{3290.7\pm275.9}$} \\ \midrule
\texttt{ant}& 0.5m& $2976.2 \pm 252.4$ &{\color{purple}$\mb{3593.1\pm237.8}$} & {\color{purple}$3071.8 \pm 340.0$} & {$2818.3 \pm 265.2$} &{$2188.3\pm 256.2$}\\
& 1m & $4248.5\pm88.6$ &{\color{purple}$\mb{4473.0\pm102.2}$}& {$3683.1\pm345.0$} &{$3708.7\pm290.5$} & {${3250.1\pm316.2}$} \\ 
\bottomrule
\end{tabular}
\caption{\textbf{Average return of the last 50k time steps, at the 0.5 and 1.0 million environment steps with varying $\beta$ of non goal reaching tasks.} Baseline corresponds to $\beta = 0$, where no \ours{} is used. The standard error is computed over 8 random seeds. We highlight the values that are larger than the baseline ($\beta=0$) in {\color{purple} purple}, and the largest value in {\color{purple} \bf bold font}.}
\label{table:meta-learning-return}
\end{minipage}
\hfill 
\end{table}


\end{document}